\definecolor{darkred}{RGB}{150,0,0}
\definecolor{darkgreen}{RGB}{0,150,0}
\definecolor{darkblue}{RGB}{0,0,200}
\newtheorem{theorem}{Theorem}
\newtheorem{assumption}{Assumption}
\newtheorem{lemma}{Lemma}
\newtheorem{definition}{Definition}
\newtheorem{remark}{Remark}
\numberwithin{equation}{section}
\def \endprf{\hfill {\vrule height6pt width6pt depth0pt}\medskip}
 \newenvironment{proof}{\noindent {\bf Proof.} }{\endprf\par}
\newcommand{\tsn}[1]{{\left\vert\kern-0.25ex\left\vert\kern-0.25ex\left\vert #1 
    \right\vert\kern-0.25ex\right\vert\kern-0.25ex\right\vert}}
\newcommand{\red}{\textcolor{red}}
\newcommand{\yl}[1]{\textcolor{black}{#1}}
\newcommand{\cln}[1]{\red{}}
\newenvironment{myitemize}{\begin{list}{$\bullet$}
		{\setlength{\topsep}{1mm}
			\setlength{\itemsep}{0.25mm}
			\setlength{\parsep}{0.25mm}
			\setlength{\itemindent}{0mm}
			\setlength{\partopsep}{0mm}
			\setlength{\labelwidth}{15mm}
			\setlength{\leftmargin}{4mm}}}{\end{list}}
\newcommand{\pmt}{{\p}}
\newcommand{\TF}{{\texttt{TF}}\xspace}
\newcommand{\TFLR}{{\texttt{TF}_{\texttt{LR}}}\xspace}
\newcommand{\TFBE}{{\texttt{TF}_{\texttt{BE}}}}
\newcommand{\eps}{\varepsilon}
\newcommand{\st}{\star}
\newcommand{\xtest}{\x_{\text{\tiny test}}}
\newcommand{\beq}{\begin{equation}}
\newcommand{\ba}{\begin{align}}
\newcommand{\ea}{\end{align}}
\newcommand{\eeq}{\end{equation}}
\newcommand{\nn}{\nonumber}
\newcommand{\Sb}{{{\mtx{S}}}}
\newcommand{\Lc}{{\cal{L}}}
\newcommand{\Nc}{{\cal{N}}}
\newcommand{\Dc}{{\cal{D}}}
\newcommand{\Tb}{{\mtx{T}}}
\newcommand{\one}{{\bm{1}}}
\newcommand{\Iden}{{\mtx{I}}}
\newcommand{\M}{{\mtx{M}}}
\newcommand{\z}{{\vct{z}}}
\newcommand{\tht}{\hat{\tb}}
\newcommand{\tbt}{\tb_{\text{\tiny test}}}
\newcommand{\tn}[1]{\|{#1}\|}
\newcommand{\bt}{{\boldsymbol{\theta}}}
\newcommand{\bbeta}{{\boldsymbol{\beta}}}
\newcommand{\vb}{\vct{v}}
\newcommand{\w}{\vct{w}}
\newcommand{\s}{\vct{s}}
\newcommand{\Fc}{\mathcal{F}}
\newcommand{\Xc}{\mathcal{X}}
\newcommand{\Yc}{\mathcal{Y}}
\newcommand{\m}{\vct{m}}
\newcommand{\abs}[1]{\left|#1\right|}
\newcommand{\x}{\vct{x}}
\newcommand{\y}{\vct{y}}
\newcommand{\W}{\mtx{W}}
\definecolor{emmanuel}{RGB}{255,127,0}
\newcommand{\p}{{\vct{p}}}
\newcommand{\pb}{{\vct{p}}}
\newcommand{\pbf}{{\vct{p}}^{\text{filter}}}
\newcommand{\R}{\mathbb{R}}
\newcommand{\E}{\operatorname{\mathbb{E}}}
\newcommand{\e}{\mathrm{e}}
\newcommand{\tb}{\vct{s}}
\newcommand{\vct}[1]{\bm{#1}}
\newcommand{\mtx}[1]{\bm{#1}}
\newcommand{\bPi}{\boldsymbol{\Pi}}
\newcommand{\X}{{\mtx{X}}}
\newcommand{\attn}{\textit{attn}}
\newcommand{\sfm}{\textit{softmax}}
\newcommand{\COT}{\text{CoT-I/O}\xspace}
\newcommand{\EXP}{\text{CoT-I}\xspace}
\newcommand{\ICL}{\text{ICL}\xspace}
\newcommand{\cellcolg}{\cellcolor{green!20}}
\newcommand{\cellcolr}{\cellcolor{red!20}}
\def\vb{{\bm{b}}}
\def\vv{{\bm{v}}}
\def\vx{{\bm{x}}}
\def\mA{{\bm{A}}}
\def\mW{{\bm{W}}}
\def\1{\bm{1}}
\def\eps{{\epsilon}}
\def\rvr{{\mathbf{r}}}
\def\rmA{{\mathbf{A}}}
\def\rmB{{\mathbf{B}}}
\def\rmK{{\mathbf{K}}}
\def\rmM{{\mathbf{M}}}
\def\rmQ{{\mathbf{Q}}}
\def\rmV{{\mathbf{V}}}
\def\vZero{{\bm{0}}}
\def\vb{{\bm{b}}}
\def\vv{{\bm{v}}}
\def\vx{{\bm{x}}}
\def\mA{{\bm{A}}}
\def\mW{{\bm{W}}}
\DeclareMathAlphabet{\mathsfit}{\encodingdefault}{\sfdefault}{m}{sl}
\SetMathAlphabet{\mathsfit}{bold}{\encodingdefault}{\sfdefault}{bx}{n}
\newcommand{\seqlen}{\text{N}}
\newcommand{\weights}{\mW}
\def\vZero{{\bm{0}}}
\newcommand{\zero}{\vZero}
\newcommand{\val}{\rmV}
\newcommand{\key}{\rmK}
\newcommand{\query}{\rmQ}
\newcommand{\emb}{\rvr}
\newcommand{\ie}{{\it i.e.}, }
\title{Dissecting Chain-of-Thought: A Study on Compositional In-Context Learning of MLPs}
\title{Dissecting Chain-of-Thought: Compositionality through In-Context Filtering and Learning}
\author{%
  Yingcong Li \\
  University of California, Riverside\\
  \texttt{yli692@ucr.edu} \\
  \And
  Kartik Sreenivasan\quad\quad Angeliki Giannou  \\
  University of Wisconsin-Madison\\
  \texttt{\{ksreenivasa2,giannou\}@wisc.edu} \\
  \And
  Dimitris Papailiopoulos \\
  University of Wisconsin-Madison\\
  \texttt{dimitris@papail.io} \\
  \And
  Samet Oymak \\
  University of Michigan \& UC Riverside\\
  \texttt{oymak@umich.edu} \\
}
\begin{document}
\doparttoc 
\faketableofcontents 

\maketitle

\begin{abstract}
Chain-of-thought (CoT) is a method that enables language models to handle complex reasoning tasks by decomposing them into simpler steps. Despite its success, the underlying mechanics of CoT are not yet fully understood. In an attempt to shed light on this, our study investigates the impact of CoT on the ability of transformers to in-context learn a simple to study, yet general family of compositional functions: multi-layer perceptrons (MLPs).
In this setting, we find that the success of CoT can be attributed to breaking down in-context learning of a compositional function into two distinct phases: focusing on \yl{and filtering} data related to each step of the composition and in-context learning the single-step composition function.
Through both experimental and theoretical evidence, we demonstrate how CoT significantly reduces the sample complexity of in-context learning (ICL) and facilitates the learning of complex functions that non-CoT methods 
struggle with. \yl{Furthermore, we illustrate how transformers can transition from vanilla in-context learning to mastering a compositional function with CoT by simply incorporating additional layers that perform the necessary data-filtering for CoT via the attention mechanism.} In addition to these test-time benefits, we \yl{show CoT helps accelerate} pretraining by learning shortcuts to represent complex functions and \yl{filtering plays an important role in this process}.
These findings collectively provide insights into the mechanics of CoT, inviting further investigation of its role in complex reasoning tasks.
\end{abstract}


\section{Introduction}
The advent of transformers~\citep{vaswani2017attention} has revolutionized natural language processing, paving the way for remarkable performance in a wide array of tasks. LLMs, such as~GPTs~\citep{brown2020language}, have demonstrated an unparalleled ability to capture and leverage vast amounts of data, thereby facilitating near human-level performance across a variety of language generation tasks. Despite this success, a deep understanding of their underlying mechanisms remains elusive.

Chain-of-thought prompting \citep{wei2022chain} is an emergent ability of transformers where the model solves a complex problem~\citep{wei2022emergent}, by decomposing it into intermediate steps. Intuitively, this underlies the ability of general-purpose language models to accomplish previously-unseen complex tasks by leveraging more basic skills acquired during the pretraining phase. Compositional learning and CoT has enjoyed significant recent success in practical language modeling tasks spanning question answering, code generation, and mathematical reasoning \citep{perez2021automatic, imani2023mathprompter, yuan2023well}. In this work, we attempt to demystify some of the mechanics underlying this success and the benefits of CoT in terms of sample complexity and approximation power. To do this we explore the role of CoT in learning multi-layer perceptrons (MLPs) in-context, which we believe can lead to a first set of insightful observations. Throughout, we ask:

\begin{center}
\emph{Does CoT improve in-context learning of MLPs, and what are the underlying mechanics?}
\end{center}
%
%
In this work, we identify and thoroughly compare three schemes as illustrated in Figure~\ref{fig:intro}. (a) ICL: In-context learning from input-output pairs provided in the prompt, (b) CoT-I: Examples in the prompt are augmented with intermediate steps, (c) CoT-I/O: The model also outputs intermediate steps during prediction. Our main contributions are:

\input{fig_sec/fig_intro}

\begin{myitemize}
\item \textbf{Decomposing CoT into filtering and ICL:} As our central contribution, we establish a rigorous and experimentally-supported abstraction that decouples CoT prompting into a \emph{filtering phase} and an \emph{in-context learning (ICL) phase}. In \emph{filtering}, the model attends to the relevant tokens within the prompt based on an instruction. In \emph{ICL}, the model runs inference on the filtered prompt to output a \emph{step} and then moves onto the next \emph{step} in the chain. 
\yl{Our Theorem~\ref{thm:MLP} develops a theoretical understanding of this two-step procedure and formalizes how filtering and ICL phases of CoT can be implemented via the self-attention mechanism to learn MLPs.
}
\item \textbf{Approximation and sample complexity:} Through experiments and theory, we establish that intermediate steps in CoT-I improves the sample complexity of learning whereas step-by-step output improves the approximation ability through looping. Specifically, CoT-I/O can learn an MLP with input dimension $d$ and $k$ neurons using ${\cal{O}}(\max(k,d))$ in-context samples by filtering individual layers and solving them via linear regression -- in contrast to the $\Omega(kd)$ lower bound without step-augmented prompt. As predicted by our theory, our experiments (\yl{see Sec.~\ref{sec:exp}}) identify a striking universality phenomenon (as $k$ varies) and also demonstrate clear approximation benefits of CoT compared to vanilla ICL.

\item \textbf{Accelerated pretraining via learning shortcuts:} We construct deep linear MLPs where each layer is chosen from a discrete set of matrices. This is in contrast to the above setting, where MLP weights can be arbitrary. We show that CoT can dramatically accelerate pretraining by memorizing these discrete matrices and can infer all layers correctly from a \emph{single} demonstration. Notably the pretraining loss goes to zero step-by-step where each step \emph{``learns to filter a layer''}. Together, these showcase how CoT identifies composable shortcuts to avoid the need for solving linear regression. In contrast, we show that ICL (without CoT) collapses to linear regression performance as it fails to memorize exponentially many candidates (due to lack of composition).




\end{myitemize}
The paper is organized as follows. In Section \ref{sec prelim}, we introduce the problem setup. Section \ref{sec:theory} states our main theoretical results which decouple CoT into filtering and ICL. Section \ref{sec:exp} provides empirical investigations of CoT with 2-layer MLPs, which validates our theoretical findings.
Finally, we elucidate the benefits of CoT during pretraining via experiments on deep linear MLPs in Section \ref{sec:deep nn}. Related work and discussion  are provided in Sections~\ref{sec:related} and \ref{sec:discuss}, respectively.


\section{Preliminaries and Setup}\label{sec prelim}

\emph{Notation.} We denote the set $\{1,2,\ldots,n\}$ as $[n]$. Vectors and matrices are represented in bold text (e.g., $\vx, \mA$), while scalars are denoted in plain text (e.g., $y$). 
The input and output domains are symbolized as $\Xc$ and $\Yc$ respectively (unless specified otherwise), and $\x\in\Xc$, $\y\in\Yc$ denote the input and output. 
\yl{Additionally, let $\Fc$ be a set of functions from $\Xc$ to $\Yc$. Consider a transition function $f \in \Fc$ where $\y = f(\x)$. In this section, we explore the properties of learning $f$, assuming that it can be decomposed into simpler functions $(f_\ell)_{\ell=1}^L$, and thus can be expressed as $f = f_L \circ f_{L-1} \circ \dots \circ f_1$.}


\subsection{In-context Learning}\label{subsec icl}
Following the study by \citet{garg2022can}, the fundamental problem of vanilla in-context learning (ICL) involves constructing a prompt with input-output pairs in the following manner: 
\begin{align}
\pmt_n(f)=(\x_i,\y_i)_{i=1}^n~~~~\text{where}~~~~\y_i=f(\x_i).\tag{P-ICL}\label{icl pmt}
\end{align}
Here the transition function $f\in\Fc:\Xc\to\Yc$ remains constant within a single prompt but can vary across prompts, and the subscript $n$ signifies the number of in-context samples contained in the prompt. Considering language translation as an example, $f$ is identified as the target language, and the prompt can be defined as  $\pmt(\texttt{Spanish})$ = ((\emph{apple, manzana}), (\emph{ball, pelota}), $\ldots$) or $\pmt(\texttt{French})$=((\emph{cat, chat}), (\emph{flower, fleur}), $\ldots$). Let $\TF$ denote any auto-regressive model (e.g., Decoder-only Transformer). The aim of in-context learning is to learn a model that can accurately predict the output, given a prompt $\pmt$ and the test input $\xtest$, as shown in the following equation:
\begin{equation}\label{icl out}
\TF(\pmt_n(\tilde f),\xtest) \approx \tilde f(\xtest)
\end{equation}
where $\tilde f\in\Fc$ is the test function which may differ from the functions used during training. Previous work \citep{zhou2022teaching, li2023transformers} has demonstrated that longer prompts (containing more examples $n$) typically enhance the performance of the model.

\subsection{Chain-of-thought Prompt and Prediction}\label{subsec cot}

{As defined in \eqref{icl pmt}, the prompt in vanilla ICL only contains input-output pairs of the target function. This} 
demands that the model learns the function $f\in\Fc$ in one go, which becomes more challenging as $\Fc$ grows more complex, since larger models and increased prompt length ($n$) are needed to make correct predictions (\yl{as depicted by the green curves in Figures~\ref{fig:diff gpt} and \ref{fig:diff k}}). {Existing studies on chain-of-thought methods (e.g., \citep{wei2022chain}) observed that prompts containing step-by-step instructions assist }
the model in decomposing the function and making better predictions. 
Specifically, consider a function composed of $L$ subfunctions, represented as $f:=f_L\circ f_{L-1}\circ\dots f_1$. Each intermediate output 
{can be viewed as a step, enabling us to define a }
length-$n$ CoT prompt related to $f$ with $L$ steps (expressed with $\tb^\ell,\ell\in[L]$) as follows:
\begin{align}
\pmt_n(f)=(\x_i,\tb_i^1,\cdots\tb_i^{L-1},\tb_i^L)_{i=1}^n~~~~\text{where}~~~~\tb_i^{\ell}=f_\ell(\tb_i^{\ell-1}),~\ell\in[L].\tag{P-CoT}\label{cot pmt}
\end{align}
Here $\x_i=\tb_i^0$, $\y_i=\tb_i^L$ {and $f_\ell\in\Fc_\ell$, which implies that $f\in\Fc_L\times\cdots\Fc_1:=\Fc$.}


Next we introduce two methodologies for making predictions within the CoT framework:

\textbf{CoT over input only ($\EXP$). }Contrasted with ICL, $\EXP$ considers step-by-step instructions as inputs, 
nonetheless, the prediction for the last token is performed as a single entity. 
 Our experiments indicate that this approach lowers the sample complexity for $\TF$ to comprehend the function $\tilde f$ being learned (\yl{see the orange curves in Figures~\ref{fig:diff gpt} and \ref{fig:diff k}}). The $\EXP$ prediction aligns with Eq. \eqref{icl out} as follows, while the prompt is determined by \eqref{cot pmt}.
    \begin{align}
    &\text{One-shot prediction: }\TF(\pmt_n(\tilde f),\xtest)\approx \tilde  f(\xtest).\label{exp out}
    \end{align}
\textbf{CoT over both input and output ($\COT$). } Despite the fact that $\EXP$  improves the sample complexity of learning $\tilde f$,  the $\TF$ must still possess the capacity to approximate functions from the function class $\Fc$, given that the prediction is made in one shot. To mitigate this challenge, we consider a scenario where in addition to implementing a CoT prompt, we also carry out CoT predictions. Specifically, for a composed problem with inputs formed via \eqref{cot pmt}, the model recurrently makes $L$-step predictions as outlined below:
    \begin{align}
    &\text{Step 1: }\TF(\pmt_n(\tilde f),\xtest):=\hat\tb^1\nn\\
    &\text{Step 2: }\TF(\pmt_n(\tilde f),\xtest,\hat\tb^1):=\hat \tb^2\nn\\
    &~~~~\vdots\nn\\
    &\text{Setp $L$: }\TF(\pmt_n(\tilde f),\xtest,\hat\tb^1\cdots,\hat\tb^{L-1})\approx \tilde 
 f(\xtest),\label{cot out}
    \end{align}
where at each step (step $\ell$), the model outputs an intermediate step ($\hat\tb^\ell$) which is then fed back to the input sequence 
 to facilitate the next-step prediction ($\hat\tb^{\ell+1}$). 
Following this strategy, the model only needs to learn the union of the sub-function sets, $\bigcup_{\ell=1}^L\Fc_\ell$, whose complexity scales linearly with the number of steps $L$. Empirical evidence of the benefits of $\COT$ over $\ICL$ and $\EXP$ in enhancing sample efficiency and model expressivity is reflected in the \yl{blue curves shown in Figures~\ref{fig:diff gpt} and \ref{fig:diff k}}.

\subsection{Model Training}\label{sec:train}
In Figure~\ref{fig:intro} and Section~\ref{sec prelim}, we have discussed vanilla $\ICL$, $\EXP$ and $\COT$ methods. 
Intuitively, $\ICL$ can be viewed as a special case of $\EXP$ (or $\COT$) if we assume only one step is performed. Consequently, we will focus on implementing  $\EXP$ and $\COT$ for model training in the following. 

Consider the CoT prompt as in  
\eqref{cot pmt}, and assume that $\x\sim\Dc_{\Xc}$,  
and $f_\ell\sim\Dc_{\ell},\ell\in[L]$, where $L$ denotes the number of compositions/steps, such that the final prediction should approximate $f(\x)=f_L(f_{L-1}\dots f_1(\x)):=\y\in\Yc$. We define $\ell(\hat\y,\y):\Yc\times\Yc\to\R$ as a loss function. For simplicity, we assume $f_\ell(\dots f_1(\x))\in\Yc$, $\ell\in[L]$. Let $N$ represent the in-context window of $\TF$, which implies that $\TF$ can only admit a prompt containing up to $N$ in-context samples. Generally, our goal is to ensure high prediction performance given any length-$n$ prompt, where $n\in[N]$. To this end, we train the model using prompts with length from $1$ to $N$ equally and aim to minimize the averaged risk over different prompt size. 
Assuming the model $\TF$ is parameterized by $\bt$ and considering meta learning problem, the objective functions for $\EXP$ and $\COT$ are defined as follows. 


\begin{align*}
\hat\bt^{\EXP}=\arg\min_{\bt}\E_{(\x_n)_{n=1}^N,(f_\ell)_{\ell=1}^L}\left[\frac{1}{N}\sum_{n=1}^N\ell(\hat\y_n,f(\x_n))\right]
\end{align*}
and
\begin{align*}
\hat\bt^{\COT}=\arg\min_{\bt}\E_{(\x_n)_{n=1}^N,(f_\ell)_{\ell=1}^L}\left[\frac{1}{NL}\sum_{n=1}^N\sum_{\ell=1}^L\ell(\hat\tb_n^\ell,\tb_{n}^\ell)\right]
\end{align*}

where $\hat\y_n=\TF(\pmt_n(f),\x_n)$ and $\hat\tb_n^\ell=\TF(\pmt_n(f),\x_n\cdots\tb_n^{\ell-1})$. $\pmt_n(f)$ is given by \eqref{cot pmt}, and as mentioned previously,  $\tb^0=\x$ and $\tb^L=\y$. All $\x$ and $f_\ell$ are independent, and we take the expectation of the risk over their respective distributions.



\input{fig_sec/fig_filter_v2}
\section{Provable Approximation of MLPs via Chain-of-Thought}\label{sec:theory}
In this section, we present our theoretical findings that demonstrate how $\COT$ can execute filtering over the CoT prompt, thereby learning a 2-layer MLP with input dimension of $d$ and hidden dimension of $k$, akin to resolving $k$ $d$-dimensional ReLU problems and $1$ $k$-dimensional linear regression problem. \yl{Subsequently, in Section~\ref{sec:emp}}, we examine the performance of $\COT$ when learning 2-layer random MLPs. Our experiments indicate that $\COT$ needs only $O(\max(d,k))$ in-context samples to learn the corresponding MLP. 

We state our main contribution of
establishing a result that decouples CoT-based in-context learning ($\COT$) into two phases: (1) \emph{Filtering Phase:} Given a prompt that contains features of multiple MLP layers, retrieve only the features related to a target layer to create an ICL prompt. (2) \emph{ICL Phase:} Given filtered prompt, learn the target layer weights through gradient descent. Combining these two phases, and looping over all layers, we will show that there exists a transformer architecture such that $\COT$ can provably approximate a multilayer MLP up to a given resolution. \yl{An illustration is provided in Figure~\ref{fig:filter}.} To state our result, we assume access to an oracle that performs linear regression and consider the consider the condition number of the data matrix.
\begin{definition}[MLP and condition number]\label{MLP assump} Consider a multilayer MLP defined by the recursion $\tb_i^{\ell}=\phi(\W_\ell \tb_i^{\ell-1})$ for $\ell\in[L]$, $i\in[n]$ and $\tb_i^0=\x_i$. Here $\phi(x):=\max(\alpha x,x)$ is a Leaky-ReLU activation with $1\geq \alpha>0$. Define the feature matrix $\Tb_\ell=[\tb_1^\ell~\dots~\tb_n^\ell]^\top$ and  
define its condition number $\kappa_\ell=\sigma_{\max}(\Tb_\ell)/\sigma_{\min}(\Tb_\ell)$ (with $\sigma_{\min}:=0$ for fat matrices) and $\kappa_{\max}=\max_{0\leq\ell< L}\kappa_\ell$.
\end{definition}
\begin{assumption}[Oracle Model]\label{assume oracle} We assume access to a transformer $\TFLR$ which can run $T$ steps of gradient descent on the quadratic loss $\Lc(\w)=\sum_{i=1}^n(y_i-\w^\top \x_i)^2$ given a prompt of the form $(\x_1,y_1,\dots, \x_n,y_n)$.
\end{assumption}
We remark that this assumption is realistic and has been formally established by earlier work \citep{giannou2023looped,akyurek2022learning}. Our CoT abstraction builds on these to demonstrate that $\COT$ can call a blackbox \TF model to implement a compositional function when combined with filtering.

\yl{We now present our main theoretical contribution. Our result provides a transformer construction that first filters a particular MLP layer from the prompt through the attention mechanism, then applies in-context learning, and repeats this procedure to approximate the MLP output.} The precise statement is deferred to the supplementary material. 
\begin{theorem}[\yl{CoT$\Leftrightarrow$Filtering+ICL}]\label{thm:MLP} Consider a CoT prompt $\pmt_n(f)$ generated from an $L$-layer MLP $f(\cdot)$ as described in Definition \ref{MLP assump}, and assume given test example $(\xtest,\tb^1_{\text{\tiny test}}, \dots\tb^L_{\text{\tiny test}})$. 
For any resolution $\eps>0$, there exists $\delta=\delta(\eps)$, iteration choice $T={\cal{O}}(\kappa_{\max}^2\log(1/\eps))$, and a backend transformer construction $\TFBE$ such that the concatenated transformer $\TF=\TFLR \circ \TFBE$ implements the following: Let $(\tht^{i})_{i=0}^{\ell-1}$ denote the first $\ell-1$ $\COT$ outputs of $\TF$ where $\tht^{0}=\xtest$ and set $\pb[\ell]=(\pb_n(f),\xtest,\tht^{1}\dots\tht^{\ell-1})$. At step $\ell$, $\TF$ implements
\begin{enumerate}
\item \textbf{Filtering.} Define the filtered prompt with input/output features of layer $\ell$,
\[
\arraycolsep = 1pt
\pbf_n=\left(\begin{array}{*{10}c}
\hdots \zero,&\tb^{\ell-1}_{1},&\zero&\hdots&\zero,&\tb^{\ell-1}_n, &\zero&\hdots&\zero,&\tht^{\ell-1}\\
\hdots\zero,&\zero,& \tb^{\ell}_{1}&\dots&\zero,&\zero,& \tb^{\ell}_n&\dots&\zero,&\zero\end{array}\right).
\]
There exists a fixed projection matrix $\bPi$ that applies individually on tokens such that the backend output obeys $\tn{\bPi(\TFBE(\pb[\ell]))-\pbf_n}\leq \delta$. 
\item \textbf{Gradient descent.} The combined model obeys $\tn{\TF(\pb[\ell])-\tb_{\text{\tiny test}}^{\ell}}\leq {\ell\cdot \eps}/{L}$.
\end{enumerate}
$\TFBE$ has constant number of layers independent of $n$ and $T$. Consequently, after $L$ rounds of $\COT$, \TF outputs $f(\xtest)$ up to $\eps$ accuracy.
\end{theorem}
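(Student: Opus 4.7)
The plan is to build the composite transformer $\TF = \TFLR \circ \TFBE$ by constructing $\TFBE$ so that it prepares the prompt for $\TFLR$ one MLP layer at a time. At each CoT round $\ell$, $\TFBE$ will (i) \textbf{filter} the input so that only the tokens carrying $(\tb^{\ell-1}_i,\tb^\ell_i)$ contribute to the output, laying them out in the two-row pattern of $\pbf_n$, and (ii) implement the elementwise preprocessing needed so that the pair becomes a \emph{linear} prediction problem $\tb^{\ell-1}\mapsto \W_\ell\tb^{\ell-1}$ that the gradient-descent oracle of Assumption~\ref{assume oracle} can solve. Since the Leaky-ReLU inverse $\phi^{-1}(x)=\max(x,\,x/\alpha)$ is itself a scaled Leaky-ReLU, this preprocessing, together with the final post-composition with $\phi$ used to recover $\tb^\ell_{\text{\tiny test}}$, can be implemented by the feed-forward sublayers of a constant-depth transformer.

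For the filtering step, I would attach positional/type embeddings tagging each prompt token with the layer index $0,\dots,L$ it belongs to and a counter indicating the current CoT round $\ell$ (read off from the number of appended $\tht^i$ tokens). A single attention head with a sufficiently large inverse-temperature softmax then routes a token to zero unless its tag matches $\ell-1$ or $\ell$, while a linear projection writes the matched coordinates into the correct of the two active rows, with the residual stream carrying $\xtest$ and $\tht^{\ell-1}$ to the last column. Choosing the inverse temperature $\beta=\beta(\delta)$ large enough makes the softmax-to-hardmax approximation error bounded by $\delta$ per token, and a fixed final projection $\bPi$ strips the auxiliary positional coordinates, giving the first conclusion $\tn{\bPi(\TFBE(\pb[\ell]))-\pbf_n}\le\delta$.

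Having obtained a prompt $\delta$-close to the exact linear-regression prompt for $\W_\ell$, I invoke Assumption~\ref{assume oracle}: $T$ steps of gradient descent on the quadratic loss converge geometrically at a rate controlled by the condition number of the design matrix $\Tb_{\ell-1}$. Taking $T={\cal O}(\kappa_{\max}^2\log(1/\eps))$ drives the optimization gap below $\eps/L$, and Lipschitz continuity of both the oracle map in its prompt and of $\phi$ (whose Lipschitz constant is $1$ since $\alpha\in(0,1]$) converts the filtering error $\delta$ into an additive prediction error that can be forced below $\eps/L$ by taking $\delta$ a sufficiently small function of $\eps,L,\kappa_{\max}$.

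The main technical obstacle is the \emph{propagation} of per-layer errors across the $L$ CoT rounds: at round $\ell$ the input $\tht^{\ell-1}$ already carries compounded errors from rounds $1,\dots,\ell-1$, yet the oracle's guarantee is stated relative to the \emph{clean} design $\Tb_{\ell-1}$ whose conditioning is $\kappa_{\max}$. The inductive argument I would run maintains the hypothesis $\tn{\tht^{\ell-1}-\tb_{\text{\tiny test}}^{\ell-1}}\le(\ell-1)\eps/L$; at each step the test-input perturbation is amplified by at most the local Lipschitz constant of $\phi\circ\W_\ell$ times a $\kappa_{\max}$-dependent factor, while the filtering/regression contribution adds another $\eps/L$. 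Calibrating $\delta(\eps)\sim\eps/(L\kappa_{\max})$ and $T={\cal O}(\kappa_{\max}^2\log(1/\eps))$ closes the induction and delivers the stated bound $\tn{\TF(\pb[\ell])-\tb^\ell_{\text{\tiny test}}}\le\ell\eps/L$, and in particular $\eps$-accuracy of $f(\xtest)$ after the $L$-th CoT round.
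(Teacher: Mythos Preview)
Your proposal is essentially the paper's approach: filter to isolate the layer-$\ell$ pairs, apply $\phi^{-1}$ to reduce to linear regression, invoke the gradient-descent oracle of Assumption~\ref{assume oracle} with $T={\cal O}(\kappa_{\max}^2\log(1/\eps))$, reapply $\phi$, and close an induction on $\ell$ to get the $\ell\cdot\eps/L$ bound.

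Two small points worth flagging. First, a slip: for $\alpha\in(0,1]$ the Leaky-ReLU inverse is $\phi^{-1}(x)=\min(x,x/\alpha)$, not $\max$; this does not affect your argument since the only thing you use is that $\phi^{-1}$ is piecewise linear and hence implementable by a ReLU layer, which the paper writes explicitly as $(x)_+-\tfrac{1}{\alpha}(-x)_+$. Second, your filtering sketch (``a single attention head with large inverse temperature routes a token to zero unless its tag matches'') is more compressed than what the paper actually builds: softmax attention returns convex combinations and cannot itself zero out tokens, so in the paper's $7$-layer $\TFBE$ the attention layers are used to extract the sequence length $N$ and broadcast the current step $\ell$ across positions, after which ReLU layers create binary masks by comparing each token's layer tag to $\ell$ and $\ell+1$ and then gate the data rows via $x_i+(-Cb_i-x_i)_+-(-Cb_i+x_i)_+$. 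Your reading of $\ell$ ``from the number of appended $\tht^i$ tokens'' is exactly the right idea, but it costs several layers rather than one head.
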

{\begin{remark}
Note that, this result effectively shows that, with a sufficiently good blackbox transformer $\TFLR$ (per Assumption \ref{assume oracle}), $\COT$ can learn an $L$-layer MLP using in-context sample size $n>\max_{\ell\in[L]}d_\ell$ where $d_\ell$ is the input dimension of $\ell$th layer. This is assuming condition number $\kappa_{\max}$ of the problem is finite as soon as all layers become over-determined. Consequently, $\COT$ needs $\max(k,d)$ sample complexity to learn a two layer MLP. This provides a formal justification for the observation that empirical $\COT$ performance is agnostic to $k$ as long as $k\leq d$.
\end{remark}}
We provide the concrete filtering statements \yl{based on the transformer architecture} in Appendix~\ref{ss:construction}, and the key components of our construction are the following: (i) Inputs are projected through the embedding layer in which a set of encodings, an enumeration of the tokens ($1,2,\dots, N$), an enumeration of the layers ($1,2,\dots,L$) and an identifier for each layer already predicted are all attached. Notice that this ``modification'' to the input only depends on the sequence length and is agnostic to the token to-be-predicted. This allows for an automated looping over $L$ predictions. (ii) We use this information to extract the sequence length $N$ and the current layer $\ell$ to-be-predicted. (iii) With these at hand, we construct an `if-then' type of function using the ReLU layers to filter out the samples that are not needed for the prediction.

\begin{figure}[!t]
\vspace{-20pt}
\centering
\subfigure[$\COT$: $d=10~vs~20$]{
    \includegraphics[height=.25\columnwidth]{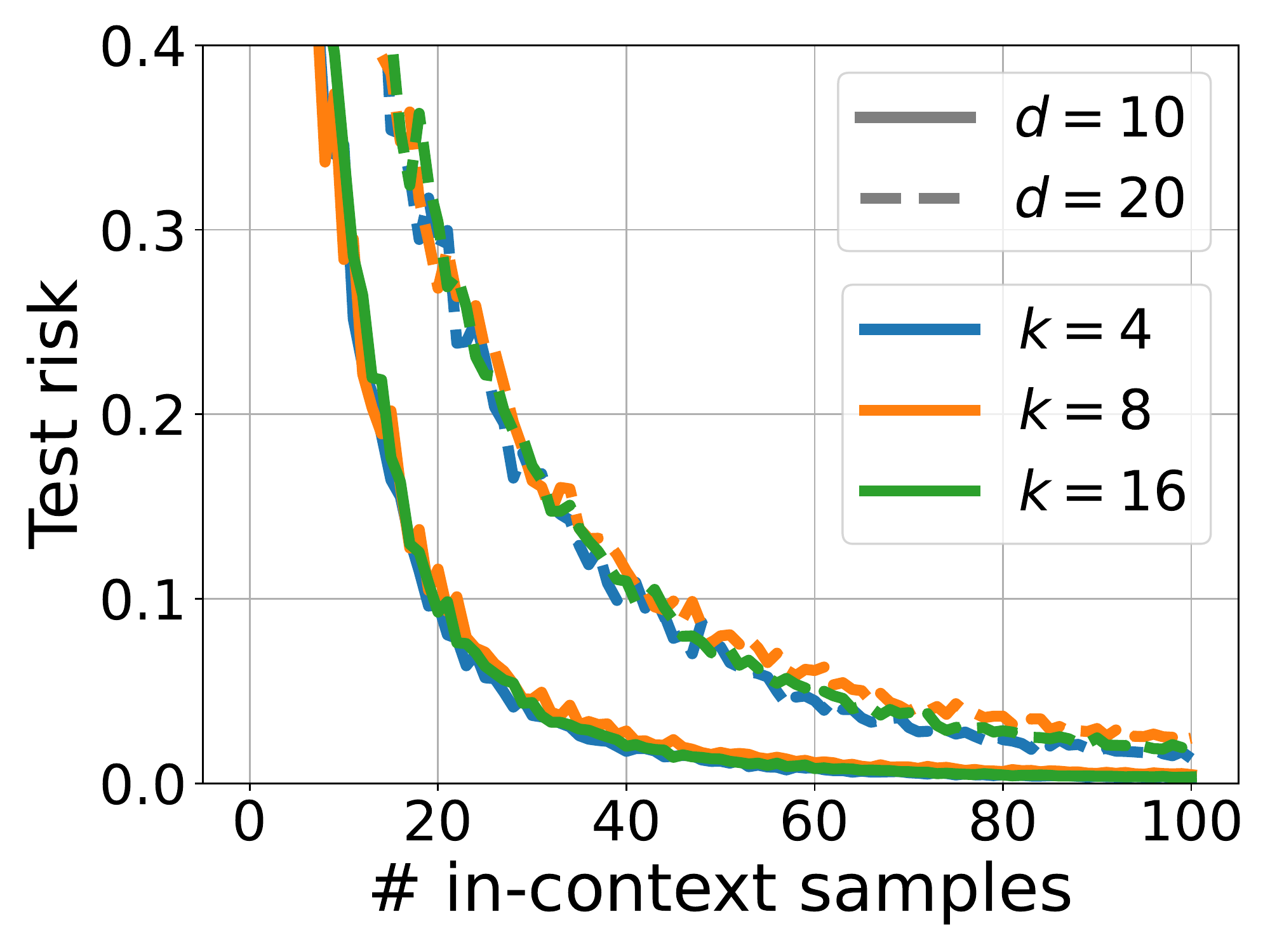}
    \label{fig:cot_d10_vs_d20}
}
\subfigure[Alignment in Fig.~\ref{fig:cot_d10_vs_d20}]{
    \includegraphics[height=.25\columnwidth]{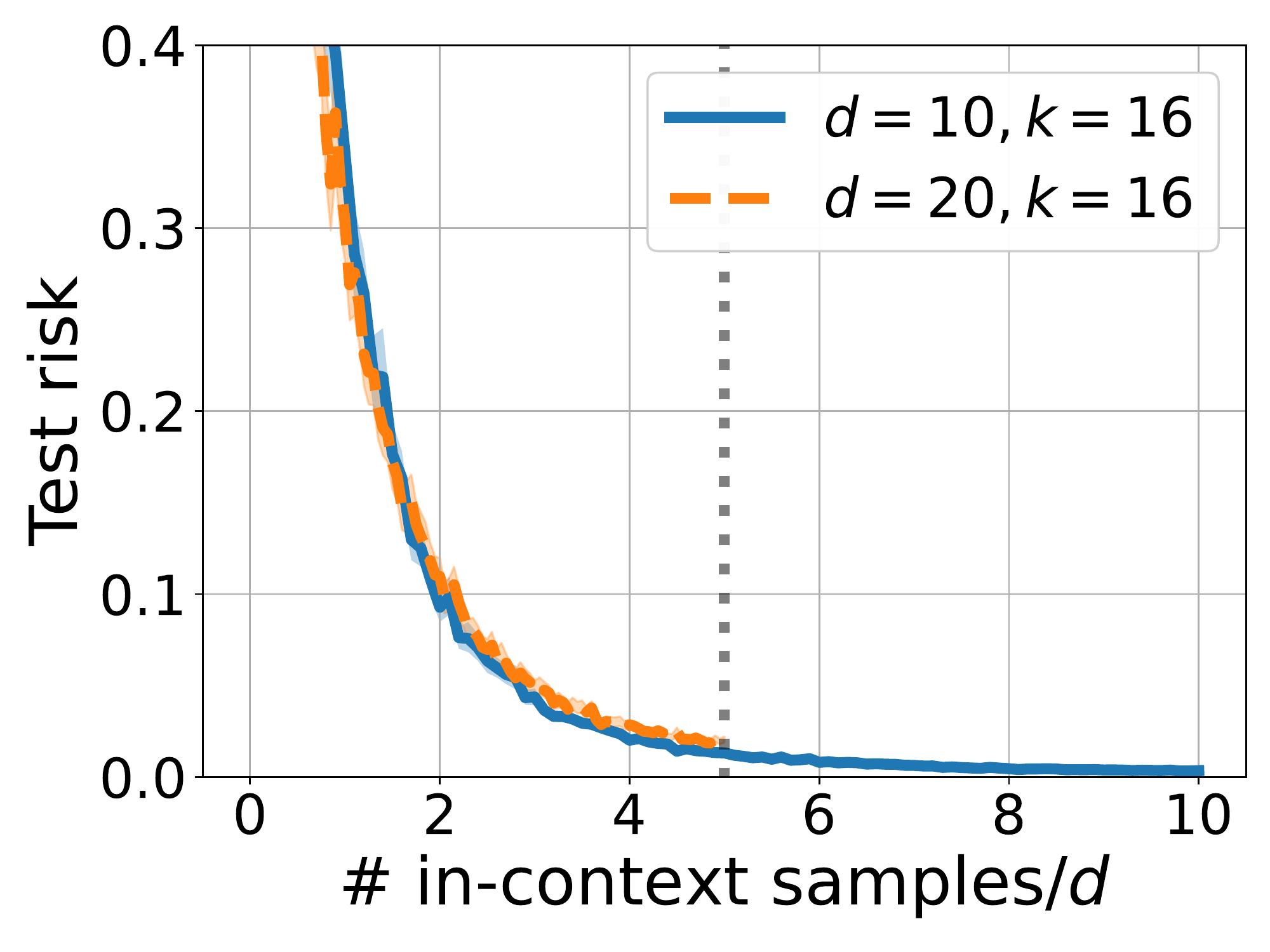}
    \label{fig:cot_d10_vs_d20_align}
}
\vspace{-2mm}
\caption{Solving 2-layer MLPs with varying input dimension $d$ and hidden neuron size $k$. }\label{fig:smallk}

\centering
\subfigure[$\COT$: composed risk ($d=10$)]{
    \includegraphics[height=.25\columnwidth]{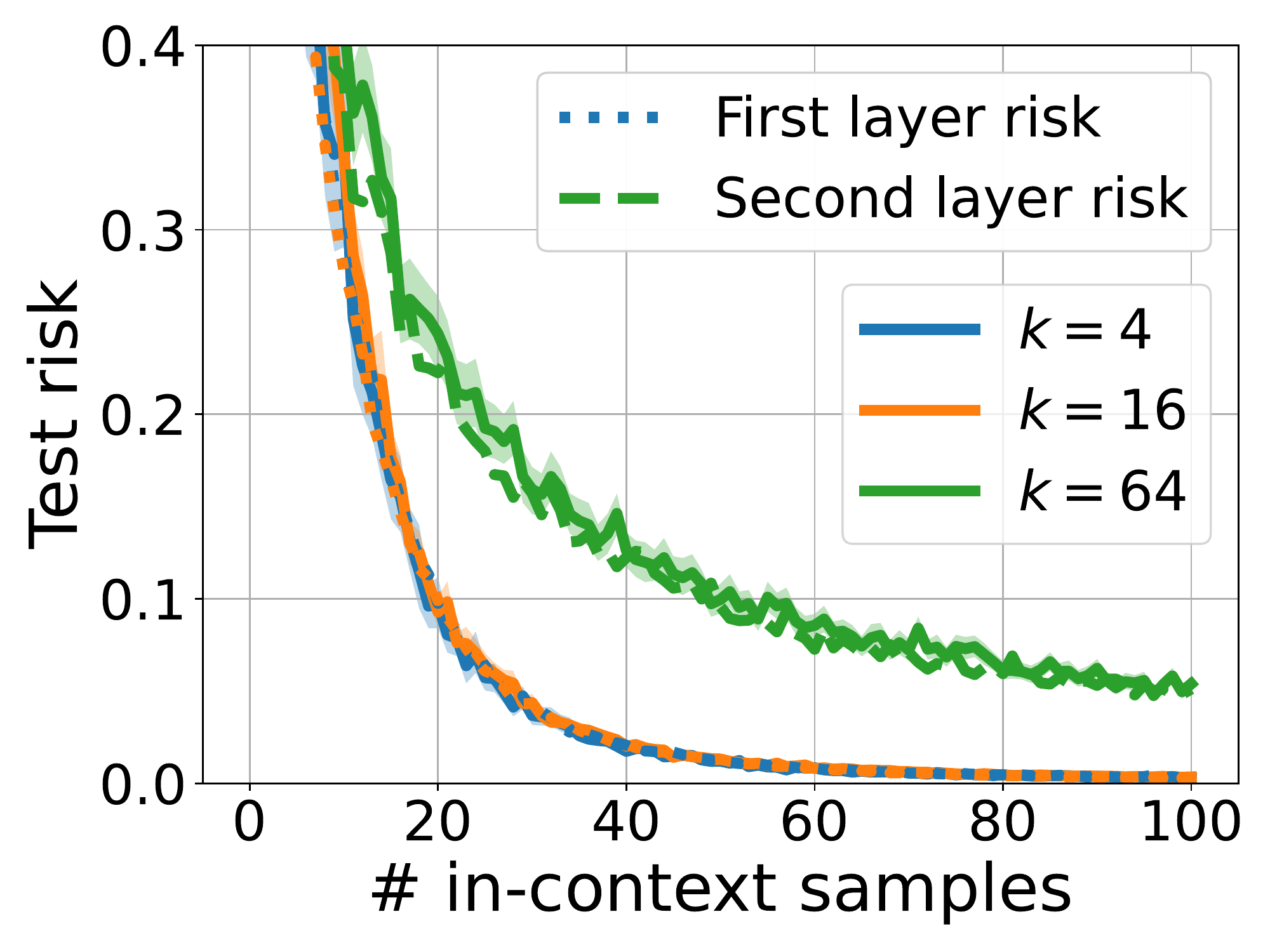}
    \label{fig:largek compose}
    \hspace{-1mm}
}
\subfigure[Risk of first layer]{
    \includegraphics[height=.25\columnwidth,trim={1.5cm 0 0 0},clip]{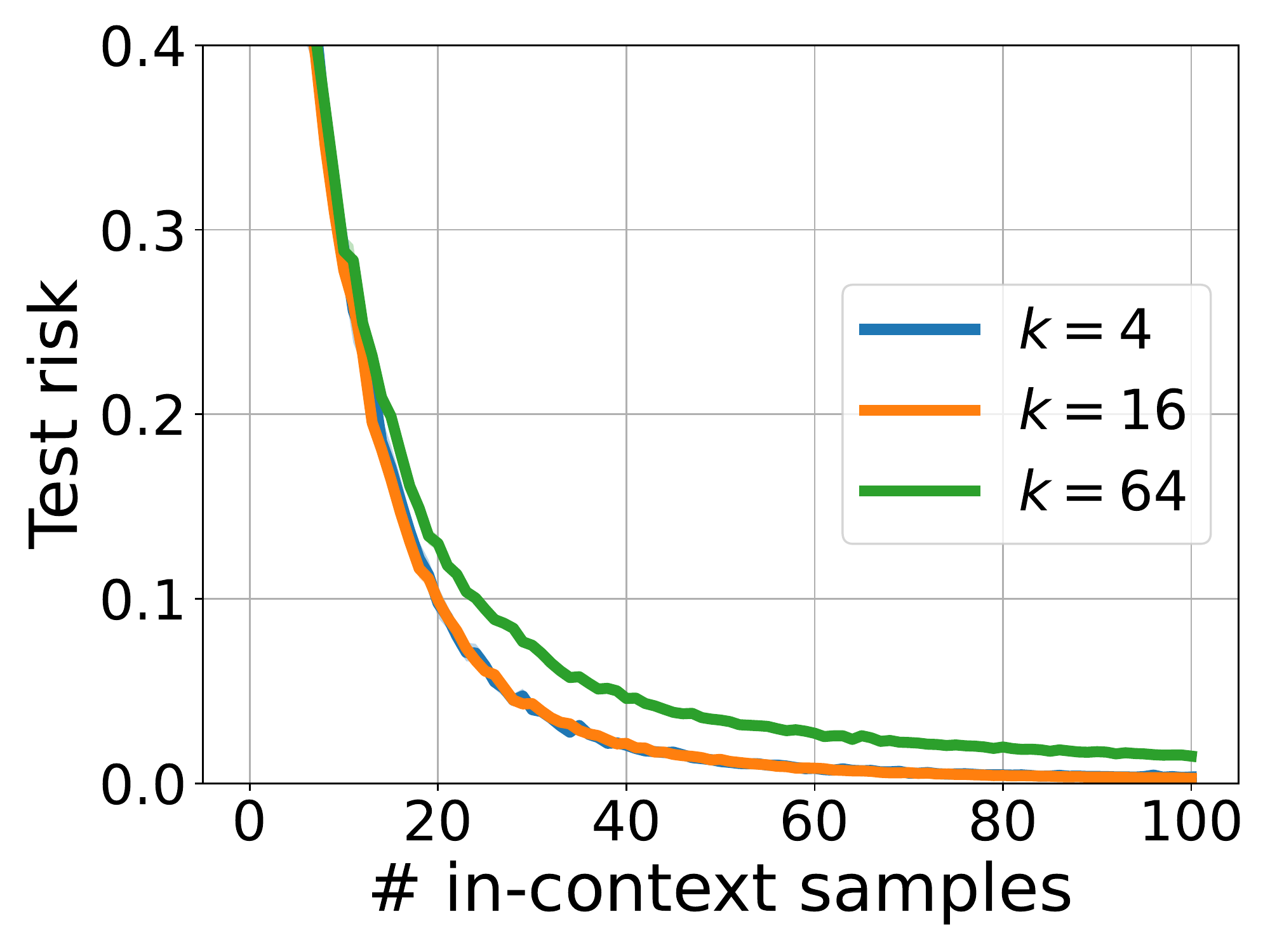}
    \label{fig:largek first}
    \hspace{-1mm}
}
\subfigure[Risk of second layer]{
    \includegraphics[height=.25\columnwidth,trim={1.5cm 0 0 0},clip]{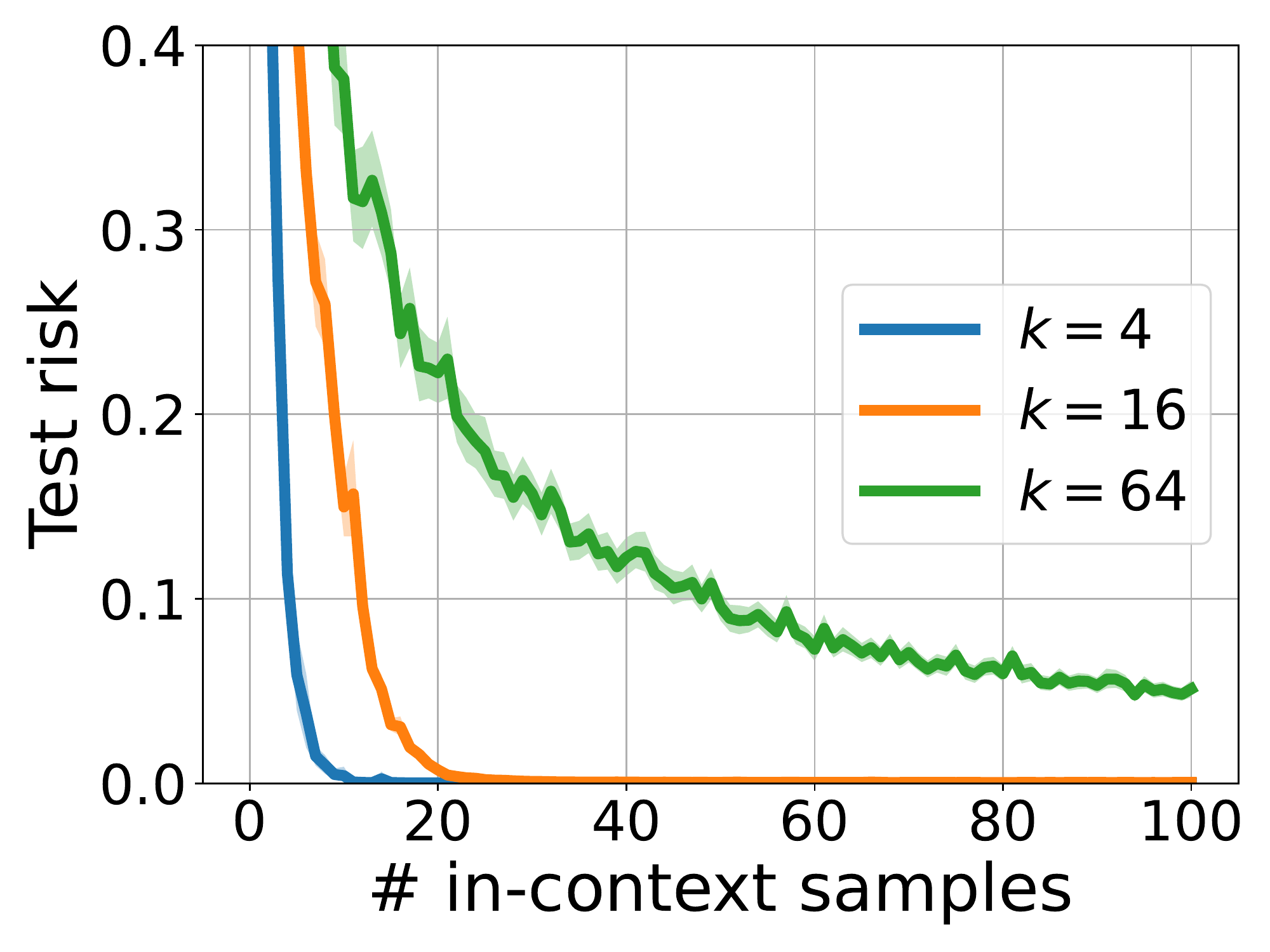}
    \label{fig:largek second}
    \hspace{-1mm}
}
\vspace{-2mm}
\caption{We decouple the composed risk of predicting 2-layer MLPs into risks of individual layers. }
\label{fig:largek}
\vspace{-10pt}
\end{figure}
\section{\yl{Experiments with 2-layer Random MLPs}}\label{sec:exp}
\yl{For a clear exposition, we initially focus on two-layer MLPs, which represent 2-step tasks (e.g., $L=2$). We begin by validating Theorem~\ref{thm:MLP} using the $\COT$ method, demonstrating that in-context learning for a 2-layer MLP with $d$ input dimensions and $k$ hidden neurons requires $O(\max(d,k))$ samples. The results are presented in Section~\ref{sec:emp}. Subsequently, in Section~\ref{sec:2nn}, we compare three different methods: $\ICL$, $\EXP$, and $\COT$. The empirical evidence highlights the advantages of $\COT$, showcasing its ability to reduce sample complexity and enhance model expressivity.  }


\textbf{Dataset.} Consider 2-layer MLPs with input $\x\in\R^d$, hidden feature (step-1 output) $\tb\in\R^k$, and output $y\in\R$. Here,  $\tb=f_1(\x):=(\W\x)_+$ and $y=f_2(\tb):=\vv^\top\tb$, with $\W\in\R^{k\times d}$ and $\vv\in\R^{k}$ being the parameters of the first and second layer, and $(x)_+=\max(x,0)$ being {ReLU} activation. The function is composed as $y=\vv^\top(\W\x)_+$. We define the function distributions as follows: each entry of $\W$ is sampled via $\W_{ij}\sim\Nc(0,\frac{2}{k})$, and $\vv\sim\Nc(0,\Iden_k)$, with inputs being randomly sampled through $\x\sim\Nc(0,\Iden_d)$\footnote{Following this strategy for data generation, the expected norms of $\x$, $\tb$ and $y$ are equivalent, and the risk curves displayed in the figures are normalized for comparison.}. We apply the quadratic loss in our experiments. To avoid the implicit bias due to distribution shift, both training and test datasets are generated following the same strategy.


\subsection{\yl{Empirical Evaluation of CoT-I/O Performance}}\label{sec:emp}
To investigate how MLP task impacts $\COT$ performance, we train 2-layer MLPs with varying input dimensions ($d$) and hidden layer sizes ($k$). The results are presented in Figures~\ref{fig:smallk} and \ref{fig:largek},
and all experiments utilize small GPT-2 models for training\footnote{Our code is available at \url{https://github.com/yingcong-li/Dissecting-CoT}.}.

\textbf{$\COT$ performance is agnostic to $k$ when $k\leq d$ (Figure~\ref{fig:smallk}). } In Fig.~\ref{fig:cot_d10_vs_d20}, we train MLPs with $d=10,20$ and $k=4,8,16$. Solid and dashed curves represent the $\COT$ test risk of $d=10$ and $20$ respectively for varying in-context samples. The results indicate that an increase in $d$ will amplifies the number of samples needed for in in-context learning, while the performance remains unaffected by changes in $k\in\{4,8,16\}$. To further scrutinize the impact of $d$ on $\COT$ accuracy, in Fig.~\ref{fig:cot_d10_vs_d20_align}, we adjust the horizontal axis by dividing it by the input dimension $d$, and superimpose both $d=10,k=16$ (blue solid) and $d=20,k=16$ (orange dashed) results. This alignment of the two curves implies that the in-context sample complexity of $\COT$ is linearly dependent on $d$. 


\textbf{Large $k$ dominates $\COT$ performance (Figure~\ref{fig:largek}). } We further investigate the circumstances under which $k$ begins to govern the $\COT$ performance. In Figure~\ref{fig:largek compose}, we replicate the same experiments with $d=10$, but train with wider MLPs ($k=64$). Blue, orange and green curves represent results for $k=4,16,64$ respectively. Since the hidden dimension $k=64$ is larger, learning the second layer requires more hidden features ($\tb$), thus $N=100$ in-context samples (providing $100$ $\tb$s) are insufficient to fully restore the second layer, leading to performance gaps between $k=4,16$ and $k=64$. 
To quantify the existing gaps, we conduct single-step evaluations for both the first and the second layers, with the results shown in Figures~\ref{fig:largek first} and \ref{fig:largek second}. Specifically, let $\pmt_n(\tilde f)$ be a test prompt containing $n$ in-context samples where $\tilde f$ represents any arbitrary 2-layer MLP. Given a test sample $(\xtest,\tb_{\text{\tiny{test}}},y_{\text{\tiny{test}}})$, the layer predictions are performed as follows.
\begin{align}
\text{1st layer prediction: } &\TF(\pmt_n(\tilde f),\xtest):=\hat\tb,\nn\\
\text{2nd layer prediction: } &\TF(\pmt_n(\tilde f),\xtest,\tb_{\text{\tiny{test}}}):=\hat y.\nn
\end{align}
The test risks are calculated by $\|\hat\tb-\tb_{\text{\tiny{test}}}\|^2$ and $(\hat y-y_{\text{\tiny{test}}})^2$. The risks illustrated in the figures are normalized for comparability (refer to the appendix for more details). Evidence from Fig.~\ref{fig:largek first} and \ref{fig:largek second} shows that while increasing $k$ does not affect the first layer's prediction, it does augment the number of samples required to learn the second layer. Moreover, by plotting the first layer risks of $k=4,16$  (blue/orange dotted) and second layer risk of $k=64$ (green dashed) in Fig.~\ref{fig:largek compose}, we can see that they align with the $\COT$ composed risks. This substantiates the hypothesis that $\COT$ learns 2-layer MLP through compositional learning of separate layers.

\begin{figure}[!t]
\centering
\hspace{-6mm}
\subfigure[Averaged risk]{
    \includegraphics[height=.2\columnwidth]{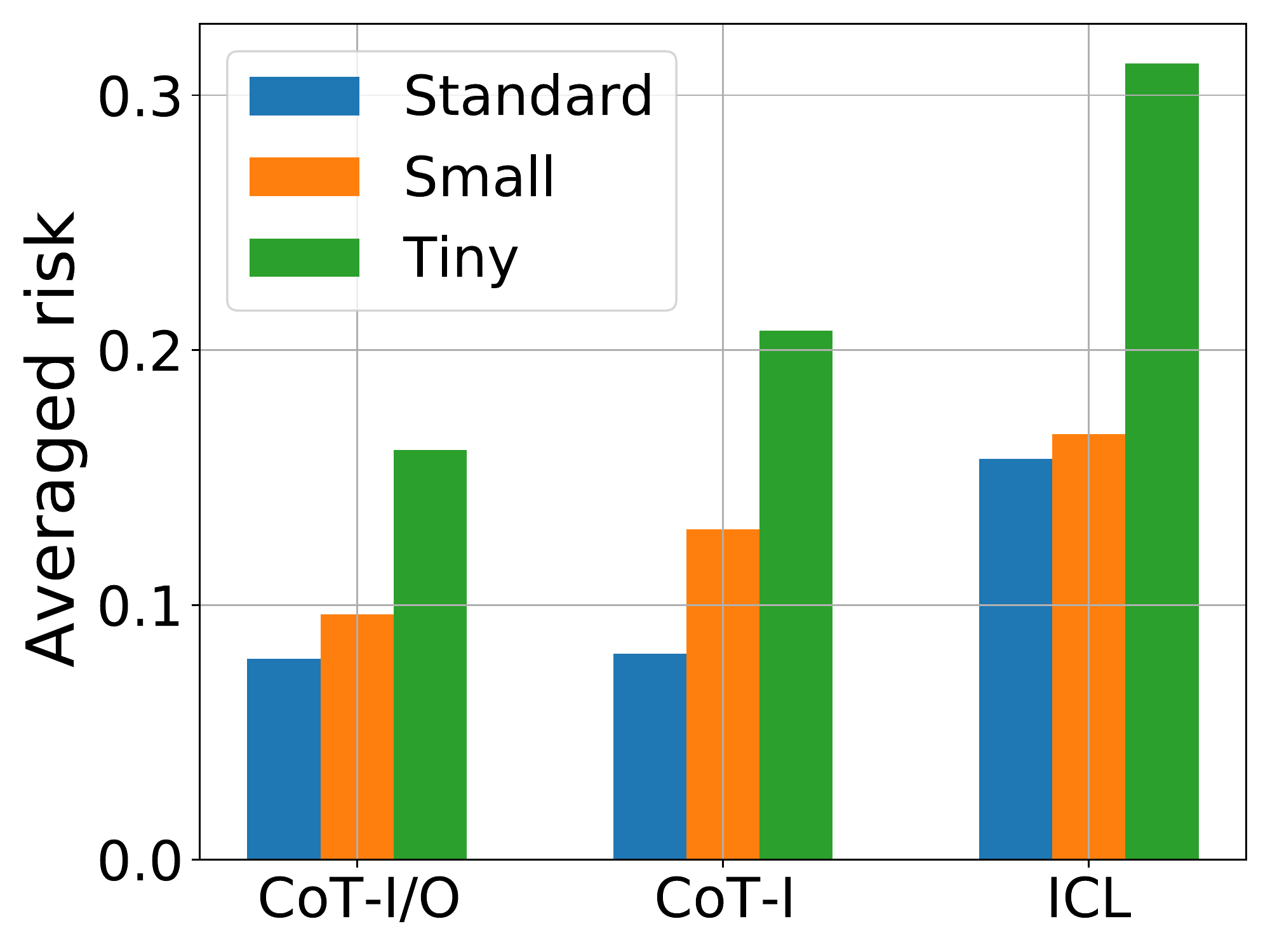}
    \label{fig:avg risk model}
    \hspace{-3mm}
}
\subfigure[Standard GPT-2]{
    \includegraphics[height=.2\columnwidth]{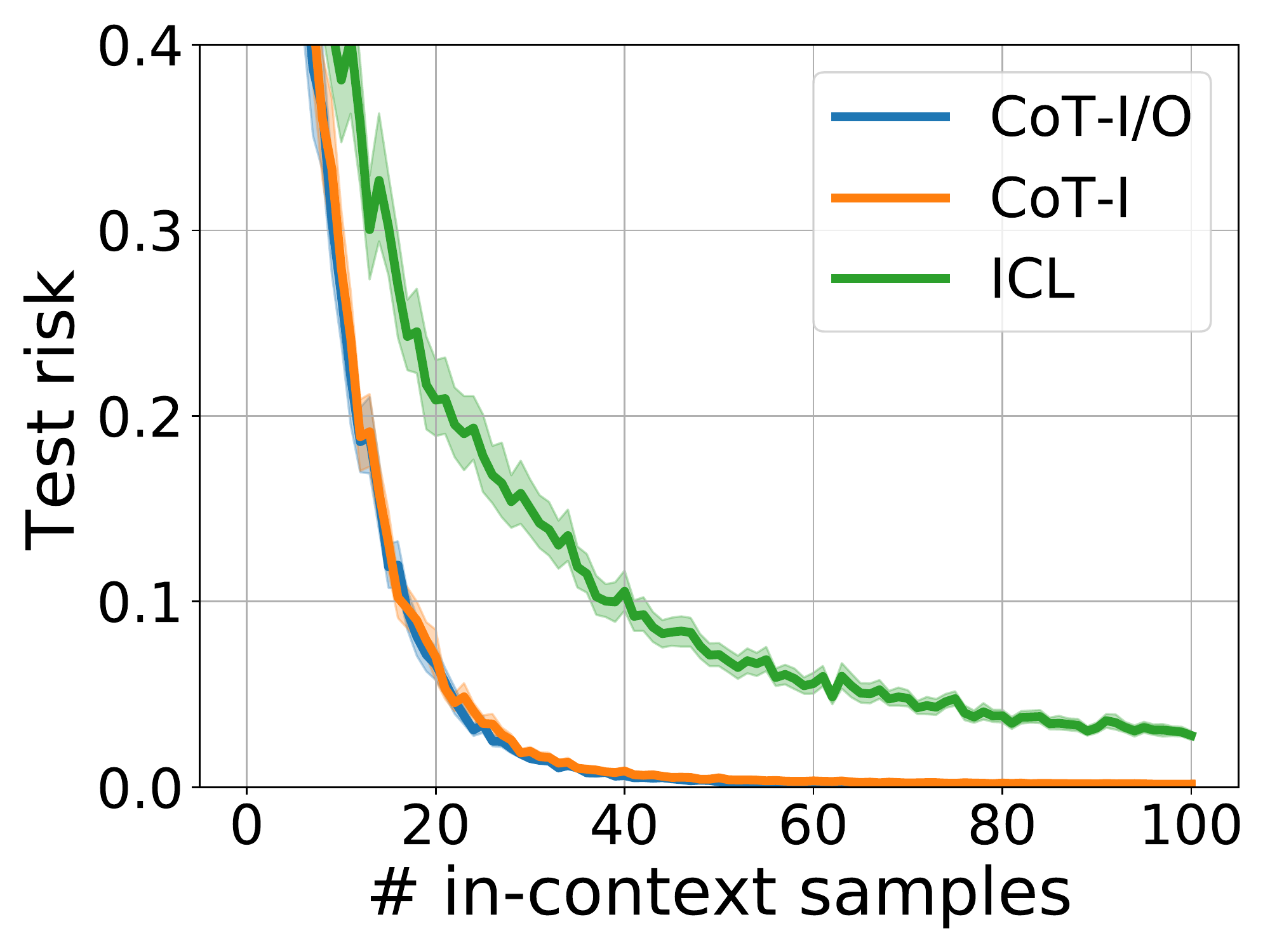}
    \label{fig:gpt standard}
    \hspace{-3mm}
}
\subfigure[Small GPT-2]{
    \includegraphics[height=.2\columnwidth,trim={1.5cm 0 0 0},clip]{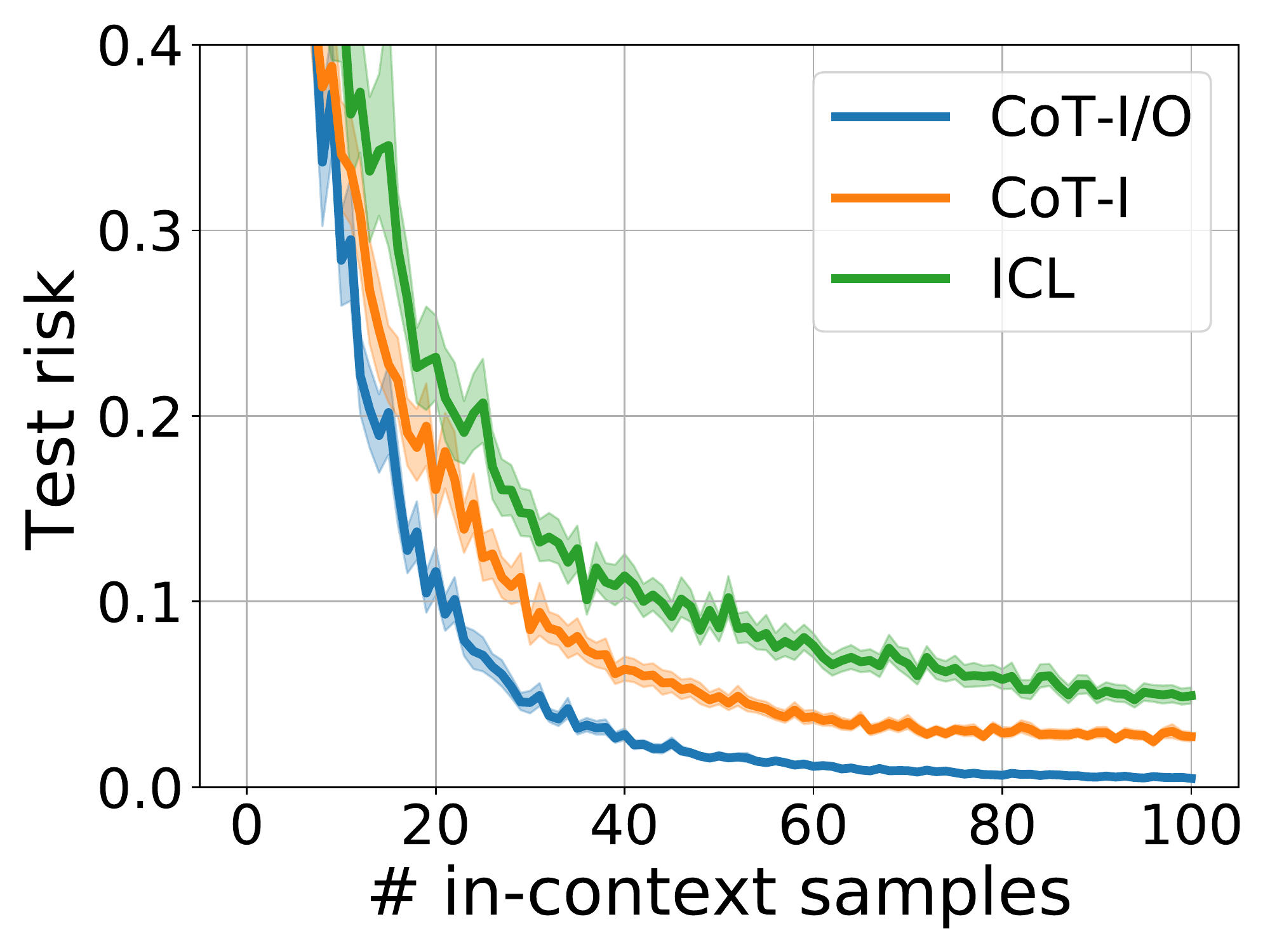}
    \label{fig:gpt small}
    \hspace{-3mm}
}
\subfigure[Tiny GPT-2]{
    \includegraphics[height=.2\columnwidth,trim={1.5cm 0 0 0},clip]{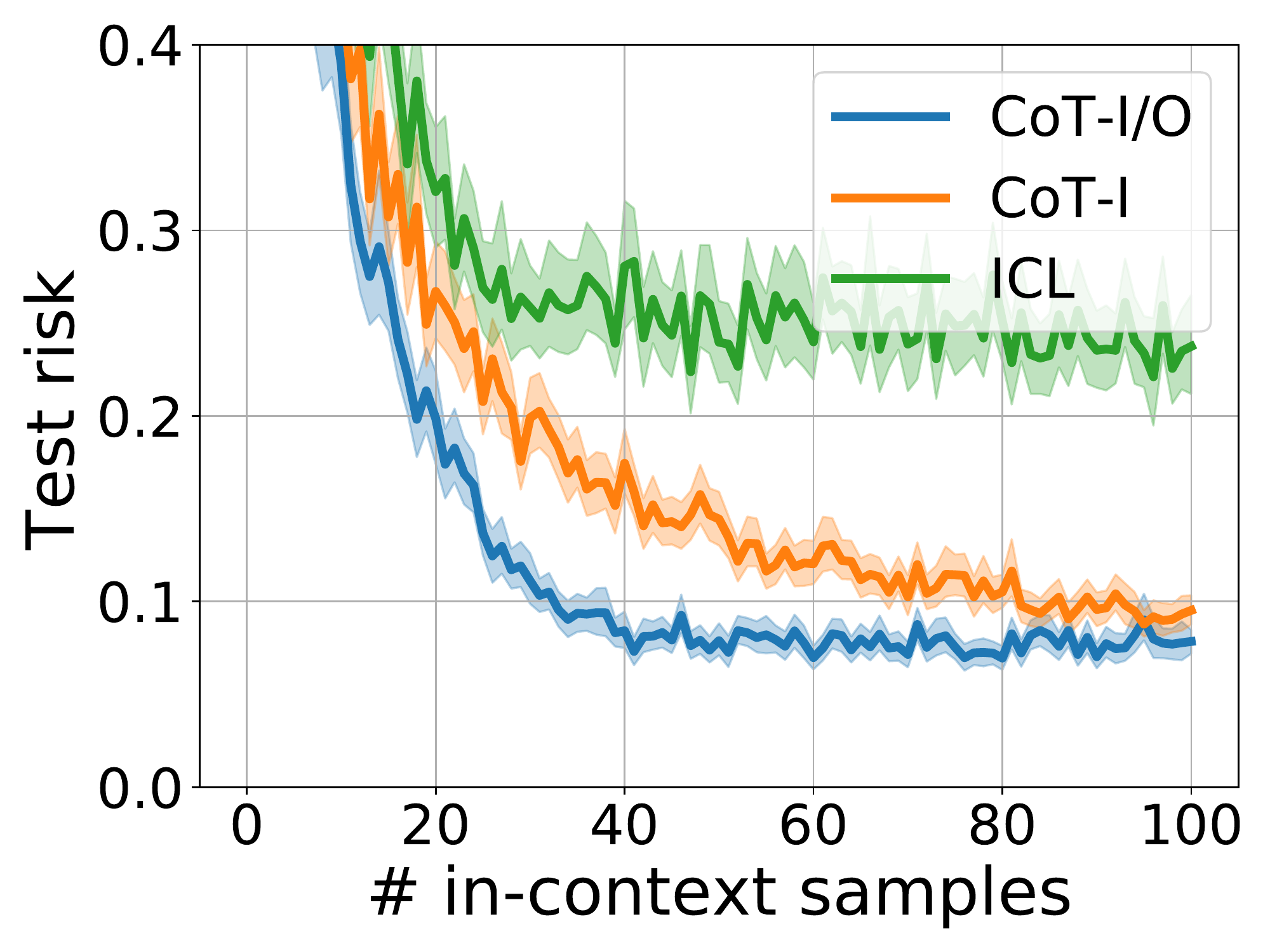}
    \label{fig:gpt tiny}
    \hspace{-3mm}
}
\vspace{-2mm}
\caption{Comparison of the three methods for solving $2$-layer MLPs using different GPT-2 models.}
\label{fig:diff gpt}
\vspace{-10pt}
\end{figure}

\begin{figure}[!t]
\centering
\hspace{-6mm}
\subfigure[Averaged risk]{
    \includegraphics[height=.2\columnwidth]{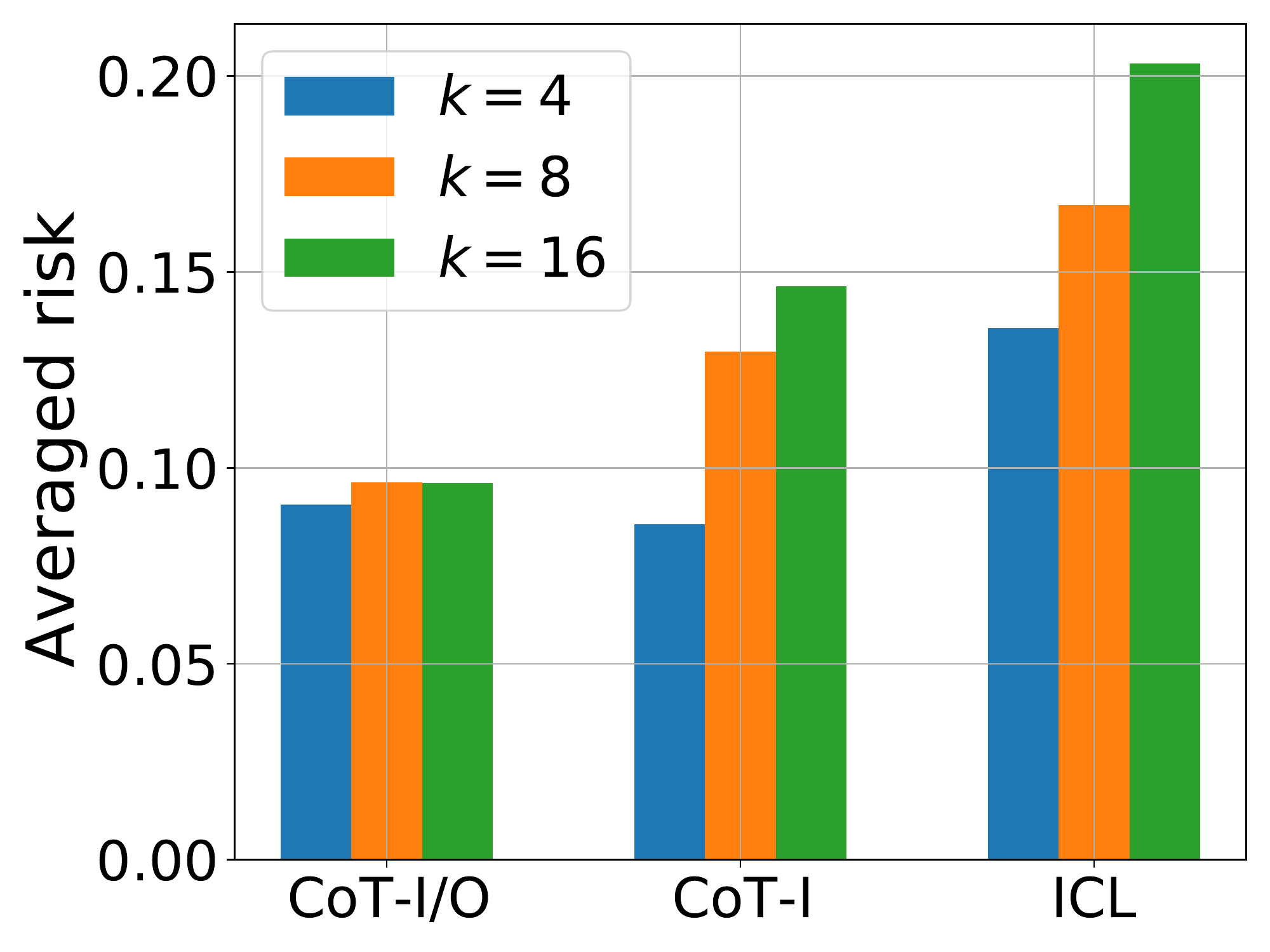}
    \label{fig:avg risk k}
    \hspace{-3mm}
}
\subfigure[$k=4$]{
    \includegraphics[height=.2\columnwidth]{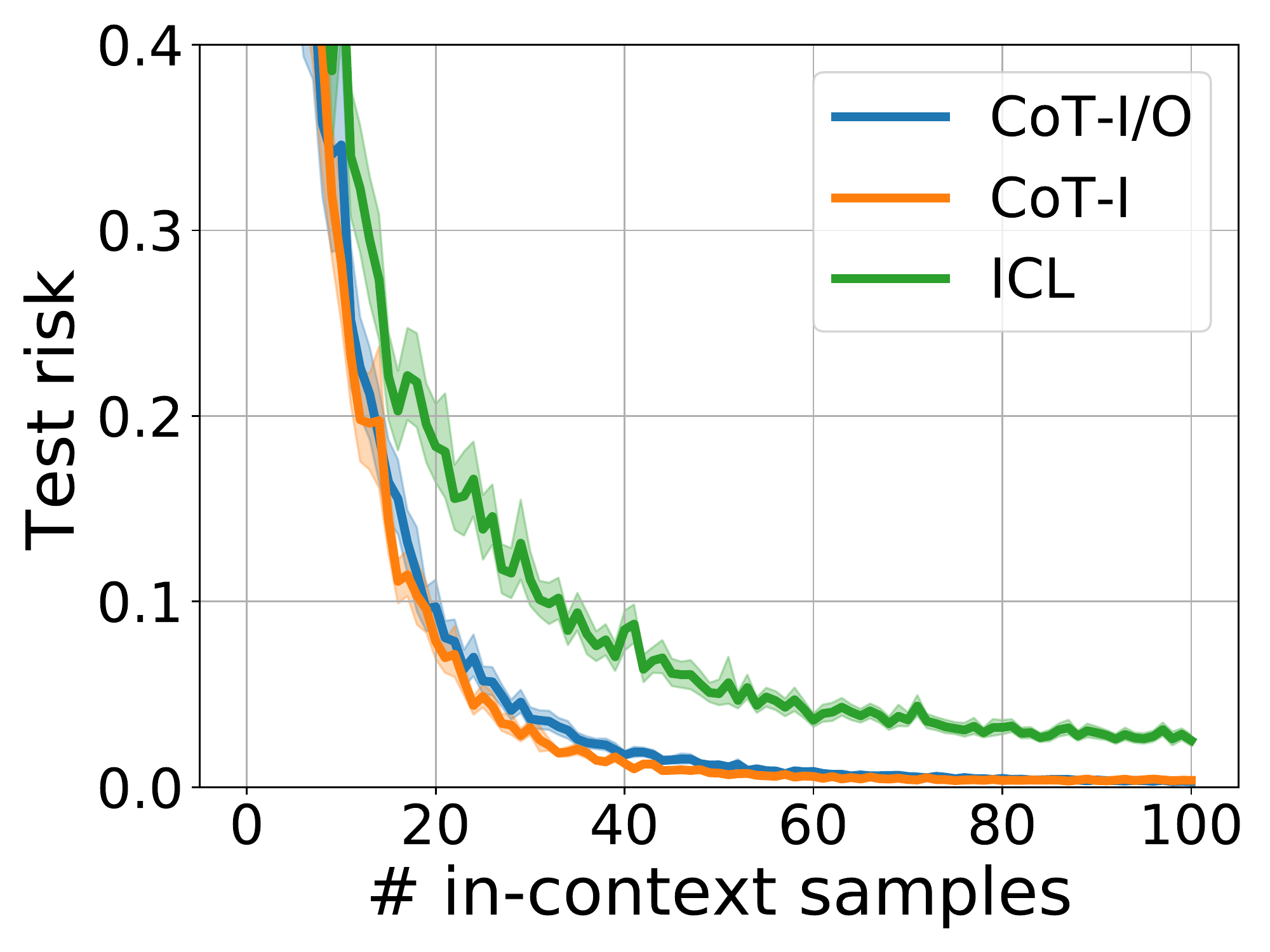}
    \label{fig:k4}
    \hspace{-3mm}
}
\subfigure[$k=8$]{
    \includegraphics[height=.2\columnwidth,trim={1.5cm 0 0 0},clip]{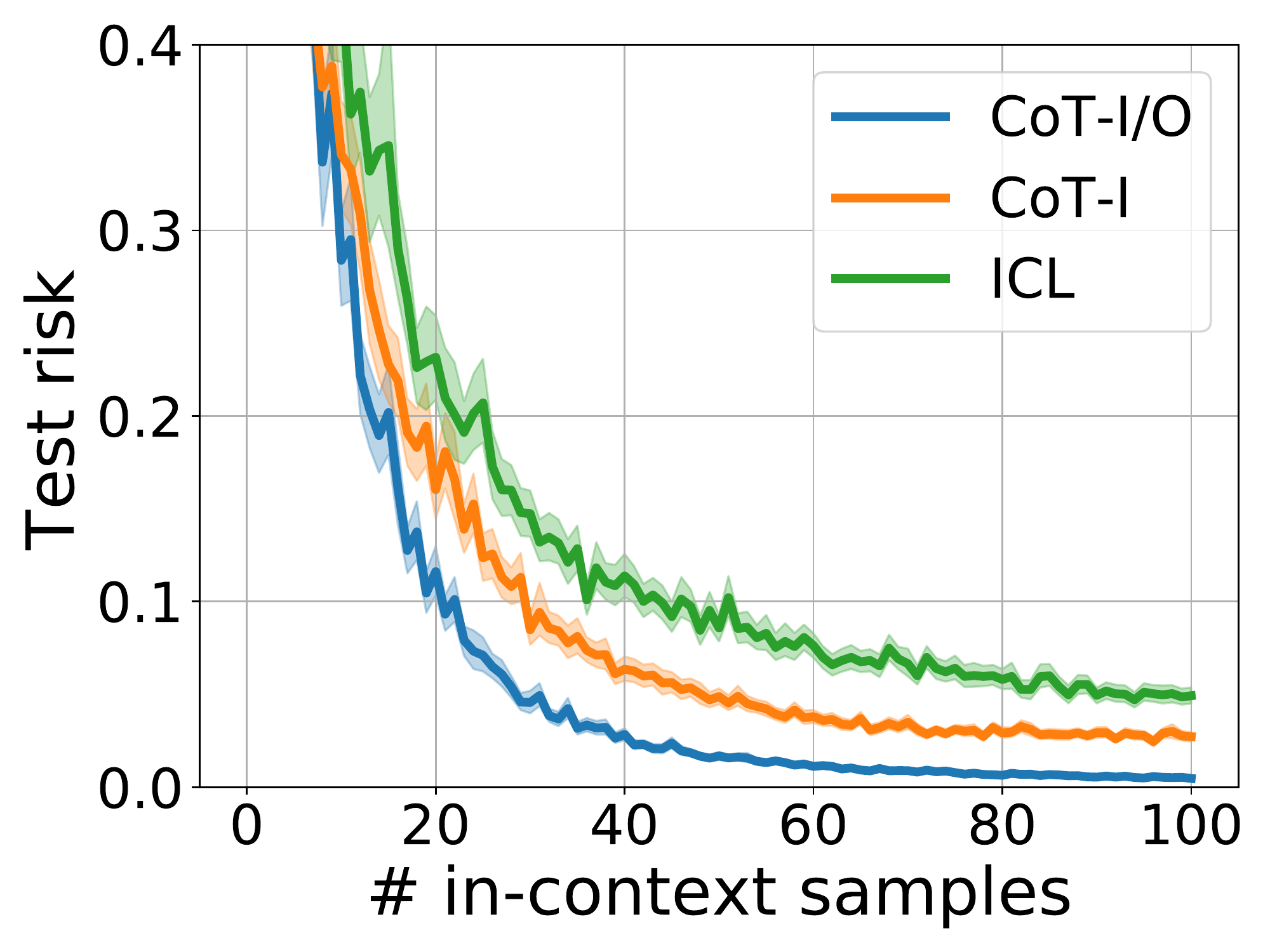}
    \label{fig:k8}
    \hspace{-3mm}
}
\subfigure[$k=16$]{
    \includegraphics[height=.2\columnwidth,trim={1.5cm 0 0 0},clip]{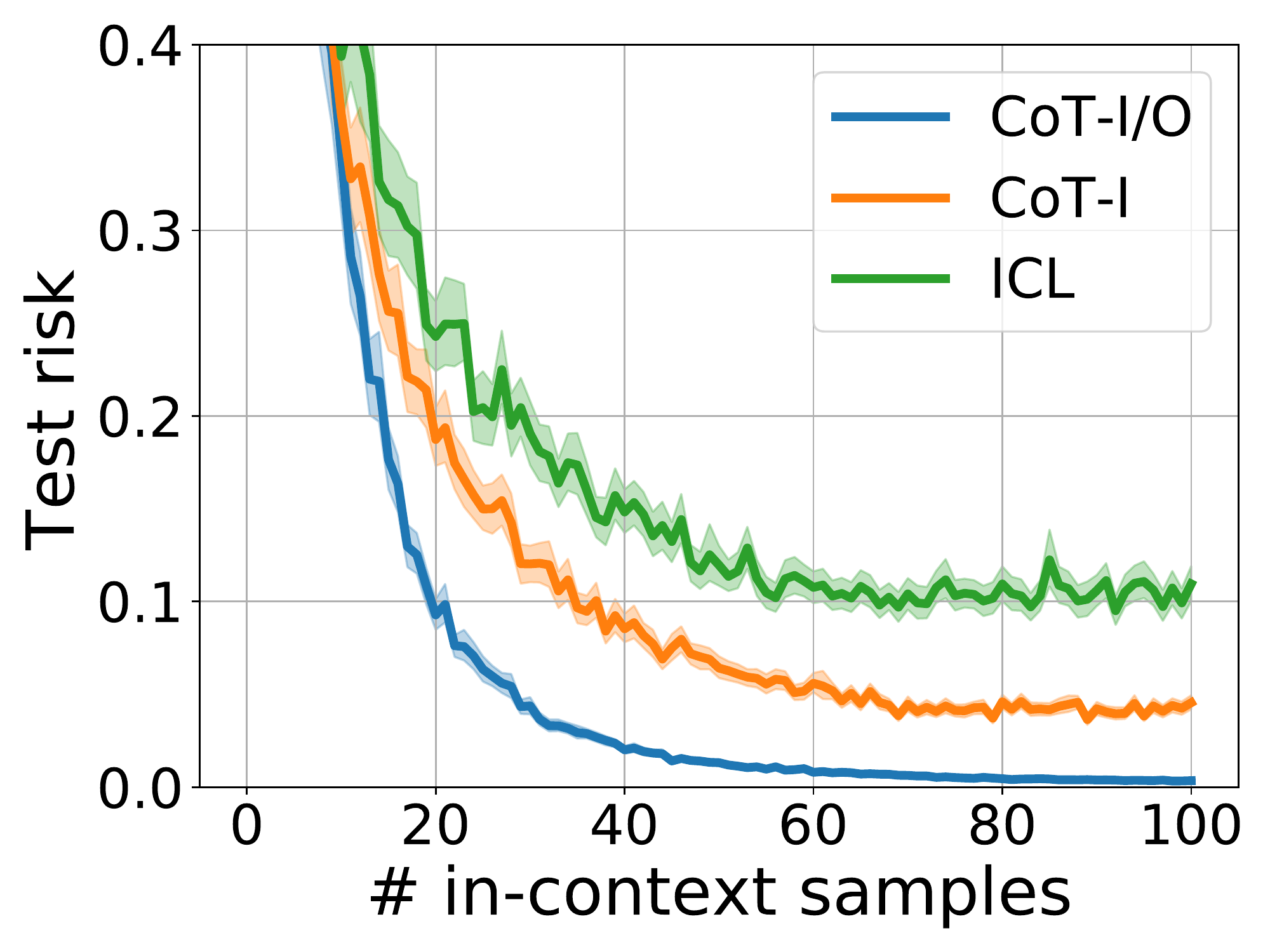}
    \label{fig:k16}
    \hspace{-3mm}
}

\vspace{-2mm}
\caption{Comparison of the three methods for solving $2$-layer MLPs with different hidden sizes.}
\label{fig:diff k}
\vspace{-10pt}
\end{figure}



\subsection{Comparative Analysis of ICL, CoT-I and CoT-I/O}\label{sec:2nn}





\textbf{Varying model sizes (Figure~\ref{fig:diff gpt}).} We initially assess the benefits of $\COT$ over $\ICL$ and $\EXP$ across different $\TF$ models. With $d=10$ and $k=8$ fixed, we train three different GPT-2 models: standard, small and tiny GPT-2. The small GPT-2 has $6$ layers, $4$ attention heads per layer and $128$ dimensional embeddings. The standard GPT-2 consists of twice the number of layers, attention heads and embedding dimensionality compared to the small GPT-2, and tiny GPT-2, on the other hand, possesses only half of these hyperparameters compared to the small GPT-2. We evaluate the performance using prompts containing $n$ in-context samples, where $n$ ranges from $1$ to $N$ ($N=100$). The associated test risks are displayed in Figs.~\ref{fig:gpt standard},~\ref{fig:gpt small} and \ref{fig:gpt tiny}. The blue, orange and green curves correspond to $\COT$, $\EXP$ and $\ICL$, respectively. In Fig.~\ref{fig:avg risk model}, we present the averaged risks. The results show that using $\COT$, the small GPT-2 can solve 2-layer MLPs with approximately $60$ samples, while $\EXP$ requires the standard GPT-2. Conversely, $\ICL$ is unable to achieve zero test risk even with the standard GPT-2 model and up to $100$ samples. This indicates that to learn 2-layer MLPs in a single shot, $\ICL$ requires at least ${\cal{O}}(dk+d)$ samples to restore all function parameters. Conversely, $\EXP$ and $\COT$ can leverage implicit samples contained in the CoT prompt. Let $f_1\in\Fc_1$ (first layer) and $f_2\in\Fc_2$ (second layer). By comparing the performance of $\EXP$ and $\COT$, it becomes evident that the standard GPT-2 is capable of learning the composed function $f=f_2\circ f_1\in\Fc$, which the small GPT-2 cannot express. 



\textbf{Varying MLP widths (Figure~\ref{fig:diff k}). } Next, we explore how different MLP widths impact the performance (by varying the hidden neuron size $k\in\{4,8,16\}$). The corresponding results are depicted in Figure~\ref{fig:diff k}. The blue, orange and green curves in Fig.~\ref{fig:k4}, \ref{fig:k8} and \ref{fig:k16} correspond to hidden layer sizes of $k=4$, $8$, and $16$, respectively. Fig.~\ref{fig:avg risk k} displays the averaged risks. We keep $d=10,~N=100$ fixed and train with the small GPT-2 model. As discussed in Section~\ref{sec:emp}, $\COT$ can learn a 2-layer MLP using around $60$ samples for all $k=4,8,16$ due to its capability to deconstruct composed functions. However, $\EXP$ can only learn the narrow MLPs with $k=4$, and $\ICL$ is unable to learn any of them. Moreover, we observe a substantial difference in the performance of $\ICL$ and $\EXP$ with varying $k$ (e.g., see averaged risks in Fig.~\ref{fig:avg risk k}). This can be explained by the fact that enlarging $k$ results in more complex $\Fc_1$ and $\Fc_2$, thus making the learning of  $\Fc=\Fc_2\times\Fc_1$ more challenging for $\ICL$ and $\EXP$. 

\begin{figure}[!t]
\vspace{-35pt}
\centering
\subfigure[Point-to-point meta prediction]{
    \includegraphics[height=.25\columnwidth]{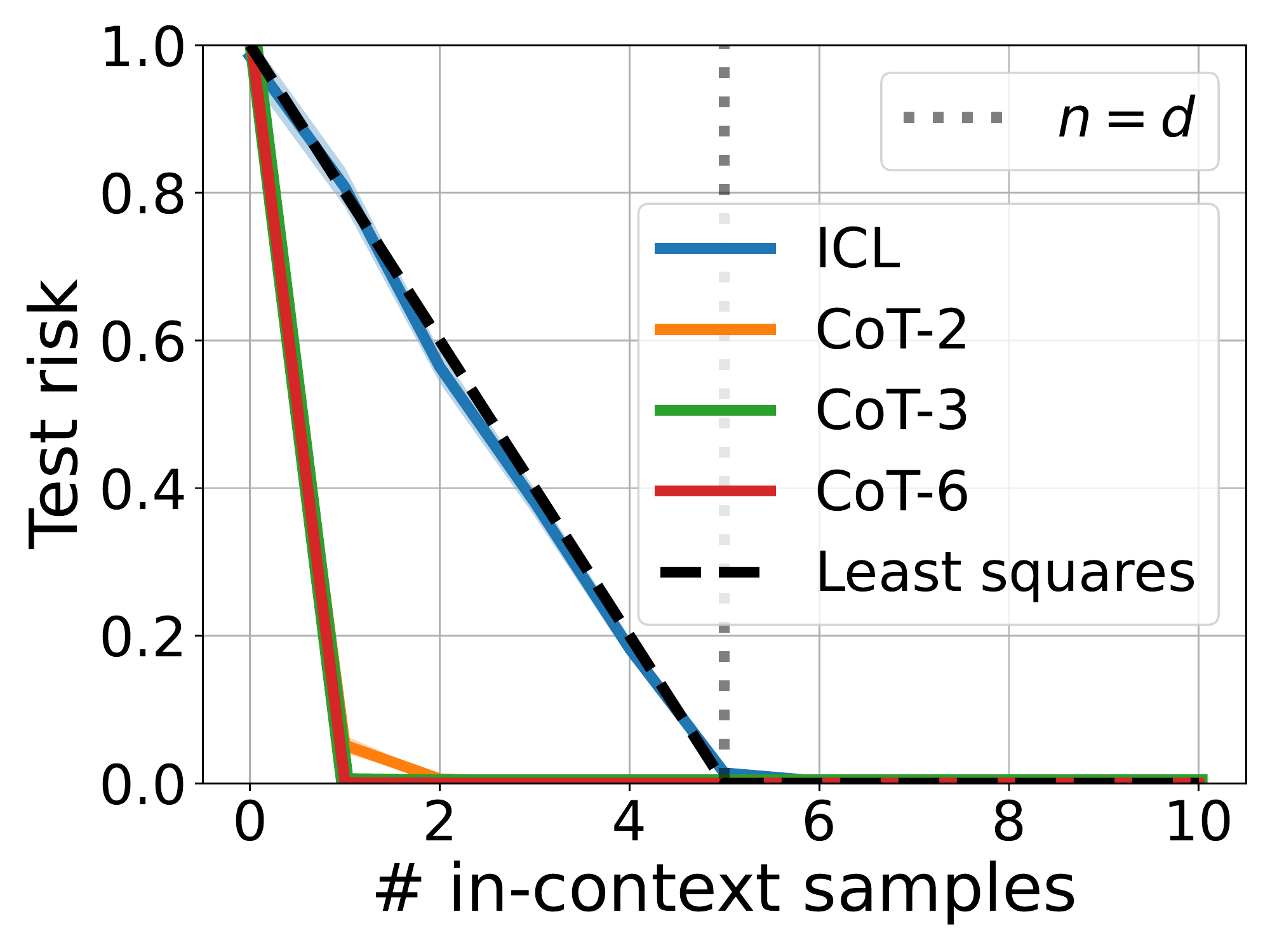}
    \label{fig:d5}
    \hspace{5mm}
}
\subfigure[One-shot prediction over time]{
    \includegraphics[height=.25\columnwidth]{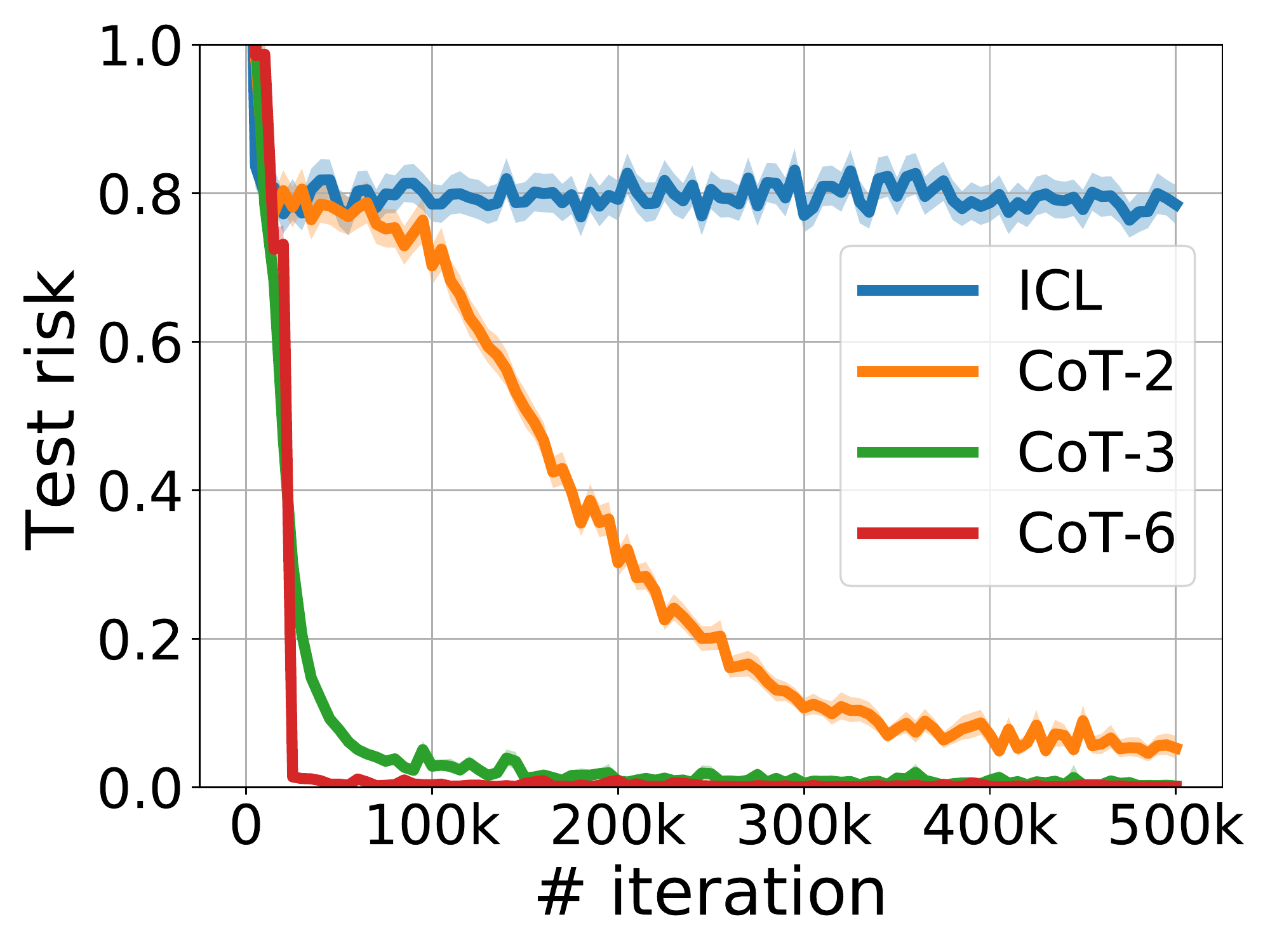}
    \label{fig:one shot}
}
\vspace{-2mm}
\caption{Evaluations over deep linear MLPs using $\COT$ and $\ICL$ where CoT-$X$ represents the $X$-step $\COT$. Fig.~\ref{fig:d5} illustrates point-to-point meta results where the model is trained with substantial number of samples. In contrast, Fig.~\ref{fig:one shot} displays the one-shot performance (with only one in-context sample provided) when making predictions during training. 
}
\label{fig:long chain}
\vspace{-10pt}
\end{figure}
\vspace{-5pt}
\section{Further Investigation on Deep Linear MLPs}\label{sec:deep nn}
\vspace{-5pt}

In Section~\ref{sec:exp}, we have discussed the approximation benefits of $\COT$ and how it in-context learns 2-layer random  MLPs by parallel learning of $k$ $d$-dimensional ReLU and 1 $k$-dimensional linear regression. In this section, we investigate the capability of $\COT$ in learning longer compositions. For brevity, we will use CoT to refer to $\COT$ in the rest of the discussion.

\textbf{Dataset. } Consider $L$-layer linear MLPs with input $\x\in\R^d\sim\Nc(0,\Iden_d)$, and output generated by $\y=\W_L\W_{L-1}\cdots\W_1\x$, where the $\ell$th layer is parameterized by $\W_\ell\in\R^{d\times d}$, $\ell\in[L]$. In this work, to better understand the emerging ability of CoT, we assume that each layer draws from the same discrete sub-function set $\bar\Fc=\{\bar\W_k:\bar\W_k^\top\bar\W_k=\Iden,k\in[K]\}$\footnote{This assumption ensures that the norm of the feature remains constant across layers ( $\|\x\|=\|\y\|=\|\tb^\ell\|$), enabling fair evaluation across different layers.}. Therefore, to learn the $L$-layer neural net, CoT only needs to learn $\bar\Fc$ with $|\bar\Fc|=K$, whereas $\ICL$ needs to learn the function set $\bar\Fc^{L}$, which contains $K^L$ random matrices. 

\textbf{Composition ability of CoT (Figures~\ref{fig:long chain}). } Set $d=5$, $L=6$ and $K=4$. At each round, we randomly select $L$ matrices $\W_\ell$, $\ell\in[L]$ from $\bar\Fc$ so that for any input $\x$, we can form a chain
\[
\x\to\tb^{1}\to\tb^{2}\dots\to\tb^{6}:=\y,
\]
where $\tb^{\ell}=\W_\ell\tb^{\ell-1}$, $\ell\in[L]$ and $\tb^{0}:=\x$. Let CoT-$X$ denote $X$-step $\COT$ method. For example, the in-context sample of CoT-6 has form of $(\x,\tb^{1},\tb^{2},\dots\tb^{5},\y)$, which contains all the intermediate outputs from each layer; while CoT-3, CoT-2 have prompt samples formed as $(\x,\tb^{2},\tb^{4},\y)$ and $(\x,\tb^{3},\y)$ respectively. In this setting, ICL is also termed as CoT-1, as its prompt contains only $(\x,\y)$ pairs. To solve the length-6 chain, CoT-$X$ needs to learn a model that can remember $4^{6/X}$ matrices. Therefore, ICL is face a significantly challenge sincd it needs to remember $4^6=4,096$ matrices (all combinations of the 4 matrices used for training and testing) compared to just $4$ for CoT-6. 

We train small GPT-2 models using the CoT-2/-3/-6 and ICL methods, and present the results in Fig.~\ref{fig:d5}. As evident from the figure, the performance curves of CoT-2 (orange), CoT-3 (green) and CoT-6 (red)  overlap, and they can all make precise predictions in one shot (given an in-context example $n=1$). It seems that $\TF$ has effectively learned to remember up to $64$ matrices (for CoT-2) and compose up to $6$ layers (for CoT-6). However, ICL (blue) struggles to learn the 6-layer MLPs in one shot. The black dashed curve shows the solution for linear regression $y=\bbeta^\top\x$ computed directly via least squares given $n$ random training samples, where $\x$ is the input and $y$ is from the output of the 6-layer MLPs (e.g., $\y[0]$). The test risks for $n=1,\hdots 10$ are plotted (in Fig.~\ref{fig:d5}), which show that the ICL curve aligns with the least squares performance. This implies that, instead of remembering all $4,096$ matrices, ICL solves the problem from the linear regression phase. 

In addition to the meta-learning results which highlight the approximation benefits of multi-step CoT, we also investigate the convergence rate of CoT-2/-3/-6 and ICL, with results displayed in Fig.~\ref{fig:one shot}. We test the one-shot performance during training and find that CoT-6 converges fastest. This is because it has the smallest sub-function set, and given the same tasks (e.g., deep neural nets), shortening the chain leads to slower convergence. This supports the evidence that taking more steps facilitates faster and more effective learning of complex problems.




\begin{figure}[!t]
\vspace{-30pt}
\centering
    \hspace{-3mm}
\subfigure[Composed risk]{
    \includegraphics[height=.25\columnwidth]{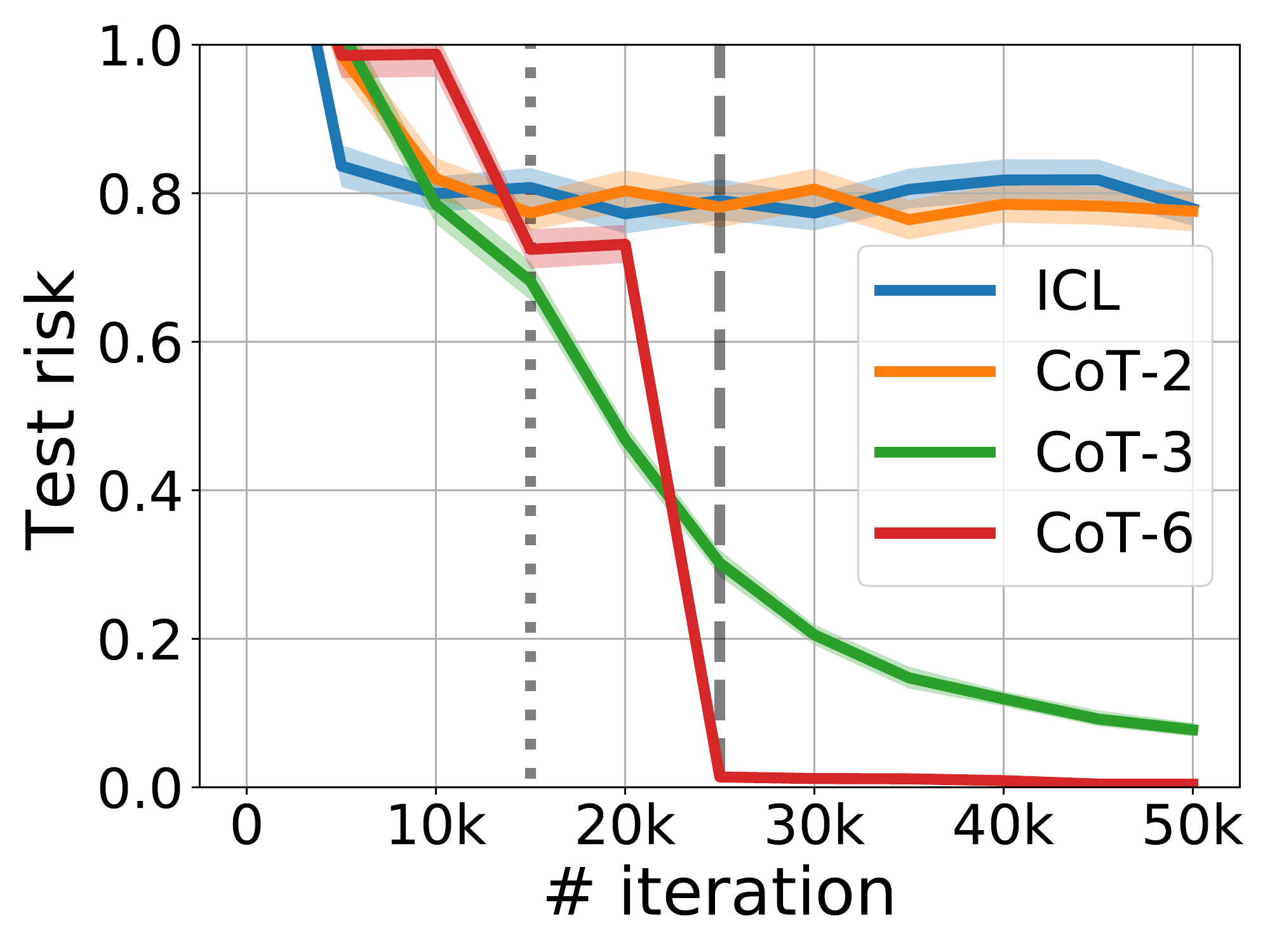}
    \label{fig:composed 50k}
    \hspace{-3mm}
}
\subfigure[Risk of each layer output]{
    \includegraphics[height=.25\columnwidth]{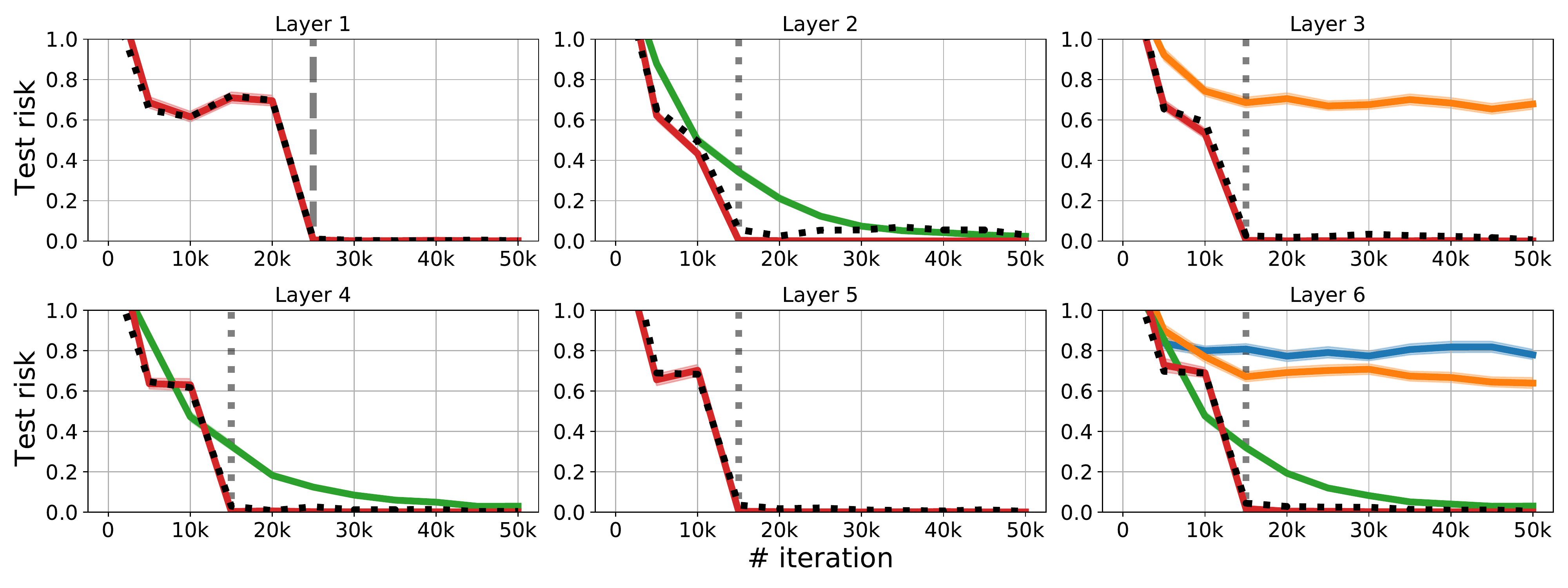}
    \label{fig:each layer 50k}
    \hspace{-3mm}
}
\vspace{-1mm}
\caption{Fig.~\ref{fig:composed 50k} is generated by magnifying the initial 50k iterations of Fig.~\ref{fig:one shot}, and we decouple the composed risks from predicting $6$-layer linear MLPs into predictions for each layer, and the results are depicted in Fig.~\ref{fig:each layer 50k}. Additional implementation details can be found in Section~\ref{sec:deep nn}.}
\label{fig:long chain 50k}
\vspace{-10pt}
\end{figure}

\textbf{Evidence of Filtering (Figure~\ref{fig:long chain 50k}). } {As per Theorem~\ref{thm:MLP} and the appendix, transformers can perform filtering over CoT prompts, and the results from 2-layer MLPs align with our theoretical findings. However, can we explicitly observe filtering behaviors?} In Fig.~\ref{fig:composed 50k}, we display the results of the first 50k iterations from Fig.~\ref{fig:one shot}, and observe risk drops in CoT-6 (red) at the 15k and 25k iteration (shown as grey dotted and dashed lines). Subsequently, in Fig.~\ref{fig:each layer 50k}, we plot the test risk of each layer prediction (by feeding the model with correct intermediate features not the predicted ones), where CoT-6 (red) predicts the outputs from all 6 layers ($\tb^1,\cdots,\tb^L$). From these figures, we can identify risk drops when predicting different layers, which appear at either 15k (for layer 2, 3, 4, 5, 6) or 25k (for layer 1) iteration. This implies that the model learns to predict each step/layer function independently. Further, we test the filtering evidence of the $\ell$th layer by filling irrelevant positions with random features. Specifically, an in-context example is formed by
\[
(\z^0,\cdots,\tb^{\ell-1},\tb^{\ell},\z^{\ell+1},\dots\z^{L}),~~\text{where}~~\tb^{\ell}=\W_\ell(\tb^{\ell-1})~~\text{and}~~\z\sim\Nc(0,\Iden_d).
\]
 The test risks are represented by black dotted curves in Fig.~\ref{fig:each layer 50k}, which aligned precisely with the CoT-6 curves (red). 
This signifies that each layer's prediction concentrate solely on the corresponding intermediate steps in the prefix, while disregarding irrelevant features. This observation provides evidence that the process of filtering is indeed performed.




\section{Related Work}\label{sec:related}

With the success of LLMs and prompt structure \citep{lester2021power}, there is growing interest in in-context learning (ICL) from both theoretical and experimental lens~\citep{garg2022can,brown2020language, von2022transformers, dai2022can, min2022rethinking, lyu2022z,li2023transformers, balim2023can, xie2021explanation,min2021metaicl, wei2023larger,li2023theoretical}. 
As an extension, chain-of-thought (CoT) prompts have made impressive improvements in performing complex reasoning by decomposing it into step-by-step intermediate solutions \citep{wei2022chain, narang2020wt5, lampinen2022can, wei2022emergent,zhou2022teaching, nye2021show, velivckovic2021neural, lanchantin2023learning}, which in general, shows the ability of transformer in solving compositional functions. \citet{lee2023teaching, dziri2023faith} study the problem of teaching arithmetic to small transformers and show that breaking the task down into small reasoning steps allows the model to learn faster. \citet{li2023symbolic} show that small student transformer models can learn from rationalizations sampled from significantly larger teacher models. The idea of learning how to compose skills has been well studied in other literatures~\citep{sahni2017learning, livska2018memorize}. More specifically, for the problem of learning shallow networks, there are several well known hardness results~\citet{goel2017reliably, goel2020tight, zhang2019learning}. In particular, \citet{hahn2023theory} shows a formal learnability bound which implies that compositional structure can benefit ICL. However, most of the work focuses on investigating empirical benefits and algorithmic designing of CoT, and there exists little effort studying the underlying mechanisms of  CoT. 

Considering the expressivity of the transformer architecture itself, \citet{yun2019transformers} showed that TFs are universal sequence to sequence approximators. More recently, \citet{giannou2023looped} use an explicit construction to show that shallow $\TF$s can be used to run general purpose programs as long as we loop them. Other works have also shown the turing-completeness of the $\TF$ architecture but these typically require infinite/high precision and recursion around attention layers~\citep{wei2022statistically, perez2019turing, perez2021attention, liu2022transformers}. Closer to our work, \citet{akyurek2022learning, von2023transformers,von2023uncovering} prove that a transformer with constant number of layers can implement gradient descent in solving linear regression, and \citet{giannou2023looped} introduce similar results by looping outputs back into inputs. \citet{ahn2023transformers} prove this result from an optimization perspective and show that the global minimum of the training objective implements a single iteration of preconditioned gradient descent for transformers with a single layer of linear attention. \citet{zhou2023algorithms} introduce the RASP-Generalization Conjecture which says that Transformers tend to length generalize on
a task if the task can be solved by a short RASP program which works for all input lengths. In this work, we prove CoT can be treated as: first apply filtering on the CoT prompts using special construction, and then in-context learn the filtered prompt.



\section{\yl{Conclusion, Limitations, and Discussion}}\label{sec:discuss}
In this work, we investigate chain-of-thought prompting and shed light on how it enables compositional learning of multilayer perceptrons step-by-step. Specially, we have explored and contrasted three methods: $\ICL$, $\EXP$ and $\COT$, and found that $\COT$ facilitates better approximation and faster convergence through looping and sample efficiency. Additionally, we empirically and theoretically demonstrated that to learn a 2-layer MLP with $d$-dimensional input and $k$ neurons, $\COT$ requires ${\cal{O}}(\max(d,k))$ in-context samples whereas $\ICL$ runs into approximation error bottlenecks. 

\yl{While we have provided both experimental and theoretical results to validate the advantages of CoT, it is important to note that our analysis in the main text pertains to in-distribution scenarios. In an effort to address this limitation and demonstrate the robustness of CoT, we have conducted additional simulations, as detailed in Appendix~\ref{sec:robust}, where the test samples follow a different distribution than the training examples. Also, we note that our focus has been primarily on MLP-based tasks, where the subproblems are essentially instances of simple linear regression. It would be valuable to explore how CoT might influence the training of tasks characterized by more complex structures, longer compositional chains, and a broader variety of subproblems.}

There are several interesting avenues for future research to build on our findings. To what extent does our decoupling of CoT (filtering followed by ICL) align with the empirical evidence in practical problems such as code generation and mathematical reasoning? We have shown that $\COT$ can rely on a linear regression oracle to learn an MLP. To what extent can transformers approximate MLPs without $\COT$ (e.g.~with $\EXP$) and what are the corresponding lower/upper bounds?




\section{Acknowledgements}
This work was supported by the NSF CAREER awards \#2046816 and \#1844951, NSF grant \#2212426, AFOSR \& AFRL Center of Excellence Award FA9550-18-1-0166, a Google Research Scholar award, an Adobe Data Science Research award, an Army Research Office grant W911NF2110312, and an ONR Grant No. N00014-21-1-2806.

\bibliographystyle{plainnat}
\bibliography{refs, biblio}

\newpage
\appendix
\addcontentsline{toc}{section}{Appendix} 
\part{Appendix}
\parttoc 
\section{Construction}\label{ss:construction}
\subsection{The Transformer Architecture }\label{ss:transformer}
For the purpose of this proof, we consider encoder-based transformer architectures and assume that the positional encodings are appended to the input\footnote{We note here that in terms of our construction adding the encodings or appending them to the input can be viewed in a similar manner. Specifically, we can consider that the up-projection step projects the input to  zero padded vectors, while the encodings are orthogonal to that projection in the sense that the have zero in the non-zero coordinates of the input. In that case adding the positional encodings  corresponds to appending them to the input. }. We also consider that the heads are added and each one has each own key, query and value weight matrices. Formally, we have 
\begin{align}\label{eq:transformer}
    \attn(\X) &= \X + \sum_{h=1}^H\weights^h_\val\X\sfm((\weights^h_\key\X)^\top\weights^h_\query\X)\\
    \TF(\X) &= \attn(\X) + \weights_2(\weights_1\attn(\X) +\vb_1\one_n)_{+} +\vb_2\one_n
\end{align}
where $\X\in\R^{d\times n}$, $H$ is the number of heads used and $f(x) = (x)_{+}$ is the ReLU activation. We also make use of the temperature $\lambda$, which is a parameter of the softmax. Specifically, $\sfm(\x) = \{\e^{\lambda x_i}/\sum_j\e^{\lambda x_j}\}_{i}$. Notice that as $\lambda\to\infty$ we have $\sfm(\x)\to \max_i x_i$. We also assume that the inputs are bounded; we denote with $N_{max}$ the maximum sequence length of the model.
\begin{assumption}\label{asm:bounded}Each entry is bounded by some constant $c$, which implies that  $\|\X\|\leq c'$, for some large $c'$ that depends on the maximum sequence length and the width of the transformer.
\end{assumption}
\subsection{Positional Encodings}\label{ss:pencodings}  In the constructions below we use a combination of the binary representation of each position, as well as some additional information (e.g. 0-1 bits) as described in the following sections. The binary representations and in general encodings we construct, require only logarithmic space with respect to the sequence length. Notice that binary representation of the positions satisfy the following two conditions:
\begin{enumerate}
    \item Let $\emb_i$ be the binary representation of the $i$-th position, then there exists $\varepsilon > 0$ such that $\emb_i^\top\emb_i > \emb_i^\top\emb_j +\varepsilon$ for all $j\neq i$.
    \item There exists a one layer transformer that can implement the addition of two pointers (see Lemma \ref{lem:addition} ).
\end{enumerate}

\subsection{Constructing Some Useful ``Black-box'' Functions}
We follow the construction of previous work \citep{giannou2023looped}  and use the following individual implementations as they do. We repeat the statements here for convenience. The first lemma is also similar to \cite{akyurek2022learning}, we however follow a slightly different proof. 
\begin{lemma}\label{lem:copy}
A transformer with one layer, two heads, and embedding dimension of ${\cal{O}}(\log n +d)$, where $d$ is the input dimension and $n$ is the sequence length, can copy any block of the input to any other block of the input with error $\varepsilon$ arbitrarily small.
\end{lemma}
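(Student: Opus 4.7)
The plan is to give an explicit one-layer, two-head construction that implements the block copy through attention-based routing. Assume each input column $\x_i$ decomposes into a data portion $\x_i^{\text{data}}\in\R^d$ and a positional encoding $\emb_i\in\R^{O(\log n)}$ living in disjoint coordinate blocks, and that we wish to copy a source block $S=\{s_1,\dots,s_M\}$ into a target block $T=\{t_1,\dots,t_M\}$ under a fixed bijection $\pi:T\to S$. The two heads will play complementary roles: a \emph{transfer} head that fetches source data into target positions via attention, and a \emph{cancellation} head that zeros out the pre-existing data at target positions so the residual connection does not interfere with the copied values.

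For the transfer head I would take $W_K^{(1)}$ to read the encoding coordinates and $W_V^{(1)}$ to read the data coordinates, and choose $W_Q^{(1)}$ so that on the encoding slots it realizes the position-to-position map $\emb_t\mapsto \emb_{\pi(t)}$ for $t\in T$ (and is suitably neutral otherwise). Running softmax attention with temperature $\lambda$ then concentrates the attention at each $t\in T$ on the column $\pi(t)$; by condition~(1) on the encodings (self-affinity strictly exceeds cross-affinity by $\varepsilon$), the soft weights at the other positions are exponentially small in $\lambda\varepsilon$. Consequently the transfer head adds $\approx \x_{\pi(t)}^{\text{data}}$ to each target column $t$ through the residual.

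For the cancellation head I would design $W_Q^{(2)},W_K^{(2)}$ so that each $t\in T$ attends essentially to itself, again using the self-affinity guarantee, and set $W_V^{(2)}=-W_V^{(1)}$. This subtracts the data portion of $\x_t$ so that the final output column at $t$ becomes $\x_t+\x_{\pi(t)}^{\text{data}}-\x_t^{\text{data}}$, which equals $\x_{\pi(t)}^{\text{data}}$ plus the unchanged encoding at $t$, as desired. Positions outside $T$ are shielded from both heads by letting $W_Q^{(\cdot)}$ map their encodings to a neutral vector (orthogonal to all keys), so that the corresponding softmax rows are (nearly) uniform and the value contributions are (nearly) zero by construction of $W_V^{(\cdot)}$.

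The main obstacle is realizing $\emb_t\mapsto\emb_{\pi(t)}$ as a single linear operator within an $O(\log n)$-dimensional encoding space, simultaneously for all $t\in T$. For shift-like $\pi$ one can use a sinusoidal or structured variant of the positional encoding under which shifts act as a fixed rotation, and for general bijections over small blocks one can instead build $W_Q^{(1)}$ as the low-rank sum $\sum_{t\in T}\emb_{\pi(t)}\emb_t^\top/\|\emb_t\|^2$ and absorb the resulting cross-term errors by further increasing $\lambda$. Both sources of error (softmax sharpness and linear-map approximation) decay to $0$ as $\lambda\to\infty$ under Assumption~\ref{asm:bounded}, so the total copy error can be driven below any prescribed $\varepsilon$, yielding the claim with embedding dimension $O(\log n+d)$.
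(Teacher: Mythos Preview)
Your high-level architecture—one head to fetch the source data and one head to cancel the residual at the target—is exactly the paper's design. The gap is in how you route the target column to the source. You try to bake the bijection $\pi$ into the weight $W_Q^{(1)}$ as a linear map on the $O(\log n)$-dimensional encodings, but with binary positional vectors this is impossible in general: the $n$ encodings span $\R^{\log n}$, so any linear map is already determined by its action on $\log n$ of them, and you cannot simultaneously send $\emb_t\mapsto\emb_{\pi(t)}$ on $T$ while sending every other $\emb_j$ to a neutral vector (for instance $\emb_1=\emb_3-\emb_2$ forces $W_Q\emb_1=W_Q\emb_3-W_Q\emb_2$). Your fallback low-rank sum $\sum_{t\in T}\emb_{\pi(t)}\emb_t^\top/\|\emb_t\|^2$ has cross-terms because the $\emb_t$ are not orthogonal; these distort the query and can move the \emph{argmax} of the key scores. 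Raising $\lambda$ only sharpens the softmax around whatever the argmax happens to be—it cannot repair a wrong argmax—so ``absorb the cross-term errors by increasing $\lambda$'' does not work. The same dimension-counting obstruction blocks your plan to make non-target queries orthogonal to all keys, and even if attention there were uniform, your two value matrices would contribute $\text{mean}(\x)$ and $-\x_j$, which do not cancel.

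The paper sidesteps all of this by putting the routing information in the \emph{input} rather than the weights: it appends a second encoding row that, at the target column, already carries the source position's encoding $\emb_k$ (and carries $\emb_i$ at column $i$ elsewhere). Head~1 then simply takes queries from this pointer row and keys from the ordinary position row, so the target column attends to column $k$ by construction—no linear permutation map is needed. Head~2 uses the pointer row for both keys and queries (giving near-identity attention) with the value matrix set to minus the identity on the destination data block, cancelling the residual. The $O(\log n+d)$ embedding dimension in the statement is precisely what accommodates this extra pointer row.
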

\begin{proof}
 Consider an input of the following form 
 \begin{equation}
     \X = \begin{bmatrix}
         \x_1 & \x_2 &\hdots &\x_n\\
         \y_1 &\y_2 &\hdots &\y_n\\
         \emb_1 &\emb_2 &\hdots &\emb_n\\
         \emb_k &\emb_2 &\hdots &\emb_n
     \end{bmatrix}
 \end{equation}
 where $\x_i,\y_i\in\R^d$ are column vectors for all $i=1,\hdots, n$.
We present how we can move a block of data from the $(d+1:2d,k)$ position block to the $(1:d,1)$ position block, meaning to move the point $\y_k$ to the position of the point $\x_1$.   It is straightforward to generalize the proof that follows to move blocks from $(i:i+k, j)$ to $(i':i'+k,j')$ for arbitrary $i,i',j,j',k$. 

Let in Eq. \ref{eq:transformer} $\weights^1_\key =\begin{bmatrix}\zero &\zero &\Iden& \zero \end{bmatrix}$ and $\weights^1_\query = \begin{bmatrix}\zero &\zero & \zero &\Iden\end{bmatrix}$ and   $\weights^2_\key =\begin{bmatrix}\zero &\zero& \zero  &\Iden\end{bmatrix}$ and $\weights^2_\query = \begin{bmatrix}\zero &\zero & \zero &\Iden\end{bmatrix}$ then we have
\begin{equation}
(\weights^1_\key\X)^\top(\weights^1_\query\X) = \begin{bmatrix}
    \emb_1^\top\emb_k & \emb_1^\top\emb_2 &\hdots &\emb_1^\top\emb_k &\hdots &\emb_1^\top\emb_n\\
    \emb_2^\top\emb_k & \emb_2^\top\emb_2 &\hdots &\emb_2^\top\emb_k &\hdots &\emb_2^\top\emb_n\\
    \vdots &\vdots &\ddots &\vdots &\ddots & \vdots\\
    \emb_k^\top\emb_k & \emb_k^\top\emb_2 &\hdots &\emb_k^\top\emb_k &\hdots &\emb_k^\top\emb_n\\
    \vdots &\vdots &\ddots &\vdots &\ddots & \vdots\\
    \emb_n^\top\emb_k & \emb_n^\top\emb_2 &\hdots &\emb_n^\top\emb_k &\hdots &\emb_n^\top\emb_n\\
\end{bmatrix}
\end{equation}
As $\lambda\to \infty$ and after the application of the softmax operator the above matrix becomes equal to 
$\begin{bmatrix}
    \e_k & \e_2 &\hdots &\e_k &\hdots \e_n
\end{bmatrix} + \eps\rmM$, where $\e_i$ is the one-hot vector with $1$ in the $k$-th position, $\|M\|\leq 1$ and $\eps$ is controllable by the temperature parameter and can be arbitrary small. Let finally $\weights^1_\val = \begin{bmatrix}
    \zero &\Iden &\zero &\zero \\
    \zero &\zero &\zero &\zero
\end{bmatrix}$ we get that 
\begin{equation}
    \weights^1_\val\X\sfm((\weights^1_\key\X)^\top(\weights^1_\query\X)) = \begin{bmatrix}
        \y_k &\x_2 &\hdots &\x_n\\
        \zero &\zero &\hdots &\zero 
    \end{bmatrix} + \eps\rmM
\end{equation}By repeating the exact same steps for the second head and letting $\weights_\val^2 = \begin{bmatrix}
    -\Iden &\zero &\zero &\zero\\
    \zero &\zero &\zero &\zero 
\end{bmatrix}$ and adding back the residual we get the desired result.
\end{proof}
A slightly different implementation can be found in \cite{giannou2023looped}, in which the ReLU layers are also used, together with indicator vectors that define whether a position should be updated or not.

\begin{lemma}\label{lem:addition}
   There exists a 1-hidden layer feedforward, ReLU network, with $8d$ activations in the hidden layer and $d$ neurons in the output layer that when given two $d$-dimensional binary vectors representing two non-negative integers, can output the binary vector representation of their sum, as long as the sum is less than $2^{d+1}$. 
\end{lemma}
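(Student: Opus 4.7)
The plan is to avoid the ripple-carry structure (which would require a second ReLU layer to process carries before combining them with the operand bits) by expressing each output bit directly as a linear combination of threshold functions of a single linear form of the input. Write the inputs as integers $X=\sum_{i=0}^{d-1}x_i 2^i$ and $Y=\sum_{i=0}^{d-1}y_i 2^i$, and for each bit position $k\in\{0,1,\ldots,d\}$ introduce the partial sum
\[
T_k \;=\; \sum_{i\le k}(x_i+y_i)\,2^i,
\]
which is a linear function of the input and therefore requires no hidden units to form. The key number-theoretic observation is that, because the contribution of bits above $k$ to $X+Y$ is a multiple of $2^{k+1}$, the $k$-th bit of $X+Y$ equals the $k$-th bit of $T_k$. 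Since $T_k\in\{0,1,\ldots,2^{k+2}-2\}$, I can read off this bit from three threshold indicators:
\[
b_k \;=\; \mathbf{1}[T_k\ge 2^k]\;-\;\mathbf{1}[T_k\ge 2^{k+1}]\;+\;\mathbf{1}[T_k\ge 3\cdot 2^k].
\]
I would verify this identity by a short case analysis on $\lfloor T_k/2^k\rfloor\in\{0,1,2,3\}$.

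Next I would realize each threshold using the identity, valid for integer-valued $T$,
\[
\mathbf{1}[T\ge a] \;=\; \mathrm{ReLU}(T-a+1)-\mathrm{ReLU}(T-a),
\]
which I would justify by the same small case distinction on $T-a$. Since $T_k$ is itself a linear combination of the input bits, each such indicator is realized by two ReLU units whose preactivations are linear in the input; three indicators per output bit yields at most $6$ ReLU units per bit of the sum. Counting across the $d+1$ possible sum bits gives at most $6(d+1)\le 8d$ activations for $d\ge 3$, and the edge cases $d\in\{1,2\}$ can be handled by a direct (smaller) construction or by padding, so the $8d$ bound holds uniformly. The output weights are precisely the $\pm 1,\pm 2$ coefficients read off from the identity above, and no bias is needed at the output layer beyond absorbing constants.

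The main obstacle to watch is conceptual rather than computational: the usual full-adder decomposition naturally wants two layers (one for the carries, one for the XORs), so I have to resist that instinct and instead exploit that the carry-into-position-$k$ and the two operand bits $x_k,y_k$ can be bundled into the single integer $T_k$, whose top bit is exactly $b_k$. Once this reformulation is in place, the construction is a routine translation of threshold functions on integers into ReLU differences, and the unit count is a straightforward bookkeeping exercise. A minor remark I would include is that the statement's ``$d$ neurons in the output layer'' should be read as covering all bits of the sum (up to $d+1$ when the sum can reach $2^{d+1}-1$); the construction above produces one output neuron per sum bit and easily fits within the $8d$ hidden-unit budget.
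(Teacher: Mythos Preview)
The paper does not actually prove this lemma: it is quoted verbatim from \cite{giannou2023looped} as one of several ``black-box'' building blocks, with the preamble ``We follow the construction of previous work \dots\ and use the following individual implementations as they do. We repeat the statements here for convenience.'' So there is no paper-side argument to compare against.

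Your construction is correct. The key identity --- that the $k$-th bit of $X+Y$ equals the $k$-th bit of the truncated sum $T_k=\sum_{i\le k}(x_i+y_i)2^i$ because the remaining terms are divisible by $2^{k+1}$ --- is exactly the insight that collapses the two-layer ripple-carry into one layer, and your three-threshold formula $b_k=\mathbf{1}[T_k\ge 2^k]-\mathbf{1}[T_k\ge 2^{k+1}]+\mathbf{1}[T_k\ge 3\cdot 2^k]$ checks out on all four cases of $\lfloor T_k/2^k\rfloor\in\{0,1,2,3\}$ (using $T_k<2^{k+2}$). The ReLU realization of each integer threshold via $\mathrm{ReLU}(T-a+1)-\mathrm{ReLU}(T-a)$ is standard and correct, giving $6$ hidden units per output bit and hence at most $6(d+1)\le 8d$ for $d\ge 3$; the $d\in\{1,2\}$ cases are trivial to handle directly. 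One small slip: the output-layer weights in your scheme are all $\pm 1$, not $\pm 1,\pm 2$. Your closing remark about the tension between ``$d$ output neurons'' and a sum that can have $d{+}1$ bits is a fair observation about the statement, not a defect of your proof.
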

\begin{lemma}\label{lem:matrixmul_app}
Let $\rmA\in \R^{d\times m}$ and $\rmB \in \R^{d\times n}$. Then for any $\epsilon >0$ there exists a transformer-based function block with  2 layers, 1 head and width  $r = O(d)$ that  outputs the multiplication $\rmA^\top\rmB +\eps\rmM$, for some $\|\rmM\|\leq 1$ .
\end{lemma}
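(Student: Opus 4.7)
The plan is to realize $\rmA^\top \rmB$ as the output of a two-layer transformer block by running the softmax attention in its low-temperature (near-linear) regime and then correcting the resulting bias and rescaling with a ReLU MLP, in the spirit of the linearization technique used in earlier constructions of in-context gradient descent. First I would lay out the tokens so that the columns of $\rmA$ occupy positions $1,\ldots,m$ and the columns of $\rmB$ occupy positions $m{+}1,\ldots,m{+}n$, each token carrying its embedding in one block together with the positional encodings of Section~\ref{ss:pencodings} and a one-hot flag indicating whether the token is an A-token or a B-token. The goal is that after the two layers, each B-token at position $m{+}j$ carries, in a designated output slot, the vector $\rmA^\top \vb_j \in \R^m$.

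For the first layer I would set $\weights_\key$ and $\weights_\query$ to project out the A- and B-slots respectively, so that $(\weights_\key\X)^\top(\weights_\query\X)$ contains the bilinear entries $\va_i^\top \vb_j$ in one sub-block and zeros elsewhere. Using the low-temperature Taylor expansion
\[
\sfm(\lambda \vz)_i = \tfrac{1}{N} + \tfrac{\lambda}{N}\bigl(z_i - \bar z\bigr) + O\bigl(\lambda^2 \|\vz\|_\infty^2\bigr),
\]
together with $\weights_\val$ projecting out the A-slot with a compensating scale $N/\lambda$, the value-times-softmax computation deposits at token $m{+}j$ an expression of the form $\vmu + (\rmA^\top \vb_j - c_j \one) + O(\lambda (c')^2)$, where $\vmu \in \R^m$ is a single constant vector depending only on $\rmA$, $c_j \in \R$ is a token-dependent scalar bias arising from the softmax normalization, and $c'$ is the norm bound from Assumption~\ref{asm:bounded}. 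Both $\vmu$ and $c_j \one$ are affine in quantities already present in the residual stream (the sum of A-columns, and the average of the inner products $\va_i^\top \vb_j$), so I can strip them off with a small ReLU gadget built from the position flags and the identity $x = (x)_+ - (-x)_+$; this is handled by the first MLP together with, if needed, a simple attention pattern in the second layer. The residual connections preserve the A- and B-tokens, so after the two layers each B-token's output slot holds $\rmA^\top \vb_j$ up to the surviving Taylor remainder.

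The final error reduces to the softmax remainder, which is $O(\lambda (c')^2)$ per coordinate after the corrections. Choosing $\lambda = \Theta(\eps/(c')^2)$ and collecting the $O(d)$ coordinates of the output block into an operator-norm bound yields $\|\eps \rmM\| \leq \eps$, as required. Only one head per layer is needed because the bilinear extraction, the constant subtraction and the rescaling can all be aligned along orthogonal blocks of a single attention pattern, and width $O(d)$ suffices to hold the A-slot, B-slot, output slot, positional encodings and a handful of auxiliary scalars. The main obstacle I anticipate is the interplay between the $1/\lambda$ rescaling of $\weights_\val$, which amplifies any residual softmax contribution, and the $O(\lambda^2)$ Taylor remainder: the net surviving remainder is $O(\lambda)$, and one must verify that the constant-removal step does not further inflate it. The key point is that after centering, the signal term is $\Theta(1)$ while the remainder is $O(\lambda)$, so the ratio can be driven below any prescribed $\eps$ by taking $\lambda$ small enough.
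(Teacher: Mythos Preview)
The paper does not prove this lemma; it is quoted from the looped-transformer construction it cites, and the only information given about the argument is the input layout in the subsequent remark. Your low-temperature linearization of softmax is the right mechanism, but the sketch has two concrete gaps.

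First, a dimension mismatch. With $\weights_\val$ ``projecting out the A-slot'' the value at column $i$ is $\va_i\in\R^d$, so after your $N/\lambda$ rescaling the attention output at B-token $j$ is
\[
\tfrac{1}{\lambda}\sum_{i\le m}\va_i \;+\; \rmA\rmA^\top\vb_j \;-\; \bar z_j\sum_{i\le m}\va_i \;+\; O(\lambda)\ \in\ \R^d,
\]
not $\rmA^\top\vb_j\in\R^m$; your displayed expression $\vmu+(\rmA^\top\vb_j-c_j\one)$ mixes vectors of incompatible sizes. The repair---and the reason the remark's input layout carries an $\Iden$ block beneath $\rmQ=[\rmA\;\rmB]$---is to take the value at column $i$ to be the one-hot $\e_i$, so that coordinate $i$ of the attention output is $\sfm(\lambda\vz)_i$ itself and the Taylor expansion delivers $\va_i^\top\vb_j$ entrywise.

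Second, the centering correction is circular as written. You say $c_j\one$ is removable because $c_j$ is ``the average of the inner products $\va_i^\top\vb_j$'' and is ``already present in the residual stream,'' but $c_j=\tfrac{1}{N}\big(\sum_{i\le m}\va_i\big)^\top\vb_j$ is itself a bilinear form of exactly the kind you are trying to build; nothing in your token layout produces it for free. The actual device---again visible in the remark's auxiliary $\one\one^\top$ and $\Iden$ blocks---is to carry reference columns whose key is zero but whose one-hot value is present: at such a coordinate the linearized softmax reads $\tfrac{1}{N}-\tfrac{\lambda}{N}c_j+O(\lambda^2)$, so after the same $N/\lambda$ rescaling you obtain $-c_j$ up to $O(\lambda)$ and can subtract it from the data coordinates with the ReLU layer. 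With both repairs in place your error bookkeeping (net remainder $O(\lambda)$, take $\lambda=\Theta(\eps)$) goes through.
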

\begin{remark}
    Notice that based on the proof of this lemma, the matrices/scalars/vectors need to be in the same rows, \emph{i.e.},
$\rmQ =\begin{bmatrix}
            \rmA &\rmB 
        \end{bmatrix}
$. By also appending the appropriate binary encodings we can move the output at any specific place we choose as in Lemma \ref{lem:copy}. Also, following the proof of the paper the input matrix is
\begin{equation*}
    \X = \begin{bmatrix}
        \rmQ& \zero &\zero \\
        \zero & \one\one^{\top} &\zero\\
        \Iden&\zero &\zero\\ 
        &\emb^{(1)}&\\
        &\emb^{(2)}&\\
    \end{bmatrix}  
\end{equation*} 
where $\emb^{(1)}, \emb^{(2)}$ are chosen as to specify the position of the result.
\end{remark}
\subsection{Results on Filtering}\label{ss:filtering}
\begin{lemma}\label{lem:binary-app}
    Assume that the input to a transformer layer is of the following form
    \begin{equation}
        \X = \begin{bmatrix}
            \x_1 &\x_2 &\hdots&\x_{n-1} &\x_n\\
            \zero &\zero &\hdots &\zero &\zero\\
        b_1 & b_2 & \hdots &b_{n-1} &b_n\\
        b_1' &b_2' &\hdots &b_{n-1}' &b_n
        \end{bmatrix}
    \end{equation}
    where $b_i,b_i'\in\{0,1\}$, with zero indicating that the corresponding point should be ignored. Then there exists a transformer $\TF$ consisting only of a ReLU layer that performs this filtering, \textit{i.e.}, 
    \begin{equation}
        \TF(\X) = \begin{bmatrix}
           \one\{b_1\neq 0\}\x_1&\one\{b_2\neq 0\}\x_2 &\hdots &\one\{b_{n-1}\neq 0\}\x_{n-1} &\one\{b_n\neq 0\}\x_n\\
            \one\{b_1'\neq 0\}\x_1&\one\{b_2'\neq 0\}\x_2 &\hdots &\one\{b_{n-1}'\neq 0\}\x_{n-1} &\one\{b_n'\neq 0\}\x_n\\
            b_1 & b_2 & \hdots &b_{n-1} &b_n\\
            b_1' &b_2' &\hdots &b_{n-1}' &b_n
        \end{bmatrix}
    \end{equation}
\end{lemma}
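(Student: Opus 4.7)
The plan is to implement the filter via the feed-forward (ReLU) block of a single transformer layer, using the assumption that $b_i, b_i' \in \{0,1\}$ and that the inputs are bounded (Assumption \ref{asm:bounded}). First I would kill the attention contribution by setting $\weights_V^h = 0$ for all heads, so that $\attn(\X) = \X$ by the residual connection. Then $\TF(\X) = \X + \weights_2(\weights_1 \X + \vb_1 \one_n^\top)_+ + \vb_2\one_n^\top$, and the task reduces to designing $\weights_1, \weights_2, \vb_1, \vb_2$ so that the FFN adds exactly the correction $\TF(\X) - \X$: on the top block of rows it must add $b_i\x_i - \x_i$, on the second block it must add $b_i'\x_i$, and on the remaining rows it must add zero.

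The key ingredient is the following exact identity: for any scalar $b \in \{0,1\}$, any scalar $x$ with $|x| \leq c$, and any constant $M > c$,
\[
b\cdot x \;=\; \mathrm{ReLU}(x + Mb - M) - \mathrm{ReLU}(-x + Mb - M).
\]
Indeed, for $b = 1$ the two ReLUs collapse to $x_+$ and $x_-$, whose difference is $x$; for $b = 0$ both arguments lie below $c - M < 0$, so both ReLUs vanish. Applying this coordinatewise to the vector $\x_i$ with gate $b_i$ yields $b_i \x_i$ using $2d$ hidden neurons; applying it with gate $b_i'$ yields $b_i'\x_i$ using another $2d$ neurons. A further $2d$ neurons implement $-\x_i = -\mathrm{ReLU}(\x_i) + \mathrm{ReLU}(-\x_i)$, which is needed on the top block to convert the residual $\x_i$ into $b_i\x_i$. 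In total the hidden width is $O(d)$.

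Concretely, for each column $i$ the FFN input is the vector $(\x_i^\top, \zero^\top, b_i, b_i')^\top$. I would build $\weights_1$ as a stack of blocks so that its action on this column produces the pre-activations $\pm\x_i + M b_i\one$, $\pm\x_i + M b_i'\one$, and $\pm\x_i$, while $\vb_1$ injects the common $-M\one$ shift on the first four blocks. Then $\weights_2$ takes the signed sums of these ReLU outputs and places $b_i\x_i - \x_i$ into the first $d$ output rows and $b_i'\x_i$ into the next $d$ rows, with zeros elsewhere; $\vb_2$ is taken to be zero. Because the same weights are applied to every column, this operates in parallel across all tokens and preserves rows $3,4$ (containing $b_i, b_i'$) via the residual.

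The main delicacy — although not really an obstacle — is verifying exactness rather than mere approximation: I must pick $M$ strictly larger than the entrywise bound $c$ guaranteed by Assumption \ref{asm:bounded}, so that the $b=0$ branch of the identity evaluates to exactly zero rather than a small positive clipping error, and so that $M$ stays finite (keeping the construction's weights bounded). Once $M$ is fixed in this way, the claimed identity for $\TF(\X)$ holds coordinatewise, completing the proof.
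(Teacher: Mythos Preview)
Your proposal is correct and uses essentially the same approach as the paper: gate each coordinate of $\x_i$ by the bit $b_i$ via a pair of ReLUs whose arguments are shifted by a large constant times $b_i$, so that the ReLUs are either both killed ($b_i=0$) or collapse to $x_+ - x_- = x$ ($b_i=1$). The paper's identity is the minor variant
\[
\TF(\x_i) = \x_i + (-Cb_i - \x_i)_+ - (-Cb_i + \x_i)_+,
\]
which produces the correction $(b_i-1)\x_i$ directly on top of the residual, whereas you produce $b_i\x_i$ and then subtract $\x_i$ with two extra ReLUs; this costs you a couple of additional hidden units but is otherwise the same construction, and your handling of the second block (residual $\zero$, output $b_i'\x_i$) is in fact more explicit than the paper's.
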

\begin{proof}
    The layer is the following:
    \begin{equation}
       \TF(\x_i) = x_i + (-Cb_i - \x_i)_+ - (-Cb_i + \x_i)_+
    \end{equation}
    for some large constant $C$. Notice that if $b_i =1$ the output is just $\x_i$. But if $b_i = 0$ then the output is zero. For the second set instead of using the bits $b_i$,  we use the $b_i'$.
\end{proof}
\begin{remark}\label{rem:error-app} Notice that if some $b_i$ is instead of $1$, $1\pm \varepsilon$, $\varepsilon < c/C$. Then the output of the above layer would be \begin{align}
\TF(\x_i) &= \x_i + (-C \pm C\varepsilon - \x_i) -( -C \pm C\varepsilon +\x_i)\\
& = \x_i
\end{align}
while if some $b_i =\pm \varepsilon$ instead of zero, and assuming that $\x_i > c>0$ or $\x_i < -c< 0$ the output would be 
\begin{align}
    \TF(\x_i) &= \x_i  +(-C\varepsilon - \x_i)_+ - (-Cb_i +\x_i)_+\\
    &= \x_i \pm C\varepsilon - \x_i \\
    &\leq c
\end{align}
If $\abs{\x_i} \leq c$, again the output would be less than or equal to $c$, where $c$ can be arbitrarily small.
\end{remark}
Our target is to create these binary tokens, as to perform the filtering. 
  We now describe for clarity how next word prediction is performed. An $d\times N$-dimensional input is given to the transformer architecture, which is up-projected using the embedding layer;  the positional encodings are also added/appended to the input. At the last layer, the last token predicted is appended to the initial input to the transformer architecture.  The only difference of the new input including the positional encodings ( the input for the next iteration ) is the $n+1$-th token. This is a property that our construction maintains, \emph{i.e.,} the positional encodings used are oblivious to the prediction step performed and are always the same for each individual token. We consider encoder-based architectures in all of our lemmas below. 

In the subsequent lemma we  construct an automated process that works along those guidelines.
To do so we assume that the input to the transformer contains the following information:
\begin{enumerate}
    \item An enumeration of the tokens from $1$ to $N$. 
    \item The $\ln$ of the above enumeration.
    \item Zeros for the tokens that correspond to the data points, $1$ for each token that is the $\x_{\text{test}}$ or it is a prediction based on it.
    \item An enumeration $1$ to $L$ for each one of the data points provided. For example, if we are given three sets of data we would have : $1\hdots L \; 1\hdots L\; 1\hdots L$.
    \item Some extra information that is needed to implement a multiplication step as described in Lemma \ref{lem:matrixmul_app} and to move things to the correct place.
\end{enumerate}
The above information can be viewed as part of the encodings that are appended to the input of the transformer. Formally, we have
\begin{lemma}\label{lem:filtering-app}
Consider that a prompt with $n$ in-context samples is given and 
the $\ell-1$-th prediction has been made, and the transformer is to  predict the $\ell$-th one. Assume the input to the transformer is:
\renewcommand{\arraystretch}{1.5}

\begin{equation*}{\small
\arraycolsep=1.5pt
\X = \left[\begin{array}{*{12}c}
    \x_1 & \hdots &\s_1^{\ell-1}& \s_1^{\ell}&\hdots & \s_2^{\ell-1} &\s_2^{\ell} &\hdots & \s_{n}^{L} &\xtest &\hdots &\hat\tb^{\ell-1}\\
    \zero &\hdots &\zero  &\zero &\hdots &\zero &\zero &\hdots &\zero &\zero &\hdots &\zero\\
        \hline
     1 & \hdots & \ell & \ell+1& \hdots &\ell &\ell+1 &\hdots &L+1 &1&\hdots & \ell\\
     1 & \hdots & \ell &\ell+1 &\hdots &\ell &\ell+1&\hdots &L+1 &1&\hdots & \ell\\
     \hline
     1 & \hdots & \ell &\ell+1 &\hdots &L+\ell+1 &L+\ell+2 &\hdots &n(L+1) &n(L+1)+1 &\hdots & \seqlen\\
    \ln(1) & \hdots & \ln(\ell) &\ln(\ell+1) &\hdots &\ln(L+\ell+1) &\ln(L+\ell+2) &\hdots &\ln(n(L+1)) &\ln(n(L+1)+1) &\hdots & \ln(\seqlen)\\
 0 & \hdots & 0 &0&\hdots &0  &0 &\hdots &0 &1 &\hdots & 1\\
1 & \hdots & 1 &1 &\hdots &1&1 &\hdots &1 &1 &\hdots & 1\\ 
\m_1 & \hdots & \m_\ell &\m_{\ell+1} &\hdots &\m_{L+\ell+1} &\m_{L+\ell+2} &\hdots &\m_{n(L+1)} &\m_{n(L+1)+1} &\hdots & \m_{\seqlen}\\
 \end{array}\right]}
\end{equation*}
where $(\hat\tb^i)_{i=1}^{\ell-1}$ denote the first $\ell-1$ recurrent outputs of $\TF$ and for simplicity, let $\seqlen:=n(L+1)+\ell$ denote the total number of tokens. 
Then there exists a transformer $\TF$ consisting of 7 layers that has as output
\renewcommand{\arraystretch}{1.3}
\begin{equation*}
{\small
\arraycolsep=1.2pt
  \TF(\X)  =  \left[\begin{array}{*{12}c}
      \zero  & \hdots &\s_1^{\ell-1}& \zero&\hdots & \s_2^{\ell-1} &\zero &\hdots & \zero &\zero &\hdots &\hat\tb^{\ell-1}\\
      \zero & \hdots &\zero& \s_1^{\ell}&\hdots & \zero &\s_2^{\ell} &\hdots & \zero &\zero &\hdots &\zero\\
      \hline
     0 & \hdots & 1 &0 &\hdots &1 &0 &\hdots &0&0 &\hdots & 1\\
     0 & \hdots & 0 &1 &\hdots &0 &1 &\hdots &0 &0 &\hdots &0 \\
     \hline
     1 & \hdots & \ell &\ell+1 &\hdots &L+\ell+1 &L+\ell+2 &\hdots &n(L+1) &n(L+1)+1 &\hdots & N\\
       \ln(1) & \hdots & \ln(\ell) &\ln(\ell+1) &\hdots &\ln(L+\ell+1) &\ln(L+\ell+2) &\hdots &\ln(n(L+1)) &\ln(n(L+1)+1) &\hdots & \ln(\seqlen)\\
 0 & \hdots & 0 &0&\hdots &0  &0 &\hdots &0 &1 &\hdots & 1\\
1 & \hdots & 1 &1 &\hdots &1&1 &\hdots &1 &1 &\hdots & 1\\ 
\m_1 & \hdots & \m_\ell &\m_{\ell+1} &\hdots &\m_{L+\ell+1} &\m_{L+\ell+2} &\hdots &\m_{n(L+1)} &\m_{n(L+1)+1} &\hdots & \m_{\seqlen}\\
  \end{array}\right] }
\end{equation*}
with error up to $\delta\rmM$, where $\|{\rmM}\|\leq 1$ and $\delta>0$ is a constant that is controlled and can be arbitrarily small.
\end{lemma}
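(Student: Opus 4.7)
The plan is to construct a seven-layer transformer that (i) extracts the current step index $\ell$ from the test-side suffix of the prompt, (ii) turns the two identical layer-index rows in positions $3$ and $4$ of the input into the two 0/1 indicator rows appearing in the output, and (iii) routes the appropriate data tokens into the first and second row blocks via Lemma~\ref{lem:binary-app}.

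\textbf{Extracting $\ell$ and computing $\ell + 1$.} The test-side tokens (those with a $1$ in row $7$) are contiguous at the tail of the prompt, and the last of them uniquely carries the value $\ell$ in both rows $3$ and $4$. A single attention head with high softmax temperature, using $\ln$ of the global position (row $6$) as key/query and rows $3$--$4$ as value, delivers a copy of $\ell$ into two scratch coordinates of every token with additive error $\delta_1$ controllable by the temperature. A one-layer feedforward then produces $\ell + 1$ in an additional scratch coordinate via $\ell + 1 = (\ell + 1)_+ - (-\ell - 1)_+$; by Assumption~\ref{asm:bounded} all scalars live in a bounded range, so this is a constant-width ReLU block.

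\textbf{Constructing the indicator rows.} With $\ell$ and $\ell + 1$ available at every token, a subsequent ReLU layer computes
\begin{align*}
b_i &= \bigl(1 - M(p_i^{(3)} - \ell)_+ - M(\ell - p_i^{(3)})_+\bigr)_+,\\
b_i' &= \bigl(1 - M(p_i^{(4)} - \ell - 1)_+ - M(\ell + 1 - p_i^{(4)})_+\bigr)_+,
\end{align*}
where $p_i^{(3)}, p_i^{(4)}$ are the layer-index scalars originally sitting in rows $3$ and $4$ and $M$ is a large constant. Since $p_i^{(3)}, p_i^{(4)}, \ell$ are integers, $b_i$ (resp.\ $b_i'$) equals $1$ precisely when $p_i^{(3)} = \ell$ (resp.\ $p_i^{(4)} = \ell + 1$) and $0$ otherwise, matching the target indicator rows. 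Two more ``overwrite'' layers (subtract the original $p_i^{(3)}, p_i^{(4)}$ via the residual and add in the new $b_i, b_i'$) place these bits into rows $3$ and $4$.

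\textbf{Filtering and error propagation.} With the bits in rows $3, 4$ and the data still in row $1$ (row $2$ being zero), one application of Lemma~\ref{lem:binary-app} produces $\mathbf{1}\{b_i = 1\}\cdot(\mathrm{data}_i)$ in row $1$ and $\mathbf{1}\{b_i' = 1\}\cdot(\mathrm{data}_i)$ in row $2$, which is exactly the first two row blocks of the target. Rows $5$--$9$ pass through via the residual connection by zeroing the relevant attention and feedforward weights on those coordinates. The main obstacle is controlling the compounding error: the approximate hardmax contributes $\delta_1$ to $\ell$, which enters the indicator formula with multiplier $M$, and Lemma~\ref{lem:binary-app} tolerates bit-errors only up to $c/C$ per Remark~\ref{rem:error-app}. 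By Assumption~\ref{asm:bounded} all signals are bounded, so we first fix the constants $M$ and $C$, then choose the softmax temperature so that $M \delta_1$ (and the subsequent propagation through the ReLU filter) stays below any prescribed $\delta$, yielding the $\delta\rmM$ approximation with $\|\rmM\| \leq 1$.
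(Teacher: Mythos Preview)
Your proposal is correct in spirit and is in fact a genuinely different---and simpler---route to $\ell$ than the paper's. The paper spends four layers to get $\ell$: one layer exploits the identity $\sfm(\ln(i))\propto i$ together with row~8 (all ones) as query to write $\frac{\sum i^2}{\sum i}=\tfrac{2N+1}{3}$ into every column, hence extracting the sequence length $N$; a second layer uses uniform attention to average row~7 and obtain $\ell/N$; then two more layers invoke Lemma~\ref{lem:matrixmul_app} and Lemma~\ref{lem:copy} to multiply $N\cdot(\ell/N)$ and broadcast $\ell$ across the sequence. Your single ``attend to the last token'' head sidesteps all of this. Both routes then converge on the same Steps~3--5 (form $\ell+1$, build the $\{0,1\}$ indicators by a ReLU ``$=\,$'' test, apply Lemma~\ref{lem:binary-app}). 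What your approach buys is brevity and the avoidance of Lemma~\ref{lem:matrixmul_app}; what the paper's route buys is that its first two layers are \emph{exact} (no $\lambda\to\infty$), deferring all approximation to the multiply/copy blackboxes, whereas your extraction of $\ell$ already carries the hardmax error that must then be tracked through the ReLU indicator formula (which you do correctly via Remark~\ref{rem:error-app}).

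There is, however, one concrete bug in your hardmax head. You write ``using $\ln$ of the global position (row~6) as key/query''. If both key and query are $\ln(\cdot)$, the score at query position $j=1$ is $\ln(i)\cdot\ln(1)=0$ for every $i$, so the first column receives uniform attention and picks up the \emph{average} of row~3 rather than $\ell$. This average is $(n(L+1)(L+2)/2+\ell(\ell+1)/2)/N$, which in general is far from $\ell$, so the indicator $b_1$ is corrupted (and in the case $\ell=1$ the token $\x_1$ is erroneously zeroed). The fix is immediate: use row~8 (the all-ones row) as the query and row~6 (or row~5) as the key, so the score is $\ln(i)$ uniformly in $j$ and the hardmax lands on position $N$ for every column. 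With this one-line change your construction goes through, and in fact comfortably fits inside the seven layers claimed.
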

\begin{proof}

\noindent\textbf{Step 1: Extract the sequence length (1 layer).} Let 
\begin{align}
\weights_{\key}= \begin{bmatrix}
    \zero &\zero &0 &0 &0&1 &0 &0 &\zero
\end{bmatrix} 
 \;\;\; \weights_{\query} =  \begin{bmatrix}
    \zero &\zero &0 &0 &0 &0 &0 &0 &1 &\zero 
\end{bmatrix}
\end{align}
and thus 
\begin{equation}
    (\weights_{\key}\X)^\top(\weights_{\query}\X) = \begin{bmatrix}
        \ln(1) &\ln(1) &\hdots &\ln(1)\\
        \ln(2) &\ln(2) &\hdots &\ln(2)\\
        \vdots &\vdots &\ddots &\vdots\\
        \ln(\seqlen) &\ln(\seqlen) &\hdots &\ln(\seqlen)\\
    \end{bmatrix}.
\end{equation}
So after the softmax is applied we have
\begin{equation} \sfm((\weights_{\key}\X)^\top(\weights_{\query}\X)) = \begin{bmatrix}
        \dfrac{1}{\sum_{i=1}^\seqlen i} &\dfrac{1}{\sum_{i=1}^\seqlen i}&\hdots &\dfrac{1}{\sum_{i=1}^\seqlen i}\\
        \dfrac{2}{\sum_{i=1}^\seqlen i} &\dfrac{2}{\sum_{i=1}^\seqlen i} &\hdots &\dfrac{2}{\sum_{i=1}^\seqlen i}\\
        \vdots &\vdots &\ddots &\vdots\\
        \dfrac{\seqlen}{\sum_{i=1}^\seqlen i} &\dfrac{\seqlen}{\sum_{i=1}^\seqlen i}  &\hdots &\dfrac{\seqlen}{\sum_{i=1}^\seqlen i} \\
    \end{bmatrix}.
\end{equation}
We then set the weight value matrix as to zero-out all lines except for one line as follows
\begin{equation}
   \weights_{\val}\X =\begin{bmatrix}
   \zero &\zero & \hdots &\zero \\
    1 & 2 &\hdots &\seqlen\\
 \zero &\zero &\hdots &\zero 
   \end{bmatrix}.
\end{equation}
After adding the residual and using an extra head where the softmax returns identity matrix and the value weight matrix is minus the identity, we get attention output
\renewcommand{\arraystretch}{1.5}
\begin{equation*}
{\small
\arraycolsep = 1.5pt
    \left[\begin{array}{*{12}c}
    \x_1 & \hdots &\s_1^{\ell-1}& \s_1^{\ell}&\hdots & \s_2^{\ell-1} &\s_2^{\ell} &\hdots & \s_{n}^{L} &\xtest &\hdots &\hat\tb^{\ell-1}\\
    \zero &\hdots &\zero  &\zero &\hdots &\zero &\zero &\hdots &\zero &\zero &\hdots &\zero\\
        \hline
     1 & \hdots & \ell & \ell+1& \hdots &\ell &\ell+1 &\hdots &L+1 &1&\hdots & \ell\\
     1 & \hdots & \ell &\ell+1 &\hdots &\ell &\ell+1&\hdots &L+1 &1&\hdots & \ell\\
     \hline
     \dfrac{\sum_{i=1}^\seqlen i^2}{\sum_{i=1}^\seqlen i} & \hdots & \dfrac{\sum_{i=1}^\seqlen i^2}{\sum_{i=1}^\seqlen i} &\dfrac{\sum_{i=1}^\seqlen i^2}{\sum_{i=1}^\seqlen i} &\hdots &\dfrac{\sum_{i=1}^\seqlen i^2}{\sum_{i=1}^\seqlen i}&\dfrac{\sum_{i=1}^\seqlen i^2}{\sum_{i=1}^\seqlen i} &\hdots &\dfrac{\sum_{i=1}^\seqlen i^2}{\sum_{i=1}^\seqlen i} &\dfrac{\sum_{i=1}^\seqlen i^2}{\sum_{i=1}^\seqlen i} &\hdots & \dfrac{\sum_{i=1}^\seqlen i^2}{\sum_{i=1}^\seqlen i}\\
    \ln(1) & \hdots & \ln(\ell) &\ln(\ell+1) &\hdots &\ln(L+\ell+1) &\ln(L+\ell+2) &\hdots &\ln(n(L+1)) &\ln(n(L+1)+1) &\hdots & \ln(\seqlen)\\
 0 & \hdots & 0 &0&\hdots &0  &0 &\hdots &0 &1 &\hdots & 1\\
1 & \hdots & 1 &1 &\hdots &1&1 &\hdots &1 &1 &\hdots & 1\\ 
\m_1 & \hdots & \m_\ell &\m_{\ell+1} &\hdots &\m_{L+\ell+1} &\m_{L+\ell+2} &\hdots &\m_{n(L+1)} &\m_{n(L+1)+1} &\hdots & \m_{\seqlen}\\
 \end{array}\right].}
\end{equation*}
Notice that $\dfrac{\sum_{i=1}^\seqlen i^2}{\sum_{i=1}^\seqlen i} = \dfrac{2\seqlen(\seqlen+1)(2\seqlen+1)}{6\seqlen(\seqlen+1)} = \dfrac{2\seqlen+1}{3}$. We then use the ReLU layer as to multiply with $3/2$ and subtract $1$ from this column. This results to attention output
\begin{equation*}
{\small
\arraycolsep = 1.5pt
    \left[\begin{array}{*{12}c}
    \x_1 & \hdots &\s_1^{\ell-1}& \s_1^{\ell}&\hdots & \s_2^{\ell-1} &\s_2^{\ell} &\hdots & \s_{n}^{L} &\xtest &\hdots &\hat\tb^{\ell-1}\\
    \zero &\hdots &\zero  &\zero &\hdots &\zero &\zero &\hdots &\zero &\zero &\hdots &\zero\\
        \hline
     1 & \hdots & \ell & \ell+1& \hdots &\ell &\ell+1 &\hdots &L+1 &1&\hdots & \ell\\
     1 & \hdots & \ell &\ell+1 &\hdots &\ell &\ell+1&\hdots &L+1 &1&\hdots & \ell\\
     \hline
     \seqlen & \hdots & \seqlen &\seqlen&\hdots &\seqlen&\seqlen &\hdots &\seqlen&\seqlen&\hdots & \seqlen\\
    \ln(1) & \hdots & \ln(\ell) &\ln(\ell+1) &\hdots &\ln(L+\ell+1) &\ln(L+\ell+2) &\hdots &\ln(n(L+1)) &\ln(n(L+1)+1) &\hdots & \ln(\seqlen)\\
 0 & \hdots & 0 &0&\hdots &0  &0 &\hdots &0 &1 &\hdots & 1\\
1 & \hdots & 1 &1 &\hdots &1&1 &\hdots &1 &1 &\hdots & 1\\ 
\m_1 & \hdots & \m_\ell &\m_{\ell+1} &\hdots &\m_{L+\ell+1} &\m_{L+\ell+2} &\hdots &\m_{n(L+1)} &\m_{n(L+1)+1} &\hdots & \m_{\seqlen}\\
 \end{array}\right]}
\end{equation*}
Notice that this step does not involve any error.

\textbf{Step 2: Extract the identifier of the prediction to be made $\ell$ (3 layers). }Now, by setting the key and query weight matrices of the attention as  to just keep the all ones row, we get a matrix that attends equally  to all tokens. Then the value weight matrix keeps the row that contains $\ell$ ones and thus we get the number $\dfrac{\ell}{\seqlen}$, propagated in all the sequence length. Then the attention output is as follows:
\begin{equation}
    \arraycolsep = 1.5pt
    \left[\begin{array}{*{12}c}
    \x_1 & \hdots &\s_1^{\ell-1}& \s_1^{\ell}&\hdots & \s_2^{\ell-1} &\s_2^{\ell} &\hdots & \s_{n}^{L} &\xtest &\hdots &\hat\tb^{\ell-1}\\
    \zero &\hdots &\zero  &\zero &\hdots &\zero &\zero &\hdots &\zero &\zero &\hdots &\zero\\
        \hline
     1 & \hdots & \ell & \ell+1& \hdots &\ell &\ell+1 &\hdots &L+1 &1&\hdots & \ell\\
     1 & \hdots & \ell &\ell+1 &\hdots &\ell &\ell+1&\hdots &L+1 &1&\hdots & \ell\\
     \hline
     \seqlen & \hdots & \seqlen &\seqlen&\hdots &\seqlen&\seqlen &\hdots &\seqlen&\seqlen&\hdots & \seqlen\\
    \dfrac{\ell}{\seqlen} & \hdots & \dfrac{\ell}{\seqlen} &\dfrac{\ell}{\seqlen} &\hdots &\dfrac{\ell}{\seqlen} &\dfrac{\ell}{\seqlen} &\hdots &\dfrac{\ell}{\seqlen} &\dfrac{\ell}{\seqlen} &\hdots &\dfrac{\ell}{\seqlen}\\
 0 & \hdots & 0 &0&\hdots &0  &0 &\hdots &0 &1 &\hdots & 1\\
1 & \hdots & 1 &1 &\hdots &1&1 &\hdots &1 &1 &\hdots & 1\\ 
\m_1 & \hdots & \m_\ell &\m_{\ell+1} &\hdots &\m_{L+\ell+1} &\m_{L+\ell+2} &\hdots &\m_{n(L+1)} &\m_{n(L+1)+1} &\hdots & \m_{\seqlen}\\
 \end{array}\right].
\end{equation}
So far this step does not involve any error.
As described in Lemma \ref{lem:matrixmul_app} to implement multiplication of two values up to any error $\eps$, we need 1) have the two numbers to be multiplied next to each other and an extra structure (some constants, which we consider that are encoded in the last rows of the matrix $\m_i$) and 2) the corresponding structure presented in Lemma \ref{lem:matrixmul_app} . So we need to make the following transformation in the rows containing $\seqlen$ and $\dfrac{\ell}{\seqlen}$
\begin{align}
    \begin{bmatrix}
        \seqlen & \seqlen &\hdots &\seqlen\\
        \dfrac{\ell}{\seqlen} &\dfrac{\ell}{\seqlen} &\hdots &\dfrac{\ell}{\seqlen}
    \end{bmatrix} &\to \begin{bmatrix}
        \seqlen & \dfrac{\ell}{\seqlen} &\hdots &\seqlen\\
        \dfrac{\ell}{\seqlen} &\dfrac{\ell}{\seqlen} &\hdots &\dfrac{\ell}{\seqlen}
    \end{bmatrix}\\
    &\to 
    \begin{bmatrix}
         \ell& *&\hdots &*\\
        * &* &\hdots &*
    \end{bmatrix}\\
    &\to \begin{bmatrix}
        \ell& \ell&\hdots &\ell\\
         * &* &\hdots &*
    \end{bmatrix}
\end{align}
where $*$ denotes inconsequential values.
For the first step we assume that we have the necessary binary representations \footnote{the size that we need will be $2\log \seqlen_{\text{max}}+1$, where $\seqlen_{\text{max}}$ is the maximum sequence length} and then we use Lemma \ref{lem:copy} which shows how we can perform the operation of copy/paste. This step involves an error and the current output would be $\tilde{\X}_1 = \X^* +\varepsilon_1\M_1$, where $\varepsilon_1$ is controlled and we will determine it in the sequence, $\|\M_1\|\leq 1$ and $\X_1^*$ is the desired output. For the second step now we use Lemma \ref{lem:matrixmul_app} to perform the multiplication, this will affect some of the other values. 
To analyze the error of this step, we know that given $\X^*_1$ we will get the desired output $\X_2^*+ \varepsilon_2\M_2$, where $\|\M_2\|\leq 1$ and $\varepsilon_2$ is to be determined. However, are given $\tilde{X}_1$ in the multiplication procedure\footnote{Note that the true error is even smaller since the operation performed only affects one row.} which results in the following output $\tilde{\X}_2 = (\tilde{\X}_1)^\top \tilde{\X}_1 +\varepsilon_2\M_2 = {\X^*_1}^\top\X_1^* + \varepsilon_1\M_1^\top\X^*_1 + \varepsilon_1\M_1^\top\M_1 + \varepsilon_2\M_2$. 
Thus by choosing $\varepsilon_1, \varepsilon_2$ to be of order $\mathcal{O}(\delta)$we get that $\tilde{\X}_2 = \X^*_2 + \frac{\delta}{C}\|M\|$ for some $C$ chosen to be large enough.
Then for the last step consider the follow (sub-)rows of the matrix $\X$
\begin{equation}
    \begin{bmatrix}
        \ell & * &* &\hdots &*\\
        \emb_1 &\emb_1 &\emb_1 &\hdots &\emb_1\\
        \emb_1 &\emb_2 &\emb_3 &\hdots &\emb_\seqlen
    \end{bmatrix}
\end{equation}
where $\emb_i$ is the binary representation of position $i$. By choosing $\weights_\key,\weights_\query$ as to
\begin{equation}
    \weights_\key\X = \begin{bmatrix}
        \emb_1 &\emb_2 &\hdots &\emb_\seqlen
    \end{bmatrix}, \;\;\;\weights_\query\X = \begin{bmatrix}
        \emb_1 &\emb_1 &\hdots &\emb_1 
    \end{bmatrix} 
\end{equation}
and consider $\weights_\val\X = \begin{bmatrix}
        \ell & * & \hdots &*\\
        \zero &\zero &\hdots &\zero 
    \end{bmatrix}$ we have that 
    \begin{align*}
       \attn(\X) &= \X+ \weights_\val\X \sfm((\weights_\key\X)^\top\weights_\query\X)\\
       &= \X + \begin{bmatrix}
        \ell & * & \hdots &*\\
        \zero &\zero &\hdots &\zero 
    \end{bmatrix} \sfm\left(\begin{bmatrix}
        \emb_1^\top\emb_1 &\emb_1^\top\emb_1 &\hdots &\emb_1^\top\emb_1\\
         \emb_1^\top\emb_2 &\emb_1^\top\emb_2 & \hdots &\emb_1^\top\emb_2 \\
         \vdots &\vdots &\ddots &\vdots\\
         \emb_1^\top\emb_\seqlen &\emb_1^\top\emb_\seqlen &\hdots &\emb_1^\top\emb_\seqlen
    \end{bmatrix}\right)\\
    &= \X+ \begin{bmatrix}
        \ell & * & \hdots &*\\
        \zero &\zero &\hdots &\zero 
    \end{bmatrix}   \begin{bmatrix}
        1 & 1 &\hdots &1 \\
        0 & 0 &\hdots &0\\
        \vdots &\vdots &\hdots &\vdots\\
        0 & 0 &\hdots & 0
    \end{bmatrix} +\varepsilon_3 \rmM_3
    \end{align*}
    By subtracting one identity head for the first that we focus on as described in Lemma \ref{lem:copy} we have that $\attn(\X)$ results in the desired matrix. We output this result and overwrite $\ell/\seqlen$, thus we have
    \begin{equation}
    \arraycolsep = 1.5pt
    \left[\begin{array}{*{12}c}
    \x_1 & \hdots &\s_1^{\ell-1}& \s_1^{\ell}&\hdots & \s_2^{\ell-1} &\s_2^{\ell} &\hdots & \s_{n}^{L} &\xtest &\hdots &\hat\tb^{\ell-1}\\
    \zero &\hdots &\zero  &\zero &\hdots &\zero &\zero &\hdots &\zero &\zero &\hdots &\zero\\
        \hline
     1 & \hdots & \ell & \ell+1& \hdots &\ell &\ell+1 &\hdots &L+1 &1&\hdots & \ell\\
     1 & \hdots & \ell &\ell+1 &\hdots &\ell &\ell+1&\hdots &L+1 &1&\hdots & \ell\\
     \hline
     \seqlen & \hdots & \seqlen &\seqlen&\hdots &\seqlen&\seqlen &\hdots &\seqlen&\seqlen&\hdots & \seqlen\\
   \ell & \hdots & \ell &\ell &\hdots &\ell &\ell &\hdots &\ell &\ell &\hdots &\ell\\
 0 & \hdots & 0 &0&\hdots &0  &0 &\hdots &0 &1 &\hdots & 1\\
1 & \hdots & 1 &1 &\hdots &1&1 &\hdots &1 &1 &\hdots & 1\\ 
\m_1 & \hdots & \m_\ell &\m_{\ell+1} &\hdots &\m_{L+\ell+1} &\m_{L+\ell+2} &\hdots &\m_{n(L+1)} &\m_{n(L+1)+1} &\hdots & \m_{\seqlen}\\
 \end{array}\right]
\end{equation}

The new error introduced by this operation is again controlled as to be $\varepsilon_3\M_3 = \dfrac{\delta}{C}$, with $\|\M_3\|\leq 1$.
    \paragraph{Step 3: Create $\ell+1$ (0 layer). } We first copy the row with $\ell$ to the row with $N$, this can trivially be done with a ReLU layer that outputs zero everywhere else except for the row of $N$s that output $(\ell)_{+} - (N)_{+}$ to account also for the residual. We now use one of the bias terms (notice that $\ell$ is always positive) and set to one in one of the two rows that contain the $\ell$, again we account for the residual as before; everything else remains unchanged. Thus, we have
    \begin{equation}
    \arraycolsep = 1.5pt
    \left[\begin{array}{*{12}c}
    \x_1 & \hdots &\s_1^{\ell-1}& \s_1^{\ell}&\hdots & \s_2^{\ell-1} &\s_2^{\ell} &\hdots & \s_{n}^{L} &\xtest &\hdots &\hat\tb^{\ell-1}\\
    \zero &\hdots &\zero  &\zero &\hdots &\zero &\zero &\hdots &\zero &\zero &\hdots &\zero\\
        \hline
     1 & \hdots & \ell & \ell+1& \hdots &\ell &\ell+1 &\hdots &L+1 &1&\hdots & \ell\\
     1 & \hdots & \ell &\ell+1 &\hdots &\ell &\ell+1&\hdots &L+1 &1&\hdots & \ell\\
     \hline
     \ell+1 & \hdots & \ell+1 &\ell+1&\hdots &\ell+1 &\ell+1 &\hdots &\ell+1 &\ell+1 &\hdots &\ell+1\\
    \ell & \hdots & \ell &\ell &\hdots &\ell &\ell &\hdots &\ell &\ell &\hdots &\ell\\
 0 & \hdots & 0 &0&\hdots &0  &0 &\hdots &0 &1 &\hdots & 1\\
1 & \hdots & 1 &1 &\hdots &1&1 &\hdots &1 &1 &\hdots & 1\\ 
\m_1 & \hdots & \m_\ell &\m_{\ell+1} &\hdots &\m_{L+\ell+1} &\m_{L+\ell+2} &\hdots &\m_{n(L+1)} &\m_{n(L+1)+1} &\hdots & \m_{\seqlen}\\
 \end{array}\right]
\end{equation}
This operation can collectively be implemented (add the bias + copy the row) in the ReLU layer of the previous transformer layer that was not used in the previous step.
    \paragraph{Step 4: Create the binary bits (2 layers). }We will now use the information extracted in the previous steps, to create the binary indicators/bit to identify which tokens we want to filter. This can be easily implemented with one layer of transformer and especially the  ReLU part of it. Notice that if we subtract the row that contains the tokens that have already been predicted, \emph{i.e.,} $[\ell~\ell\hdots~\ell]$ from the row that contains $[1~2\hdots L~1~2\hdots L \hdots L]$ we will get zero only in the positions that we want to filter out and some non-zero quantity in the rest. This is trivially implemented with one ReLU layer. So, we need to implement an if..then type of operation. Basically, if the quantity at hand is zero we want to set the bit to one, while if it non-zero to set it to be zero. This can be implemented with the following ReLU part of a transformer layer 
    \begin{equation}
     \TF(x_i) = 1 - (x_i)_{+}-(-x_i)_{+} + (x_i -1 )_{+} +(-x_i -1)_{+} - ((x_i)_{+} - (-x_i)_{+})
    \end{equation}
    the last two terms are to account for the residual. Again the rest of the rows do not change and are zeroed-out. 
    
    \textbf{Step 5: Implement the filtering (1 layer).}  We now apply Lemma \ref{lem:binary-app} and our proof is completed. In this step as in Lemma \ref{lem:binary-app}, the error remains of the order of the error of the previous step. Thus, by fine-tuning the constants appropriately, depending on 1) the bound on the input as of Assumption \ref{asm:bounded}, the targeted error $\delta$ and the constant used in Lemma \ref{lem:binary-app} we achieve that the error has the target upped bound.
\end{proof}

\begin{theorem}[Theorem~\ref{thm:MLP} restated] Consider a prompt $\pmt_n(f)$ generated from an $L$-layer MLP $f(\cdot)$ as described in Definition \ref{MLP assump}, and assume given test example $(\xtest,\tb^1_{\text{\tiny test}}, \dots\tb^L_{\text{\tiny test}})$. 
For any resolution $\eps>0$, there exists $\delta=\delta(\eps)$, iteration choice $T={\cal{O}}(\kappa_{\max}^2\log(1/\eps))$, and a backend transformer construction $\TFBE$ such that the concatenated transformer $\TF=\TFLR \circ \TFBE$ implements the following: Let $(\tht^{i})_{i=1}^{\ell-1}$ denote the first $\ell-1$ $\COT$ outputs of $\TF$ and set $\pb[\ell]=(\pb_n(f),\xtest,\tht^{1}\dots\tht^{\ell-1})$. At step $\ell$, $\TF$ implements
\begin{enumerate}
\item \textbf{Filtering.} Define the filtered prompt with input/output features of layer $\ell$,
\[
\arraycolsep = 1pt
\pbf_n=\left(\begin{array}{*{10}c}
\hdots \zero,&\tb^{\ell-1}_{1},&\zero&\hdots&\zero,&\tb^{\ell-1}_n, &\zero&\hdots&\zero,&\tht^{\ell-1}\\
\hdots\zero,&\zero,& \tb^{\ell}_{1}&\dots&\zero,&\zero,& \tb^{\ell}_n&\dots&\zero,&\zero\end{array}\right).
\]
There exists a fixed projection matrix $\bPi$ that applies individually on tokens such that the backend output obeys $\tn{\bPi(\TFBE(\pb[\ell]))-\pbf_n}\leq \delta$. 
\item \textbf{Gradient descent.} The combined model obeys $\tn{\TF(\pb[\ell])-\tb_{\text{\tiny test}}^{\ell}}\leq {\ell\cdot \eps}/{L}$.
\end{enumerate}
$\TFBE$ has constant number of layers independent of $T$ and $n$. Consequently, after $L$ rounds of $\COT$, \TF outputs $f(\xtest)$ up to $\eps$ accuracy.
\end{theorem}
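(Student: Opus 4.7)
The plan is to construct $\TFBE$ as a composition of constant-depth blocks that (i) augment the input with the positional/structural encodings required by Lemma~\ref{lem:filtering-app}, (ii) implement the filtering step via that lemma, and (iii) invert the leaky-ReLU and reshape the filtered tokens into the $(x,y)$-pair format required by the linear regression oracle $\TFLR$. Because all structural encodings depend only on token position and not on token content, the same $\TFBE$ can be invoked at every CoT round $\ell$, and combined with $\TFLR$ it realizes one step of prediction per round. The final error bound then follows by induction on $\ell$, accumulating (a) the filtering error $\delta$, (b) the gradient-descent residual from $\TFLR$, and (c) the propagated error from prior predictions used as query tokens.

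First, I would describe the embedding layer, which appends to each token the encodings enumerated in Section~\ref{ss:filtering}: a token-position index, its logarithm, a per-sample layer-index counter $1,\dots,L+1$, a 0/1 flag distinguishing in-context samples from the growing test block, and the auxiliary vectors $\m_i$ needed by Lemma~\ref{lem:matrixmul_app}. This puts the input of CoT round $\ell$ in exactly the format of Lemma~\ref{lem:filtering-app}. Invoking that lemma gives a $7$-layer block whose output agrees with $\pbf_n$ up to an additive error $\delta\rmM$ with $\|\rmM\|\leq 1$ and $\delta$ tunable. A fixed projection $\bPi$ discards the encoding rows and keeps the two "data rows," proving part~(1) of the theorem.

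Next, I would recast part~(2) as a linear-regression problem on the filtered prompt. Since $\phi$ is Leaky-ReLU with slope $\alpha\in(0,1]$, it is invertible coordinatewise by a two-piece linear map $\phi^{-1}(z)=(z)_+ + \alpha^{-1}(-z)_+$, implementable by a single ReLU layer in $\TFBE$; applying $\phi^{-1}$ to the $\tb_i^{\ell}$ rows transforms the filtered prompt into pairs $(\tb_i^{\ell-1},\W_\ell\tb_i^{\ell-1})$. Each of the $d_\ell$ output coordinates is then a linear functional of $\tb_i^{\ell-1}$, so Assumption~\ref{assume oracle} applies rowwise (in parallel across output coordinates using independent heads / read-off directions). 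Standard convergence of gradient descent on a quadratic with condition number $\kappa_\ell\leq\kappa_{\max}$ yields, after $T=\mathcal{O}(\kappa_{\max}^2\log(1/\eps))$ steps, a weight estimate whose prediction error at $\tht^{\ell-1}$ is at most a small fraction of $\eps/L$, plus a term linear in the perturbation $\delta$ of the input prompt and a term linear in the query-token error $\|\tht^{\ell-1}-\tb_{\text{\tiny test}}^{\ell-1}\|$.

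The main obstacle, and the part that deserves careful bookkeeping, is the inductive error propagation. At round $\ell$ the oracle operates on a $\delta$-perturbed prompt and is queried at a point $\tht^{\ell-1}$ that itself carries the accumulated error $\|\tht^{\ell-1}-\tb_{\text{\tiny test}}^{\ell-1}\|\leq (\ell-1)\eps/L$ from previous CoT rounds. Because the predictor $\w\mapsto\w^\top x$ and the inverse leaky-ReLU are Lipschitz, and because $\kappa_{\max}$ controls the sensitivity of the least-squares solution to its inputs, choosing $\delta=\delta(\eps)$ polynomially small in $\eps$, $L$, and $\kappa_{\max}$ is sufficient to guarantee an incremental error of at most $\eps/L$ per round. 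Closing this induction gives $\|\TF(\pb[\ell])-\tb_{\text{\tiny test}}^\ell\|\leq \ell\eps/L$, and in particular $\|\TF(\pb[L])-f(\xtest)\|\leq\eps$ after $L$ rounds, as claimed. The depth of $\TFBE$ is the sum of constants from Lemma~\ref{lem:filtering-app}, the $\phi^{-1}$ block, and $\bPi$, all of which are independent of $n$ and $T$.
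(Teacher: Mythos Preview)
Your proposal is correct and follows essentially the same route as the paper: invoke Lemma~\ref{lem:filtering-app} for part~(1), invert the leaky-ReLU with a single ReLU layer, feed the resulting linear-regression instances to the oracle $\TFLR$, and close part~(2) by an induction that accumulates one $\eps/L$ increment per CoT round with $\delta$ chosen small enough. One small slip: your formula for $\phi^{-1}$ should read $\phi^{-1}(z)=(z)_+-\alpha^{-1}(-z)_+$ (a sign flip on the negative branch), matching the paper's construction $\sigma(x)=(x)_+-\tfrac{1}{\alpha}(-x)_+$.
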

\begin{proof}
    We apply Lemma \ref{lem:filtering-app} from which it is clear that there exists a projection such that the result stated in (1. Filtering) holds, and it is independent to $T$, $n$ and $\ell$. Next we turn to prove (2. Gradient descent). 
    In Definition~\ref{MLP assump}, we assume that the network's activation function is leaky-ReLU, \emph{i.e.,} 
\begin{equation}
     \phi(x) =   \left\{\begin{matrix}
 x,& \text{ if }x\geq 0 \\ 
 \alpha x,& \text{ otherwise.} 
\end{matrix}\right.
    \end{equation}
    Thus, as a first step we construct the inverse of leaky-ReLU and apply it in the second row of $\p_n^{\text{filter}}$ where the inverse of leaky-ReLU is 
    \begin{equation}
         \phi^{-1}(y) =   \left\{\begin{matrix}
 y,& \text{ if }y\geq 0 \\ 
y/\alpha,& \text{ otherwise.} 
\end{matrix}\right.
    \end{equation}
    This can be implemented with the following activation function (denoted by $\sigma(\cdot)$) using ReLUs:
    \begin{equation}
        \sigma(x) = (x)_+ - 1/\alpha (-x)_+.
    \end{equation}
    After it, it remains $\TFLR$ to solve linear regression problems. Taking $\ell$th layer, first neuron prediction as an example, and letting $\x'_i:=\tb_i^{\ell-1}$, $y'_i:=\phi^{-1}(\tb_i^{\ell}[0])$ and $\w=\W_\ell[0]$, linear regression has form of $y'_i=\w^\top\x'_i$ for $i\in[n]$. Notice that the extra zeros do not contribute in the update performed by gradient descent, and after gradient descent has been performed, we apply back the leaky ReLU. Then 
    following  Assumption~\ref{assume oracle}, since we assume $\TFLR$ performs the same as gradient descent optimizer, given matrix condition as described in Definition~\ref{MLP assump}, after running $T$ iterations of gradient descend on the linear regression problem and considering each layer prediction with resolution $\eps/L$, we can get that $\|\TFLR(\pmt[\ell])-\bar\tb^\ell\|\leq\eps/L$, where $\bar\tb^\ell=\phi(\W_\ell\hat\tb^{\ell-1})$ is the correct prediction if taking $\hat\tb^{\ell-1}$ as input. Then we have 
    \[
    \|\TFLR(\pmt[\ell]-\tbt^\ell)\|\leq\|\TFLR(\pmt[\ell])-\bar\tb^\ell\|+\|\W_\ell(\hat\tb^{\ell-1}-\tbt^{\ell-1})\|\lesssim\eps/L+\|\TFLR(\pmt[\ell-1]-\tbt^{\ell-1})\|.
    \]
    Let $\TFLR(\pmt[0])$ returns $\xtest$ and therefore $\|\TFLR(\pmt[0])-\xtest\|=0$. Combing results in that  $\|\TFLR(\pmt[\ell]-\tbt^\ell)\|\lesssim\ell\cdot\eps/L$. 
    Since from Lemma \ref{lem:filtering-app} we have that we can choose $\delta$ to be arbitrary. Let $\delta = \eps/L$, where $L$ is the total predictions we will make. Then it will result in $\|\TF(\pmt[\ell]-\tbt^\ell)\|\lesssim\ell\cdot\eps/L$, which completes the proof.
    
    
\end{proof}

\section{Experimental Details}
In this section, we provide the implementation details of our experiments. 

\subsection{Model Evaluation} Recap the same setting as in Section~\ref{sec:train} and assume we have pretrained models with parameters $\hat\bt^{\EXP}$ and $\hat\bt^{\COT}$. Next we make predictions following Section~\ref{subsec cot}. Letting $\ell(\cdot,\cdot):\Yc\times\Yc\to\R$ be loss function, we can define test risks as follows.
\begin{align*}
    \Lc^{\EXP}(n)=\E_{(\x_i)_{i=1}^n,(f_\ell)_{\ell=1}^L}\left[\ell(\hat\y_n,f(\x_n))\right]~~\text{where}~~\hat\y_n=\TF(\pmt_n(f),\x_n;\hat\bt^{\EXP})
\end{align*}
and
\begin{align*}
    \Lc^{\COT}(n)=\E_{(\x_i)_{i=1}^n,(f_\ell)_{\ell=1}^L}\left[\ell(\hat\y_n,f(\x_n))\right]~~\text{where}~~\hat\y_n=\TF(\pmt_n(f),\x_n,\hat\tb^1\cdots,\hat\tb^{L-1};\hat\bt^{\COT}).
\end{align*}
Here, we use $\Lc(n)$ to define the test risk when given prompt with $n$ in-context samples. Then, results shown in Figures~\ref{fig:smallk}\&\ref{fig:largek}\&\ref{fig:diff gpt}\&\ref{fig:diff k}\&\ref{fig:d5} are test risks $\Lc(n)$ given $n\in[N]$. Following model training and evaluation, we can see that once loss function is the same for both training and predicting, $\EXP$ (as well as $\ICL$) accepts training risk $\Lc_{\text{\tiny train}}^{\EXP}=\frac{1}{N}\sum_{n=1}^N\Lc^{\EXP}(n)$. 

\subsection{Implementation} All the transformer experiments use the GPT-2 model \citep{radford2019language} and our codebase is based on prior works~\citep{garg2022can,wolf2019huggingface}. Specifically, the model is trained using the \texttt{Adam} optimizer with learning rate $0.0001$ and batch size $64$, and we train $500$k iterations in total for all $\ICL$, $\EXP$ and $\COT$ methods. Each iteration randomly samples inputs $\x$s and functions $f$s. We also apply curriculum learning over the prompt $n$ as \cite{garg2022can} did for 2-layer random MLPs (Sec.~\ref{sec:2nn}). For both training and testing, we use the squared error as the loss function, \ie $\ell(\hat\y,\y)=\|\hat\y-\y\|^2$ (or $\ell(\hat y,y)=(\hat y-y)^2$ for scalar).
\section{Additional Experimental Results}

\begin{figure}[t]
\centering
\subfigure[Test with noisy data]{
    \includegraphics[height=.25\columnwidth]{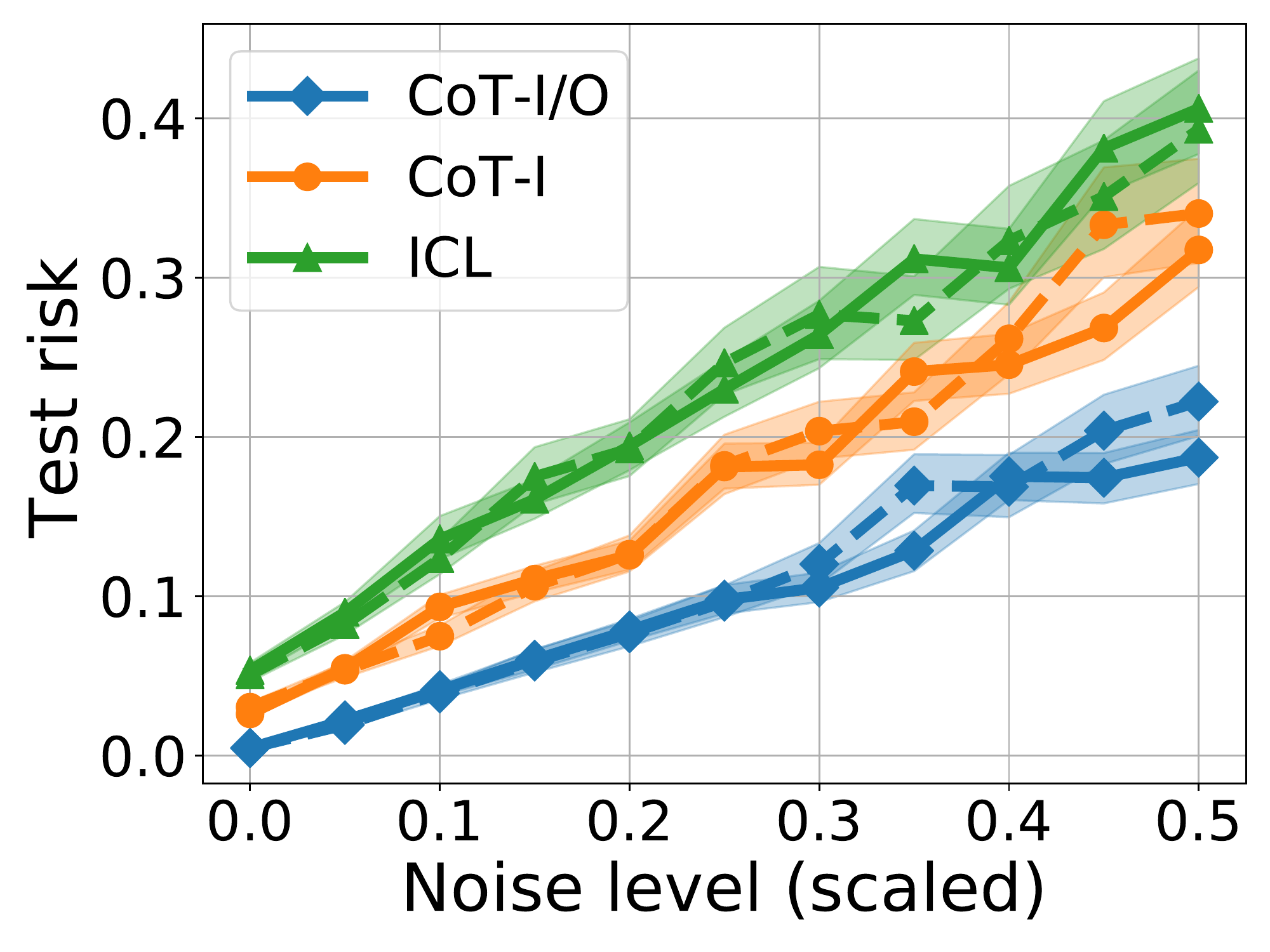}
    \label{fig:robust noise}
}
\hspace{-1mm}
\subfigure[Test with different $k$]{
    \includegraphics[height=.25\columnwidth,trim={1.5cm 0 0 0},clip]{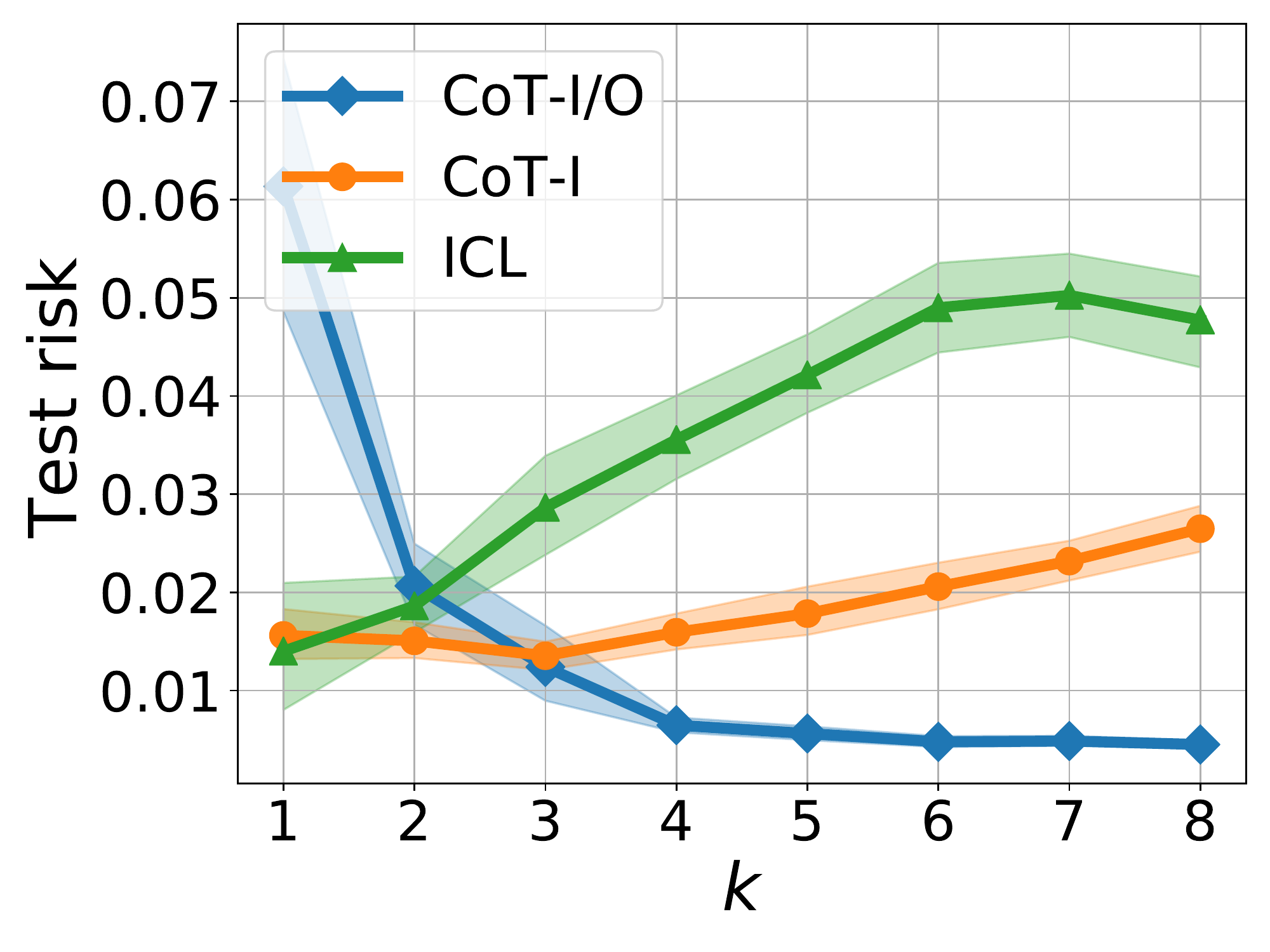}
    \label{fig:robust diff k}
}
\hspace{-1mm}
\subfigure[Test with different $d$]{
    \includegraphics[height=.25\columnwidth,trim={1.5cm 0 0 0},clip]{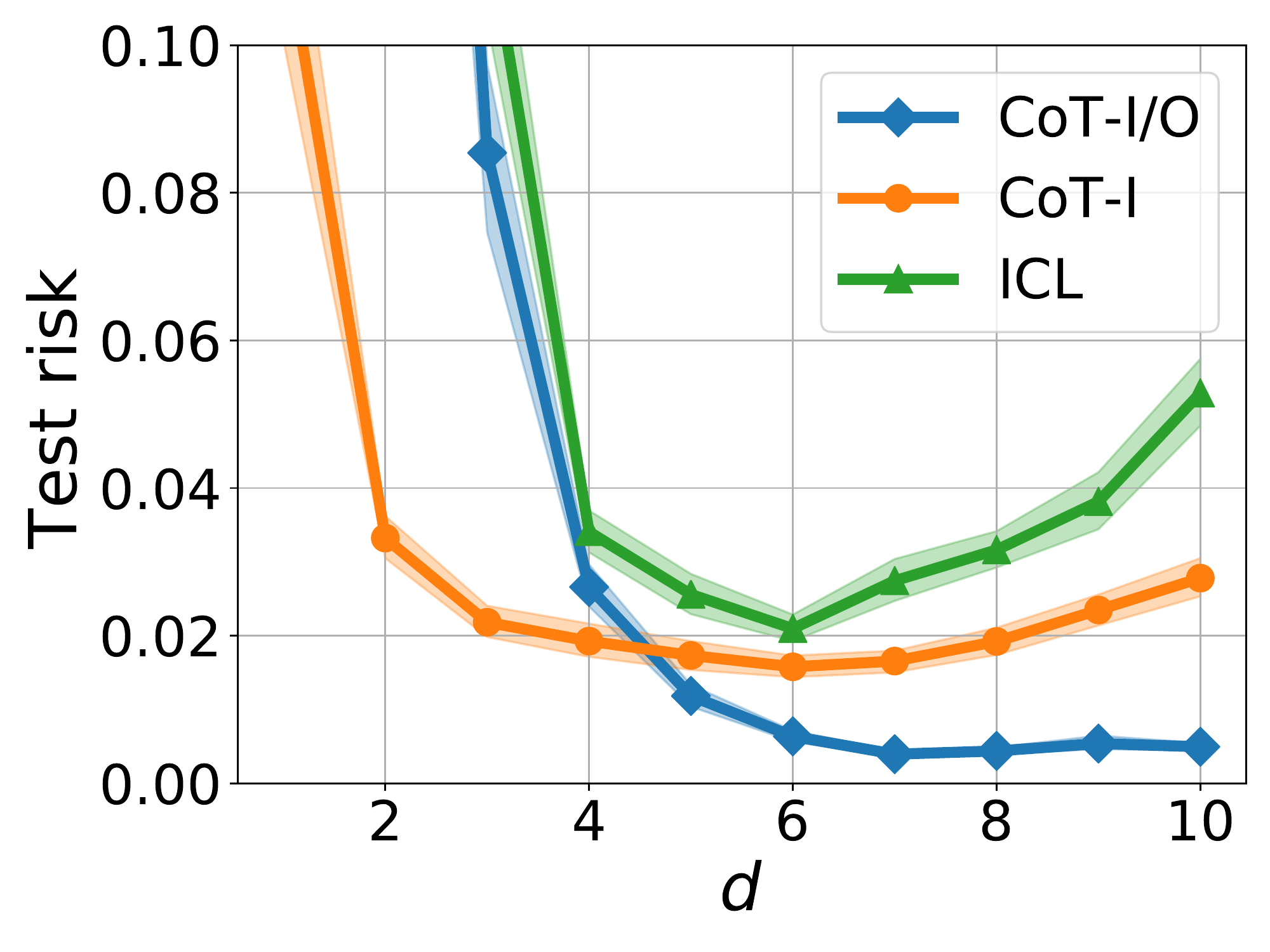}
    \label{fig:robust diff d}
}
\caption{We implement robustness experiments under different distribution shift levels. In Fig.~\ref{fig:robust noise} we add noise to the label $y$ (solid) or input features $\x$ (dashed). 
In Fig.~\ref{fig:robust diff k}, we in-context learn an MLP with $k\in[8]$ hidden nodes whereas transformer is trained for MLPs with $8$ hidden nodes. 
In Fig.~\ref{fig:robust diff k}, we consider misspecification of input dimension: TF is trained with $d=10$ but we feed a neural net with $d\leq10$. 
}\label{fig:robust}

\end{figure}

\subsection{Out-of-distribution Experiments} \label{sec:robust}
While our primary focus lies in in-distribution scenarios, wherein the test examples are presumed to follow the same distribution as the training data, this subsection extends our analysis to out-of-distribution (OOD) settings. 

To provide a clearer understanding of our model's behavior under distribution shifts and to quantitatively assess the impact on test risk, we conduct experiments with varying levels of distribution shift. Specifically, we evaluate the test risk when the prompt is supplemented with $100$ in-context examples. The results of these experiments are depicted in Figure~\ref{fig:robust}, which serves to illustrate how our model's performance is affected under these OOD conditions.
%
In Fig.~\ref{fig:robust noise}, we analyze noisy in-context samples during testing. The solid and dashed curves represent the test risks, corresponding to the noisy in-context samples whose (input, output) takes the form of either $(\x,y+\text{noise})$ or $(\x+\text{noise},y)$, respectively. The results indicate that CoT exhibits greater robustness compared to ICL, and the test risks increase linearly with the noise level, with attributed to the randomized MLPs setting. Additionally, in Figs.~\ref{fig:robust diff k}\&\ref{fig:robust diff d}, we instead explore out-of-distribution test tasks where test MLPs differ in $(d,k)$ from the training phase. For both subfigures, we firstly train small GPT-2 using 2-layer MLPs with $d=10,k=8$. In Fig.~\ref{fig:robust diff k}, we fix $d=10$ and vary $k$ from $1$ to $8$, whereas in Fig.~\ref{fig:robust diff d}, we fix $k=8$ and vary $d$ from $1$ to $10$. In both instances, the findings reveal that CoT's performance remains almost consistent when $k\geq4$ or $d\geq6$, and ICL is unable to surpass it. The improved performance of ICL with smaller values of $d$ or $k$ again reinforces our central assertion that ICL requires $O(dk)$ samples for in-context learning of the 2-layer random MLP, and reducing either $d$ or $k$ helps in improving the performance. Given that we employ the ReLU activation function, smaller values of $d$ or $k$ can lead to significant bias in the intermediate feature. Consequently, CoT cannot derive substantial benefits from this scenario, resulting in a decline in performance. 
\begin{figure}[t]
\centering
 \centering
   \subfigure[Small GPT-2, $d=10$, $k=4$]{
        \includegraphics[scale=.25]{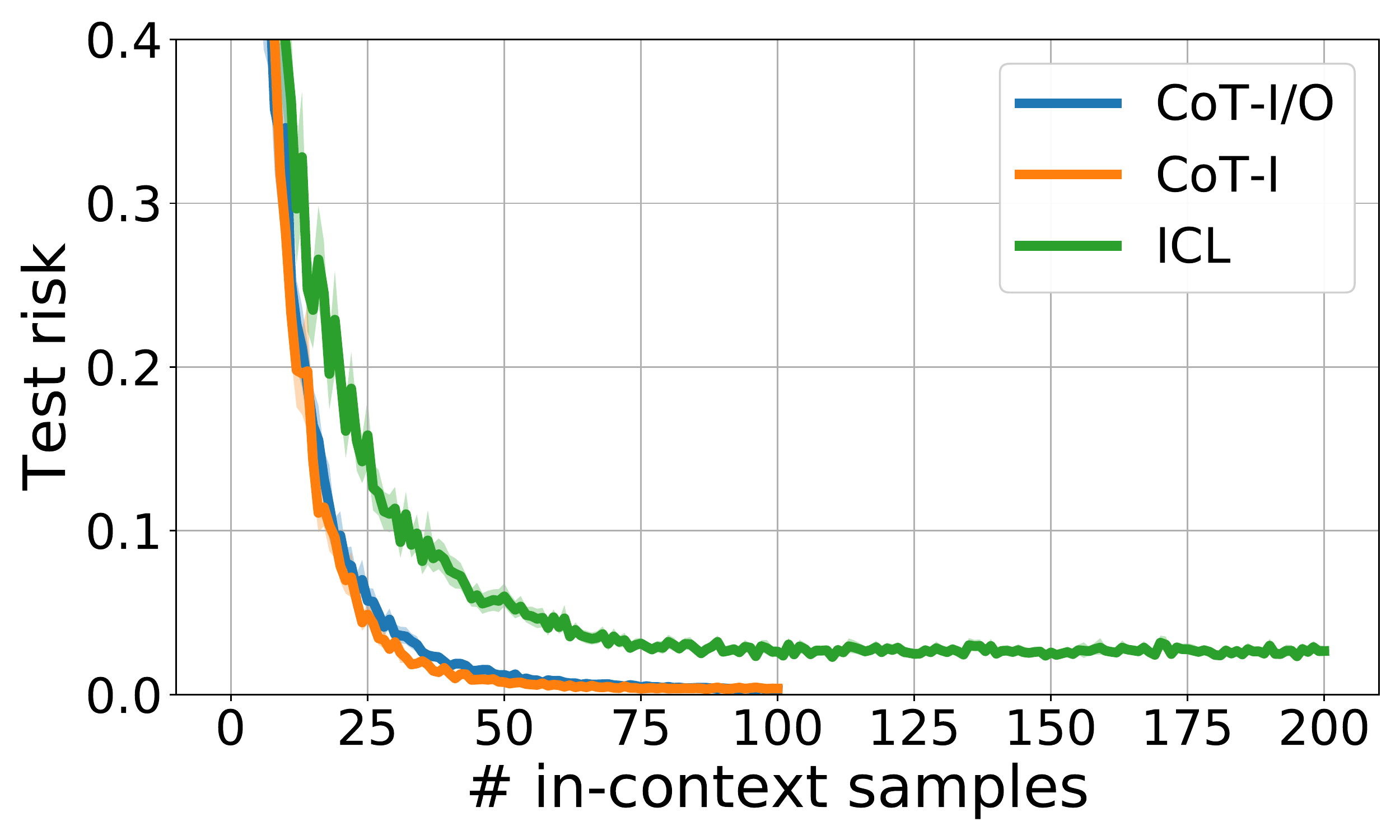}
        \label{fig:express small}
   }
    \hspace{10pt}
   \subfigure[Standard GPT-2, $d=10$, $k=4$]{
        \includegraphics[scale=.25]{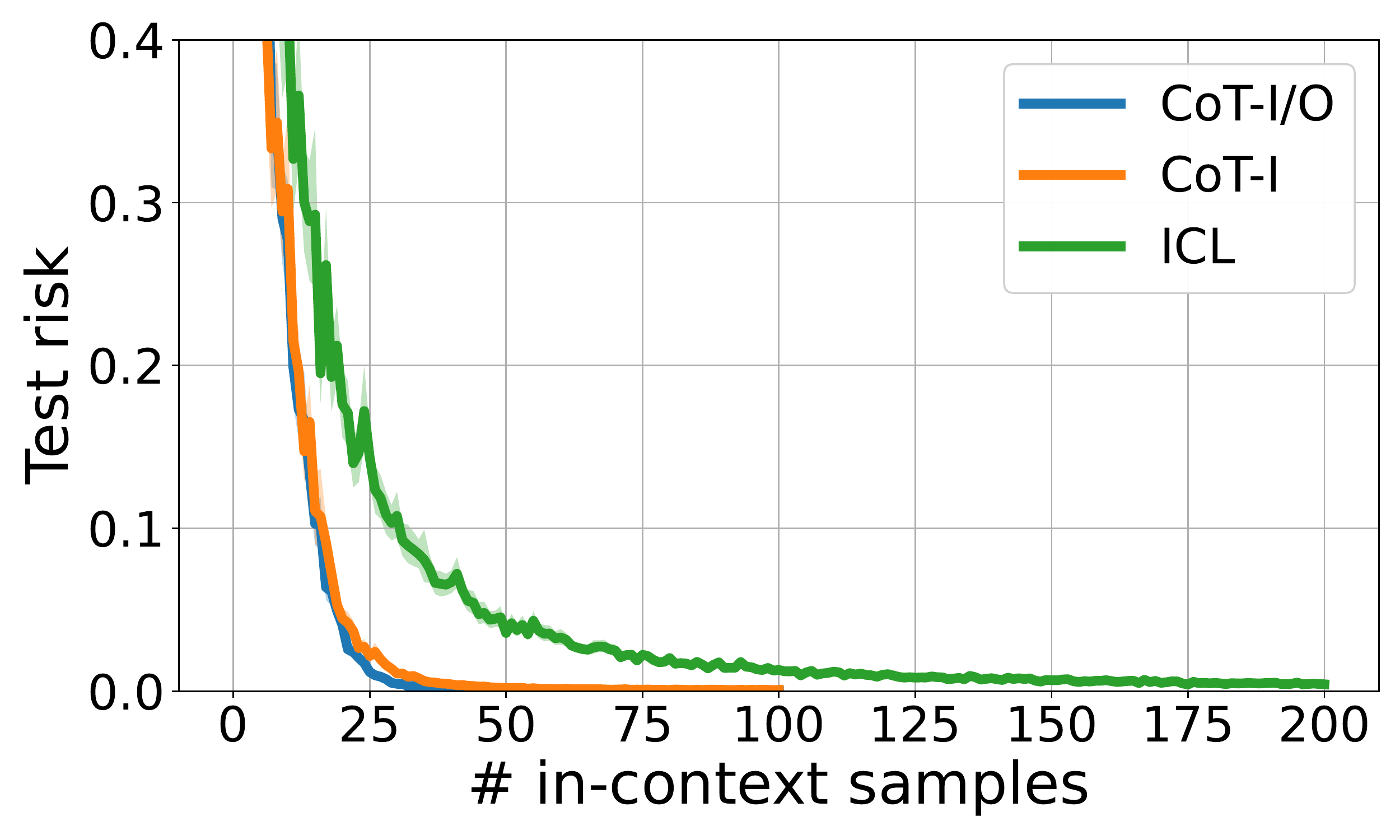}
        \label{fig:express stand}
   }
    \caption{We train ICL with more in-context examples. The main conclusion is that: For small GPT, ICL can indeed not approximate the neural net even with many examples (unlike CoT) whereas, for large GPT, ICL can do so (although much less efficient). This is in line with our theoretical intuitions on the expressivity benefits of CoT.}\label{fig:express}
\end{figure}
\subsection{Further Evidence of Model Expressivity}
There are two determinants of model learnability in in-context tasks: sample complexity and model expressivity. Sample complexity pertains to the number of samples needed to precisely solve a problem. However, when the transformer model is small, even with a sufficiently large number of samples, due to its lack of expressivity, ICL cannot achieve zero test risk. This contrasts with CoT, which decomposes complex tasks into simpler sub-tasks, thereby requiring smaller models for expression. Figure~\ref{fig:diff gpt} has illustrated the expressivity of different GPT-2 architectures, showing that the tiny GPT-2 model is too small to express even a single layer of 2-layer MLPs. Additionally, we have run more experiments, and the results are shown in Figure~\ref{fig:express}. Both Figures \ref{fig:express small} and \ref{fig:express stand} detail training models with MLP tasks of dimensions $d=10$ and $k=4$. In Fig.~\ref{fig:express small}, we use a small GPT-2 model, and the results show that the test risk stops decreasing even with more in-context examples. In Fig.~\ref{fig:express stand}, we train a larger model, and the results demonstrate that the standard GPT-2 is sufficient to express a 2-layer MLP with $d=10$ and $k=4$.
\begin{figure}[!t]
\centering
\subfigure[Test filtering on first layer prediction]{
    \includegraphics[height=.25\columnwidth]{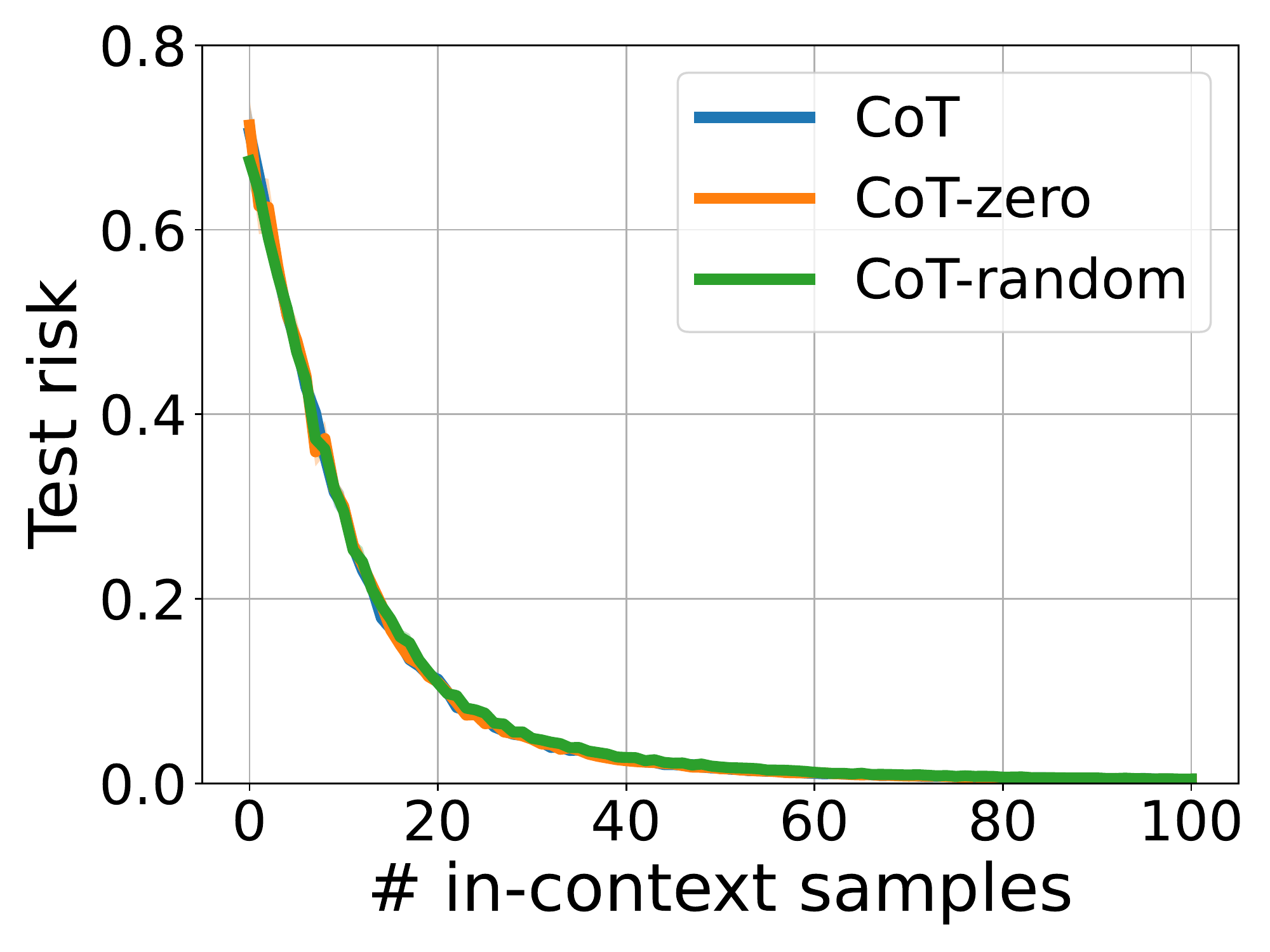}
    \label{fig:2nn filter}
    \hspace{10mm}
}
\subfigure[$\COT$: composed risk (same as Fig.~\ref{fig:cot_d10_vs_d20}) ]{
    \includegraphics[height=.25\columnwidth]{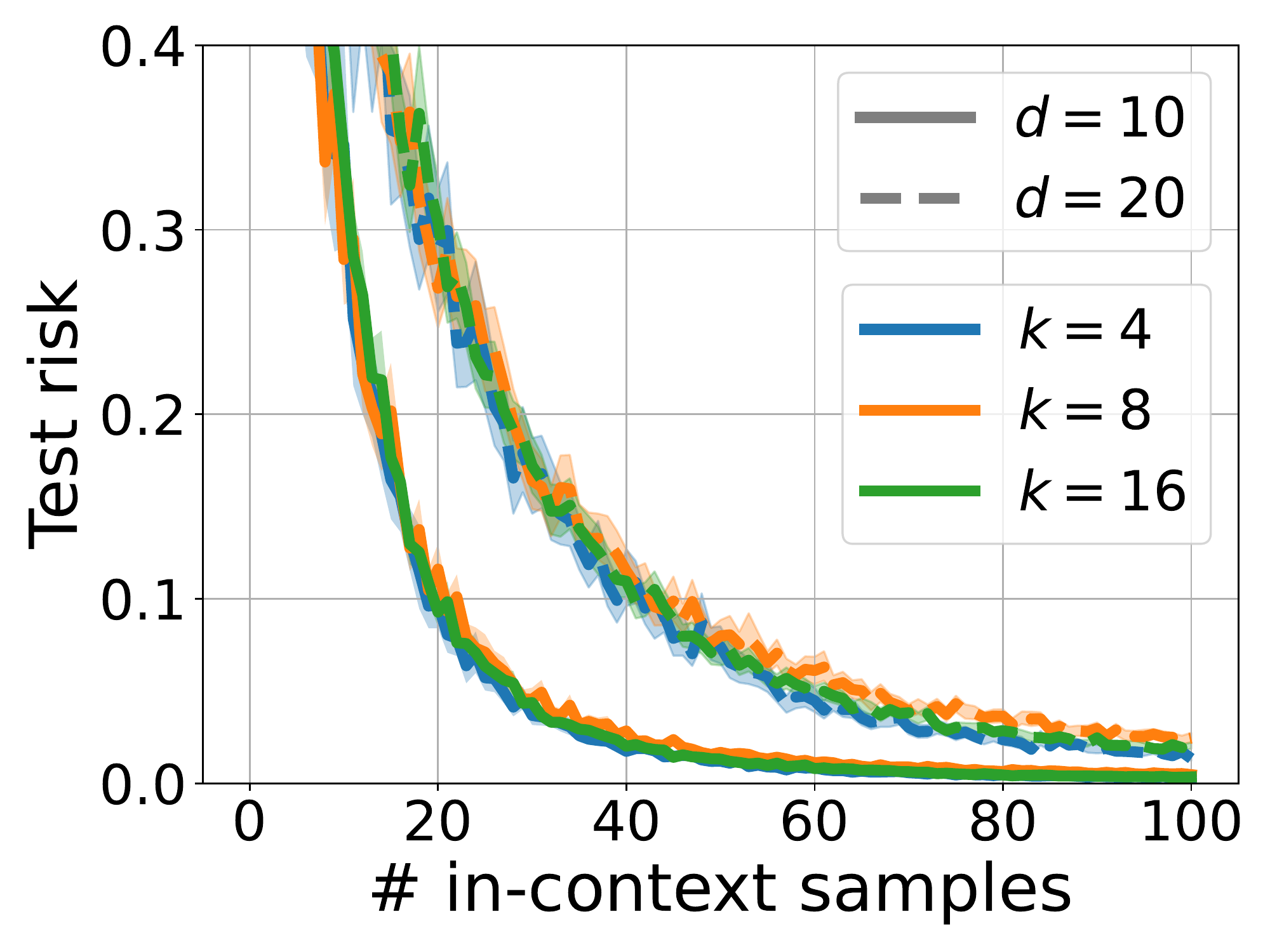}
    \label{fig:cot composed}
    \hspace{15mm}
}
\caption{Fig.~\ref{fig:2nn filter} presents a filtering evidence of 2-layer MLPs. Given a 2-layer MLP in-context example $(\x,\tb,y)$, CoT admits $(\x,\tb,y)$ as test sample; while test samples of CoT-zero and CoT-random are formed by $(\x,\tb,0)$ and $(\x,\tb,z)$ where $z\sim\Nc(0,d)$. Fig.~\ref{fig:cot composed} is directly cloned form Fig.~\ref{fig:cot_d10_vs_d20} with error bar for better comparison with results in Fig.~\ref{fig:cot vs icl}.}
\end{figure}

\begin{figure}[!t]
\centering
\subfigure[$\COT$: 1st layer prediction]{
    \includegraphics[height=.25\columnwidth]{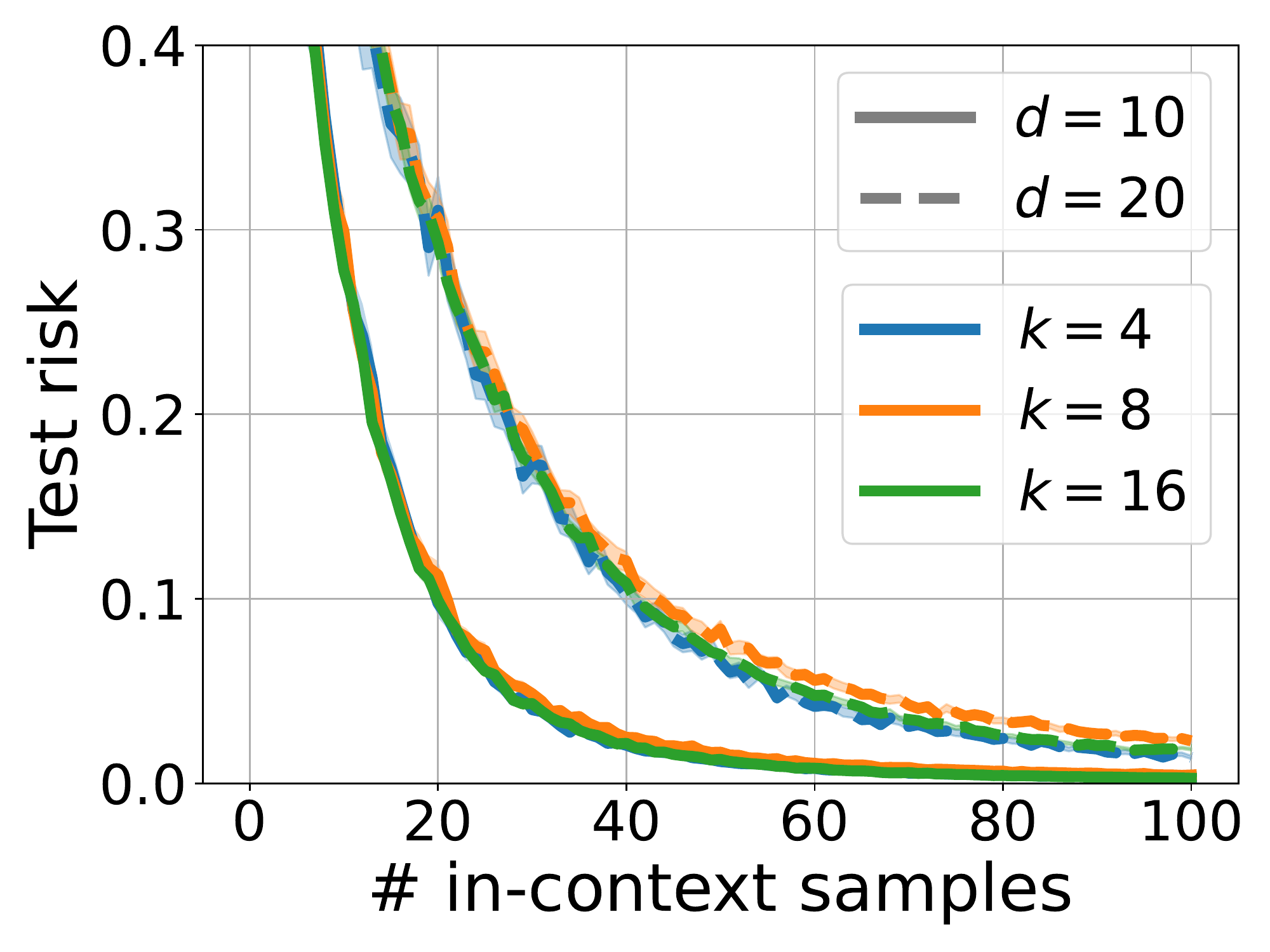}
    \label{fig:cot first}
    \hspace{10mm}
}
\subfigure[Train $\ICL$ with $(\x,\tb)$ pairs]{
    \includegraphics[height=.25\columnwidth]{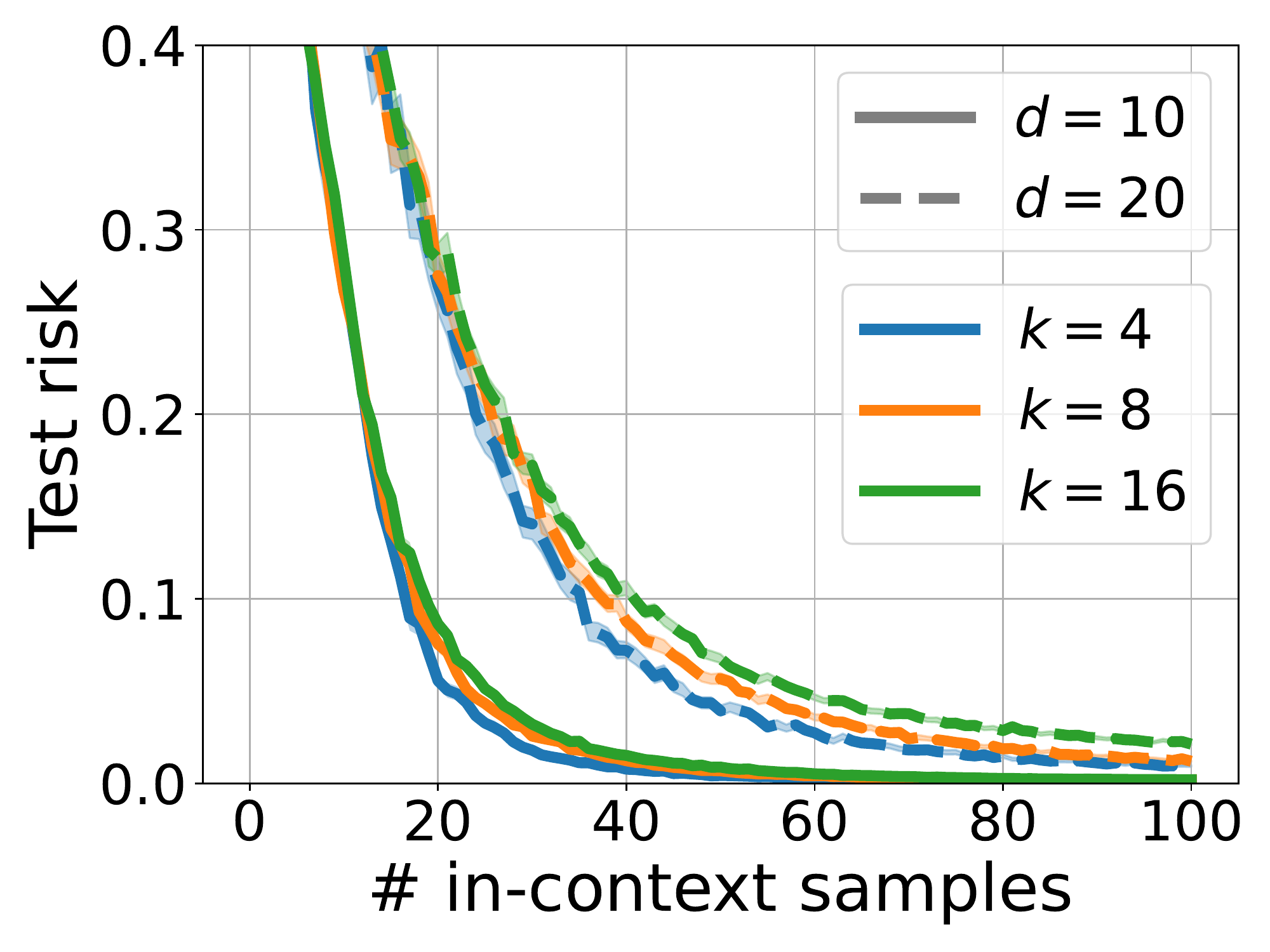}
    \label{fig:icl first}
    \hspace{10mm}
}
\subfigure[$\COT$: 2nd layer prediction]{
    \includegraphics[height=.25\columnwidth]{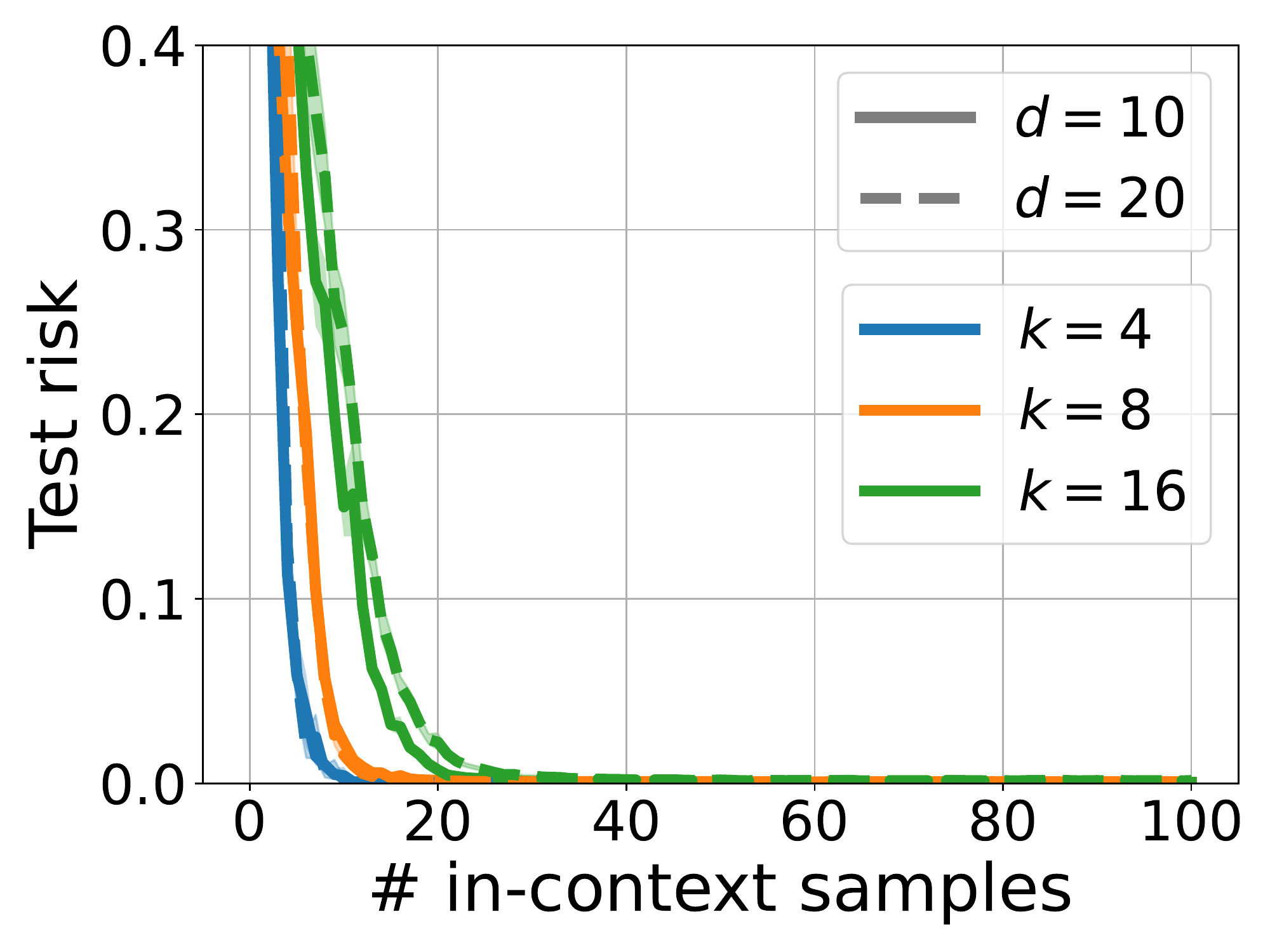}
    \label{fig:cot second}
    \hspace{10mm}
}
\subfigure[Train $\ICL$ with $(\tb,y)$ pairs]{
    \includegraphics[height=.25\columnwidth]{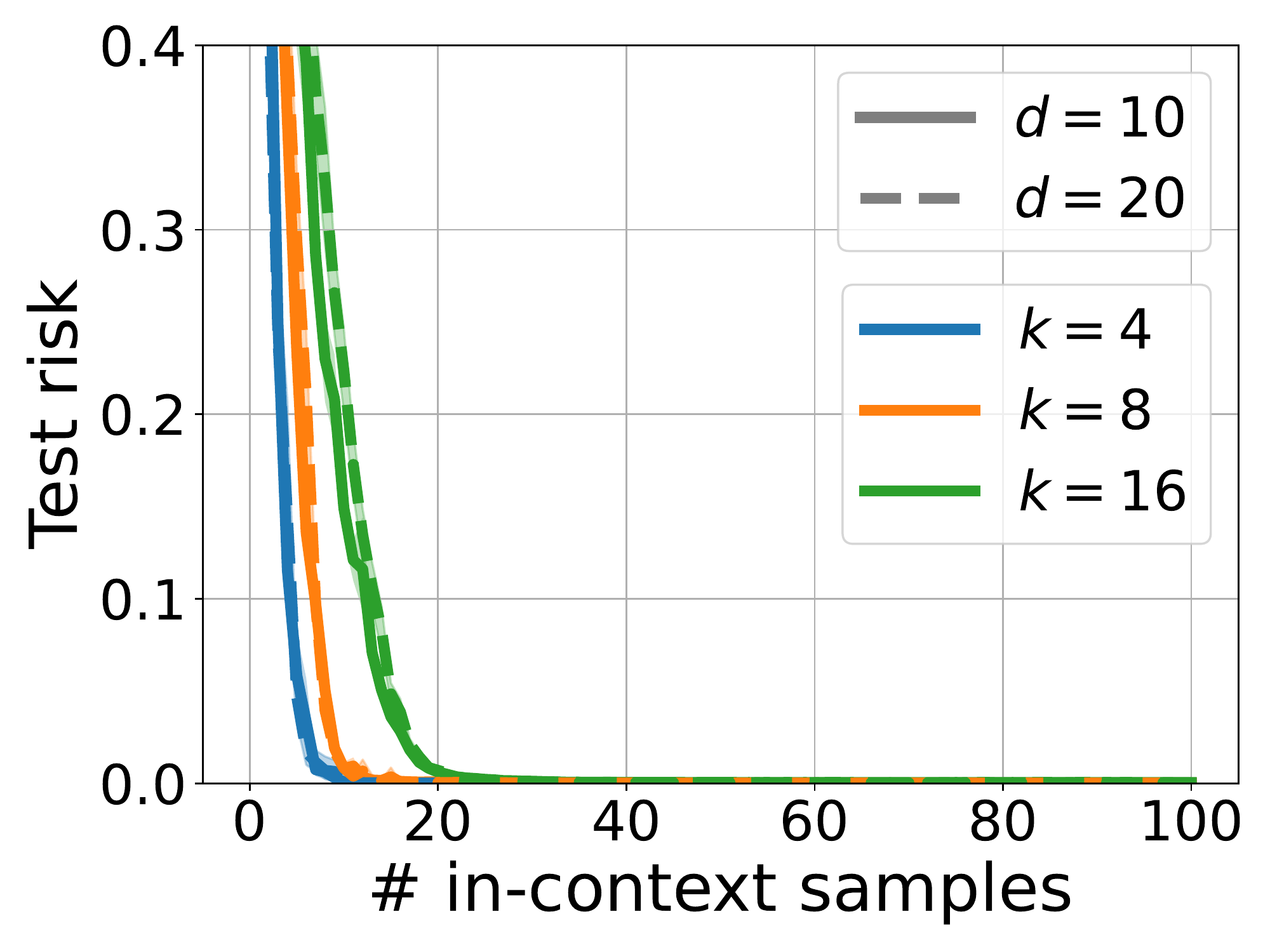}
    \label{fig:icl second}
    \hspace{10mm}
}
\caption{We compare the performance of filtered CoT and ICL. In Fig.~\ref{fig:cot first}\&\ref{fig:cot second}, we decouple the composed risk of predicting 2-layer MLPs into risks of individual layers (following Section~\ref{sec:emp}), which shows the filtered CoT results. In Fig.~\ref{fig:icl first}\&\ref{fig:icl second}, we train two additional models using ICL method taking $(\x,\tb)$ and $(\tb,y)$ as inputs.}
\label{fig:cot vs icl}
\end{figure}

\subsection{Filtering Evidence in 2-layer MLPs}\label{sec:2nn filter}
Section~\ref{sec:deep nn} has demonstrated the occurrence of filtering in the linear deep MLPs setting (black dotted curves in Fig.~\ref{fig:each layer 50k}). In this section, we present further empirical evidence based on the 2-layer MLPs setting discussed in Section~\ref{sec:2nn}.

Follow the same setting as Figure~\ref{fig:cot_d10_vs_d20} and choose $d=10$ and $k=8$. Assume we have a model pretrained using $\COT$ method. As described in Section~\ref{sec:2nn}, during training, the prompt consists of in-context samples in the form of $(\x,\tb,y)$ where $\tb=(\W\x)_+$ and $y=\vv^\top \tb$. To investigate filtering, we make three different predictions to evaluate the intermediate output, whose test prompts have in-context examples with the following forms:
\[
\text{CoT: }(\x,\tb,y),~~~~\text{CoT-zero: }(\x,\tb,0),~~~~\text{CoT-random: }(\x,\tb,z),
\]
where $z\sim\Nc(0,d)$. The results are displayed in Figure~\ref{fig:2nn filter} where blue, orange and green curves represent first layer prediction results using CoT, CoT-zero and CoT-random prompts, respectively. From this figure, we observe that the three curves are well aligned, indicating that when making a prediction for input $\x$, $\TF$ will attend only to $(\x,\tb)$ and ignore $y$. Therefore filling the positions of $y$ with any random values (or zero) will not change the performance of first layer prediction.  

\subsection{Comparison of Filtered CoT with ICL}
Until now, many experimental results have shown that $\COT$ provides benefits in terms of sample complexity and model expressivity compared to $\ICL$. As an interpretation, we state that CoT can be decoupled into two phases: \emph{Filtering} and \emph{ICL}, and theoretical results have been provided to prove this statement. As for the empirical evidence, Sections~\ref{sec:deep nn} and \ref{sec:2nn filter} precisely show that filtering does occur in practice. In this section, we provide additional experiments to demonstrate that, after filtering, CoT performs similarly to ICL.

For convenience and easier comparison, we repeat the same results as Fig.~\ref{fig:cot_d10_vs_d20} in Fig.~\ref{fig:cot composed}, where $d\in\{10,20\}$, $k\in\{4,8,16\}$, and train with a small GPT-2. We again recap the data setting for the 2-layer MLP, where the in-context examples of CoT prompt are in the form of $(\x,\tb,y)$. Given that filtering happens, we make first and second layer predictions following Section~\ref{sec:emp} and results are presented in Fig.~\ref{fig:cot first} and Fig.~\ref{fig:cot second}, respectively. These results show the performances of the filtered CoT prompts. Next, we need to compare the performance with separate ICL training. To achieve this goal, we train a small GPT-2 model using $\ICL$ method with prompt containing $(\x,\tb)$ pairs (first layer). The test results are shown in Fig.~\ref{fig:icl first}. Additionally, in Fig.~\ref{fig:icl second}, we train another small GPT-2 model using ICL but with prompts containing $(\tb,y)$ pairs (second layer). By comparing Fig.~\ref{fig:cot first} and \ref{fig:icl first}, as well as Fig.~\ref{fig:cot second} and \ref{fig:icl second}, we observe that after filtering, $\COT$ achieves similar performance as individually training a single-step problem through the ICL phase. 

\begin{figure}[!t]
\centering
\subfigure[Varying GPT-2 layers]{
    \includegraphics[height=.25\columnwidth]{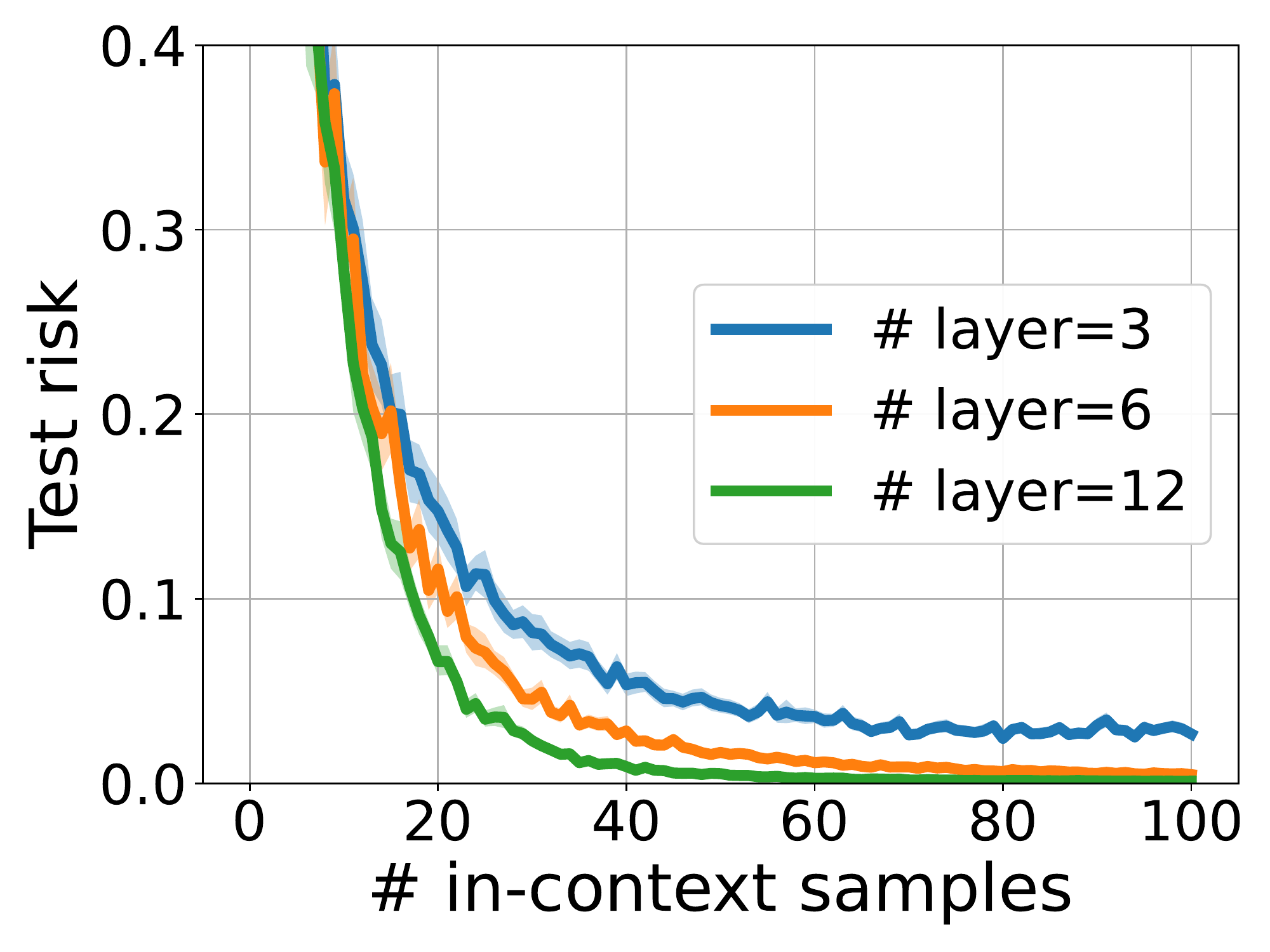}
    \label{fig:diff layer}
    \hspace{-1mm}
}
\subfigure[Varying GPT-2 heads]{
    \includegraphics[height=.25\columnwidth,trim={1.5cm 0 0 0},clip]{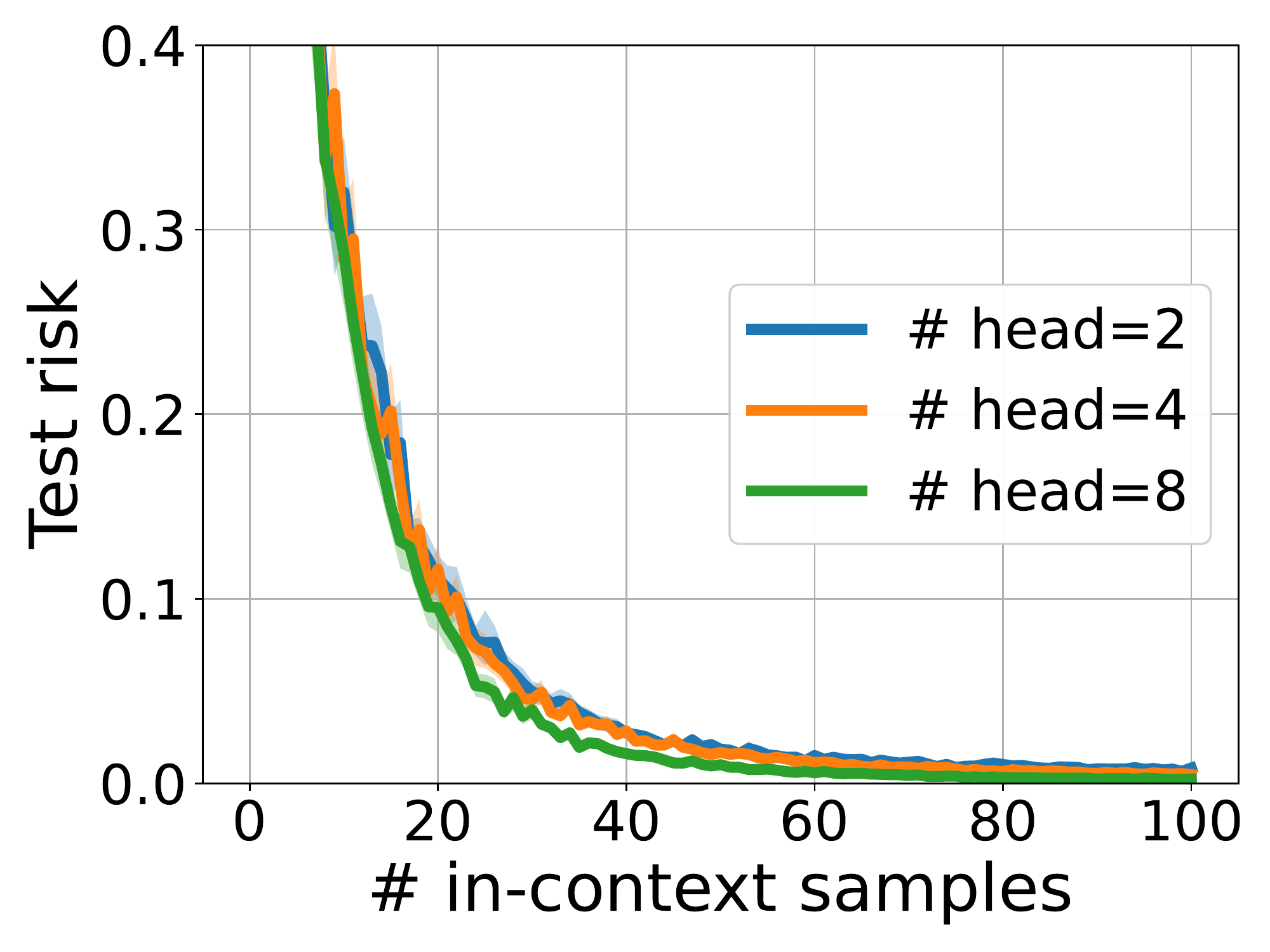}
    \label{fig:diff head}
    \hspace{-1mm}
}
\subfigure[Varying embedding dimensions]{
    \includegraphics[height=.25\columnwidth,trim={1.5cm 0 0 0},clip]{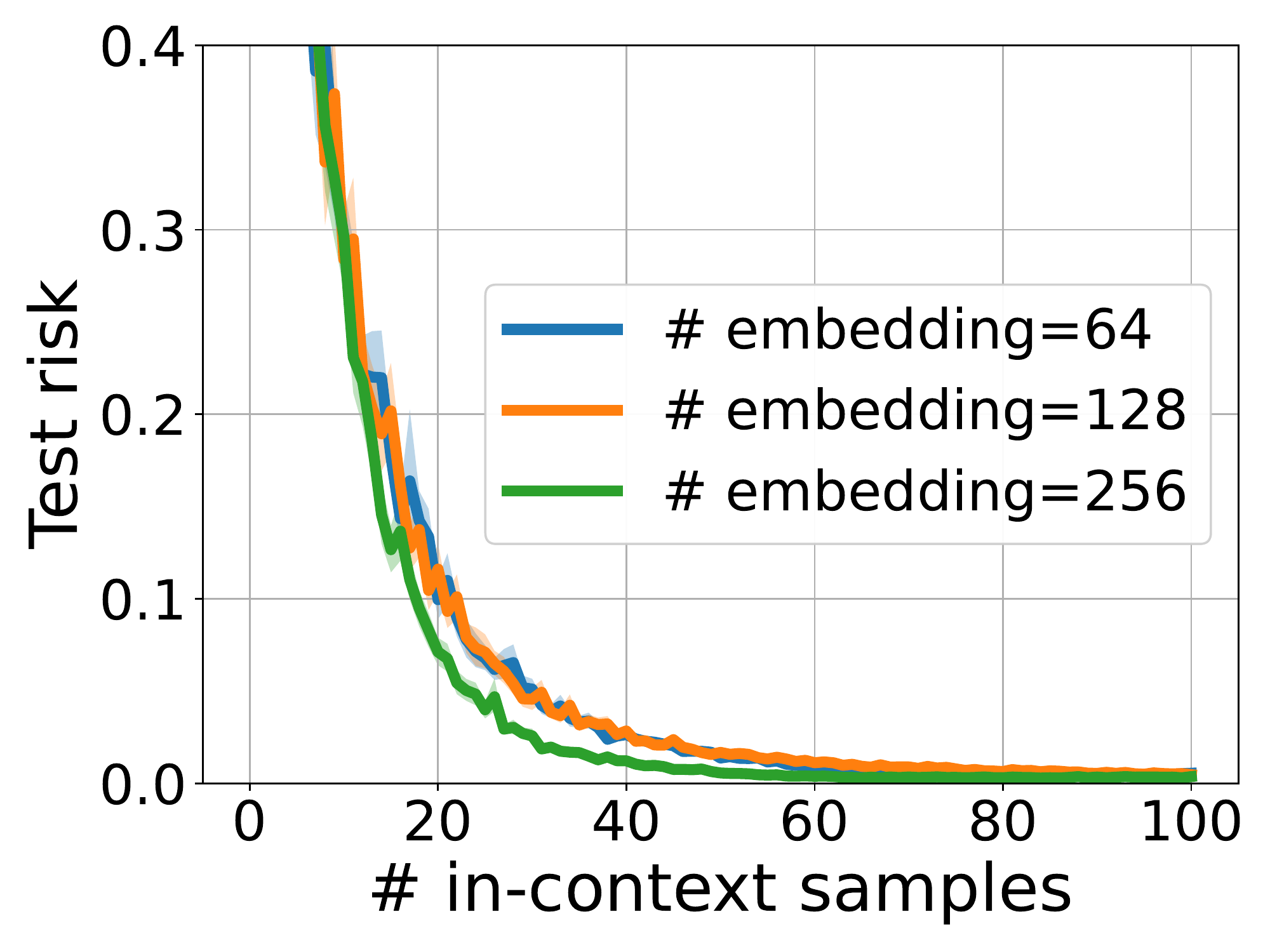}
    \label{fig:diff embed}
    \hspace{-1mm}
}
\caption{To further investigate how model architectures impact the prediction performance, we fix the number of heads and embedding dimension in Fig.~\ref{fig:diff layer} and change the layer number in $\{3,6,12\}$. Similarly for Fig.~\ref{fig:diff head}\&\ref{fig:diff embed} but instead, change number of heads (in $\{2,4,8\}$) and embedding dimensions (in $\{64,128,256\}$). }
\label{fig:diff gpt comp}
\end{figure}
\subsection{CoT across Different Sizes of GPT-2}
In Figure~\ref{fig:diff gpt}, we have demonstrated that larger models help in improving performance due to their ability of solving more complex function sets. However, since tiny, small and standard GPT-2 models scale the layer number, head number and embedding dimension simultaneously, it is difficult to determine which component has the greatest impact on performance. Therefore in this section, we  investigate how different components of transformer model affect the resulting performance by run $\COT$ on various GPT-2 architectures.

We maintain the same setting as in Section~\ref{sec:2nn}, fix $d=10$ and $k=8$, and consider a base GPT-2 model (small GPT-2) with $6$ attention layers, $4$ heads in each layer and $128$-dimensional embeddings. In Fig.~\ref{fig:diff layer}, we fix the number of heads at $4$ and the embedding dimension at $128$, while varying the number of layers in $\{3,6,12\}$. Similarly, we explore different models with different numbers of heads and embedding dimensions, and the results are respectively presented in Fig.~\ref{fig:diff head} and \ref{fig:diff embed}. Comparing them, we can observe the following: 1) once the problem is sufficiently solved, increasing the model size does not significantly improve the prediction performance (see Fig.~\ref{fig:diff head}\&\ref{fig:diff embed}); 2) the number of layers influences model expressivity, particularly for small GPT-2 architecture (see Fig.~\ref{fig:diff layer}).

\begin{figure}[!t]
\centering
\subfigure[1st layer: compare with GD solving ReLU]{
    \includegraphics[height=.25\columnwidth]{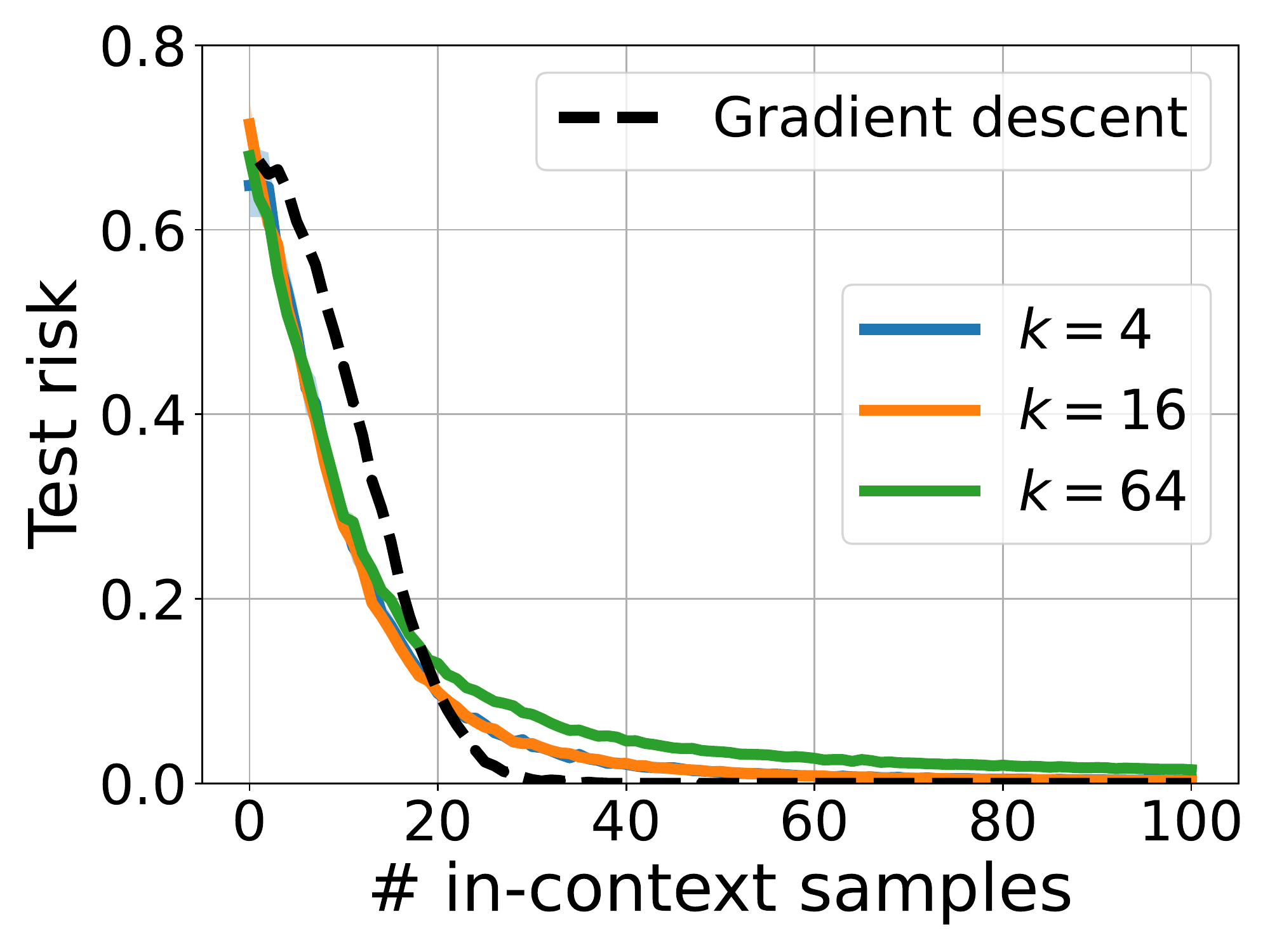}
    \label{fig:relu gd}
    \hspace{10mm}
}
\subfigure[2nd layer: compare with least squares]{
    \includegraphics[height=.25\columnwidth]{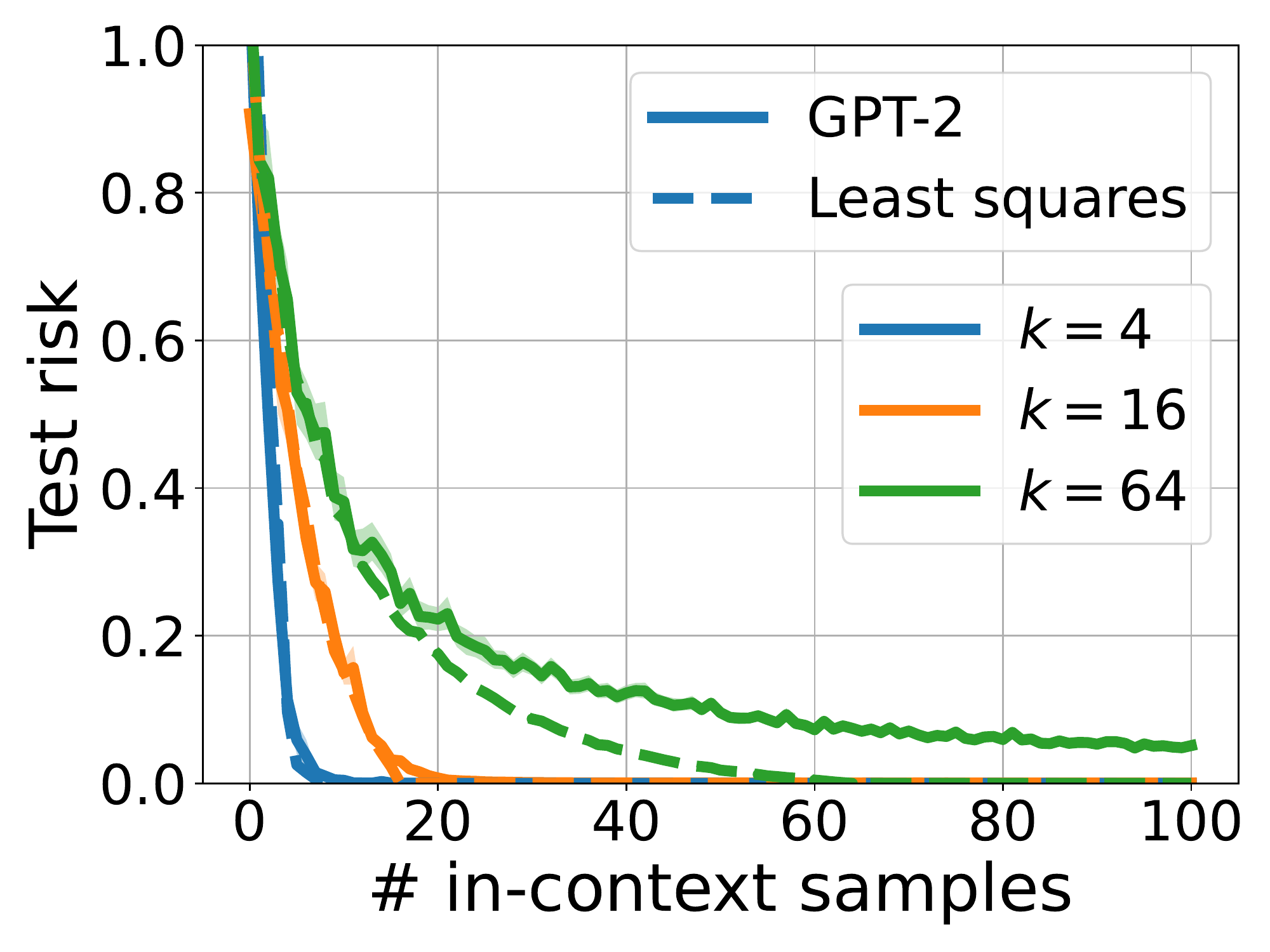}
    \label{fig:linear ls}
    \hspace{10mm}
}
\caption{Fig.~\ref{fig:relu gd}: compare transformer results (solid) with gradient descent optimizer (dashed) when solving first layer of 2-layer MLPs; Fig.~\ref{fig:linear ls}: compare transformer result (solid) with least squares optimizer (dashed) when solving the second layer of 2-layer MLPs.}
\label{fig:linear regression}
\end{figure}
\subsection{Comparison of Transformer Prediction and Linear Regression}
We also provide experimental findings to verify Assumption~\ref{assume oracle} in this section. Previous work \citep{giannou2023looped,akyurek2022learning} has theoretically proven that $\TF$ can perform similar to gradient descent, and empirical evidence from \citep{dai2022can, garg2022can, li2023transformers} suggests that $\TF$ can even be competitive with Bayes optimizer in certain scenario. To this end, we first repeat the same first/layer predictions from Figure~\ref{fig:largek} in Figure~\ref{fig:linear regression}, where $d=10$ and blue, orange and green solid curves represent the performances of $k=4,16,64$ using pretrained small GPT-2 models. We also display the evaluations of gradient descent/least square solutions in dashed curves. Specifically, in Fig.~\ref{fig:relu gd} , we solve problem 
\[
\hat\w_n=\arg\min_{\w}\frac{1}{n}\sum_{i=1}^n\|(\w^\top\x_i)_+-y_i\|^2~~\text{where}~~\x_i\sim\Nc(0,\Iden_d),~y_i=({\w^\st}^\top\x_i)_+
\]
for some $\w^\st\sim\Nc(0,2\Iden_d)$ and $n$ is the training sample size. Then the normalized test risks are computed by $\Lc(n)=\E_{\w^\st,\x}[\|(\hat\w_n^\top\x)_+-y\|^2]/d$, and we show point-to-point results for $n\in[N]$ in black dashed curve in Fig.~\ref{fig:relu gd}\footnote{To mitigate the bias introduced by ReLU activation, we subtract the mean value during prediction, \ie $\Lc(n)=\E_{\w^\st,\x}[\|(\hat\w_n^\top\x)_+-y-(\E_{\x}[(\hat\w_n^\top\x)_+-y])\|^2]/d$. }. As for the second layer, we solve least squares problems as follows
\[
\hat\vv_n=\Sb^\dagger\y~~\text{where}~~\Sb\in\R^{n\times k},~\Sb[i]=(\W^\st\x_i)_+,~ \y[i]={\vv^\st}^\top\Sb[i],~\x_i\in\Nc(0,\Iden_d)
\]
for some $\W^\st\in\R^{k\times d}\sim\Nc(0,2/k)$ and $\vv^\st\sim\Nc(0,\Iden_k)$. Here, $^\dagger$ represents the pseudo-inverse operator. Then we calculate the normalized test risk of least square solution (given $n$ training samples) as $\Lc(n)=\E_{\W^\st,\vv^\st,\x}[\|\hat\vv_n^\top\tb-y\|^2]/d$ where $\tb=(\W^\st\x)_+$ and $y={\vv^\st}^\top\tb$. The results are presented in Fig.~\ref{fig:linear ls} where blue, orange and green dashed curves correspond to solving the problem using different values of $k\in\{4,16,64\}$. In this figure, the curves for $k=4,16$ are aligned with GPT-2 risk curves, which indicates that $\TF$ can efficiently solve linear regression as a least squares optimizer. However, the curve for $k=64$ does not align, which can be attributed to the increased complexity of the function set with higher dimensionality ($k=64$). Learning such complex functions requires a larger $\TF$ model. 

\end{document}